\DeclareSymbolFont{rsfs}{U}{rsfs}{m}{n}
\DeclareSymbolFontAlphabet{\mathscrsfs}{rsfs}
\newtheoremstyle{myremark} % name
    {\topsep}                    % Space above
    {\topsep}                    % Space below
    {\rm}                        % Body font
    {}                           % Indent amount
    {\bf}                        % Theorem head font
    {.}                          % Punctuation after theorem head
    {.5em}                       % Space after theorem head
    {}  % Theorem head spec (can be left empty, meaning normal)
\newtheorem{claim}{Claim}[section]
\newtheorem{lemma}[claim]{Lemma}
\newtheorem{assumption}{Assumption}
\newtheorem{theorem}{Theorem}
\newtheorem{proposition}[claim]{Proposition}
\newtheorem{corollary}[claim]{Corollary}
\theoremstyle{myremark}
\newtheorem{remark}{Remark}[section]
\def\<{\langle}
\def\>{\rangle}
\def\snaive{\mbox{\tiny\rm naive}}
\def\a{\alpha}
\def\R{\mathbb{R}}
\def\Tr{\mathrm{Tr}}
\definecolor{bluegray}{rgb}{0.3, 0.5, 0.7}
\def\bh{{\boldsymbol h}}
\def\bg{{\boldsymbol g}}
\def\bq{{\boldsymbol q}}
\def\bx{{\boldsymbol x}}
\def\bu{{\boldsymbol u}}
\def\bv{{\boldsymbol v}}
\def\Proj{{\boldsymbol P}}
\def\bfzero{{\boldsymbol 0}}
\def\id{{\boldsymbol I}}
\def\bq{{\boldsymbol q}}
\def\bz{{\boldsymbol z}}
\def\bZ{{\boldsymbol Z}}
\def\by{{\boldsymbol y}}
\def\bx{{\boldsymbol x}}
\def\bX{{\boldsymbol X}}
\def\bH{{\boldsymbol H}}
\def\bK{{\boldsymbol K}}
\def\bOmega{{\boldsymbol \Omega}}
\def\beps{{\boldsymbol \eps}}
\def\bSigma{{\boldsymbol \Sigma}}
\def\btheta{{\boldsymbol \theta}}
\def\htheta{\hat{\theta}}
\def\hbtheta{\hat{\boldsymbol \theta}}
\def\hsigma{\hat{\sigma}}
\def\stest{\mbox{\tiny\rm test}}
\def\hbSigma{\hat{\boldsymbol \Sigma}}
\def\bDelta{{\boldsymbol\Delta}}
\def\E{{\mathbb E}}
\def\P{{\mathbb P}}
\def\integers{{\mathbb Z}}
\def\de{{\rm d}}
\def\Cov{{\rm Cov}}
\def\hY{\hat{Y}}
\def\hbY{\hat{\boldsymbol Y}}
\def\hR{\widehat{R}}
\def\hL{\widehat{L}}
\def\sT{{\sf T}}
\def\synth{\mbox{\sf \tiny su}}
\def\orig{\mbox{\sf \tiny or}}
\def\exc{\mbox{\sf \tiny ex}}
\def\Ball{{\sf B}}
\def\Unif{{\sf Unif}}
\def\normal{{\sf N}}
\def\eps{{\varepsilon}}
\def\hf{\hat{f}}
\def\cX{{\mathcal X}}
\def\cQ{{\mathcal Q}}
\def\cG{{\mathcal G}}
\def\br{{\boldsymbol r}}
\def\reals{\mathbb R}
\def\rhobar{{\overline{\rho}}}
\def\S{{\mathbb S}}
\def\spn{{\sf span}}
\def\cuR{\mathscrsfs{R}}
\def\cuL{\mathscrsfs{L}}
\def\hcuL{\widehat{\mathscrsfs{L}}}
\DeclareMathOperator*{\argmin}{argmin}
\title{Scaling laws for learning with real and surrogate data}
\author{Ayush Jain${}^*$\and  Andrea Montanari\thanks{Granica Computing Inc. --- \href{www.granica.ai}{granica.ai}}\ 
 \thanks{Department of Statistics and Department of Mathematics, Stanford University}\and 
Eren Sasoglu${}^*$}
\begin{document}
\maketitle

\begin{abstract}
Collecting large quantities of high-quality data can be  prohibitively expensive or impractical, and a bottleneck in machine learning. One may instead augment a small set of $n$ data points from the target distribution with data from more accessible sources, e.g. data collected under different circumstances or synthesized by generative models. We refer to such data  as `surrogate data'. We study a weighted empirical risk minimization (ERM) approach for integrating surrogate data into training. We analyze mathematically this method under several classical statistical models, and validate our findings empirically on datasets from different domains. Our main findings are: $(i)$ Integrating surrogate data can significantly reduce the test error on the original distribution. Surprisingly, this can happen \emph{even when the surrogate data is unrelated to the original ones}. We trace back this behavior to the classical Stein's paradox. $(ii)$ In order to reap the benefit of surrogate data, it is crucial to use optimally weighted ERM. $(iii)$ The test error of models trained on mixtures of real and surrogate data is approximately described by a scaling law. This scaling law can be used to predict the optimal weighting scheme, and to choose the amount of surrogate data to add.
\end{abstract}

\section{Introduction and overview}

\subsection{Motivation and formulation}

Consider a standard learning setting where we are given~$n$ i.i.d. points $\bz_i$ from a target distribution $\mathcal D$.
Given a family of parametric models governed by the parameters' vector $\btheta$, the goal is to find 
$\btheta$ that minimizes the expected test loss $R_{\stest} (\btheta)$ incurred by the model predictions, 
where expectation is taken over the distribution $\mathcal D$.
In many application domains, the available data $\bZ= (\bz_i)_{i\le n}$ from the target distribution, referred to as either \emph{real} or \emph{original} data, may be difficult or expensive to acquire.
One may then attempt to supplement these data with a different, cheaper source. Examples of such cheaper sources are $(i)$~publicly
available datasets; $(ii)$~datasets owned by the same research group or company
but acquired in different circumstances, e.g.\ in a different location; $(iii)$~synthetic data produced by a generative model.

We will denote the data points obtained from this source by $\bz^s_i$, and assume we have $m$ of them.
We assume the `surrogate' data $\bZ^s=(\bz_i^{s})_{i\le
m}$ to be i.i.d. samples with distribution $\mathcal D^s$. In general, we will not assume the distribution $\mathcal D^s$ of synthetic data to be close to the original data distribution $\mathcal D$. However we assume that these distributions are over the same domain. A number of questions arise:
 $(i)$~How should we use the surrogate data in training?
$(ii)$~How many surrogate samples should we add to the original data?
$(iii)$~Can we predict the improvement in test error achieved by 
adding surrogate samples to the training?

A natural approach would be to add the surrogate data to the original one
in the usual training procedure, and indeed many authors have explored this 
approach (see Section \ref{sec:Related}).
Namely, one attempts to minimize the overall empirical risk
$\hR_{n+m}^{\snaive}(\btheta)=\sum_{i=1}^n\ell(\btheta;\bz_i)+
    \sum_{i=1}^m\ell(\btheta;\bz^s_i)$, where $\ell(z, \theta)$ is a train loss function. 

However, a moment of reflection reveals that this approach has serious shortcomings. 
Consider a simple mean estimation problem, whereby $\bz_i\sim \normal(\btheta_*,\id_d)$, $\bz^s_i\sim \normal(\btheta^s_*,\id_d)$, $\ell(\btheta;\bz) =\|\btheta-\bz\|^2$, and $R_{\stest}({\btheta}) = \|\btheta-\btheta_*\|^2$.
A straightforward calculation yields that the test error of the empirical risk minimizer $\hbtheta_{n+m}^{\snaive}:= \arg\min \hR_{n+m}^{\snaive}(\btheta)$ is
\begin{align}
 R_{\stest}(\hbtheta_{n+m}^{\snaive}) 
= \left(\frac{m}{n+m}\right)^2 \|\btheta^s_*-\btheta_*\|^2+\frac{d}{n+m}\,\, .
\end{align}

As $m$ increases the variance (the second term) decreases, but the bias due to
the difference $\|\btheta^s_*-\btheta_*\|$ increases, and the error approaches
$\|\btheta^s_*-\btheta_*\|^2$, i.e. the model will be only as good as if
training only on surrogate data.

In order to overcome these limitations, we study a weighted
ERM approach, and will show that the weight plays a crucial role.
Namely, we consider the following regularized empirical risk:
\begin{align} 
 \hR_{n,m}(\btheta;\alpha)
    := \frac{1-\alpha}{n}\sum_{i=1}^n\ell(\btheta;\bz_i)+\frac{\alpha}{m}\sum_{i=1}^m\ell(\btheta;\bz^s_i)+\Omega(\btheta)\,,\label{eq:MixedObj} 
\end{align}
where   $\alpha\in[0,1]$ is the weight of the surrogate dataset and
$\Omega:\R^d\to\R_{\ge 0}$ is a regularizer, e.g. a ridge
$\Omega(\btheta)=\lambda\|\btheta\|^2_2$. We denote by 
\begin{align}
   \hbtheta_{n,m}(\alpha):=\arg\min_{\btheta} \hR_{n,m}(\btheta;\alpha)\label{eq:ThetaMN}
\end{align}
the corresponding empirical risk minimizer,
%, often omitting the argument $\alpha$ 
%to lighten notation. 
and by $R_{\stest} (\hbtheta_{n,m}(\alpha))$
the corresponding test error.

For supervised learning tasks, a sample $\bz$ is represented as $\bz= (y,\bx)$,
where  $\bx\in\R^d$ is covariate vector and $y\in\R$ is response variable and $\btheta$ parametrizes a family of models $f(\bx;\btheta)$ that predict the response $y$ given covariate vector $\bx$. 
We consider losses of the form $\ell(\btheta,\bz)=L(y,f(\bx;\btheta))$ and $R_{\stest}
(\btheta):= \E_{z\sim \mathcal D} L_{\stest}(y,f(\bx;\btheta))$ for some functions
$L$~and~$L_{\stest}$. We allow for the test loss $L_{\stest}$ to be different
from the train loss $L$, but we will omit the subscript `test' from the risk $R$ and the loss $L$ whenever clear
from the context.\looseness-1

\begin{figure}[h]
  \vskip 0.in
\begin{center}{\includegraphics[scale=.3]{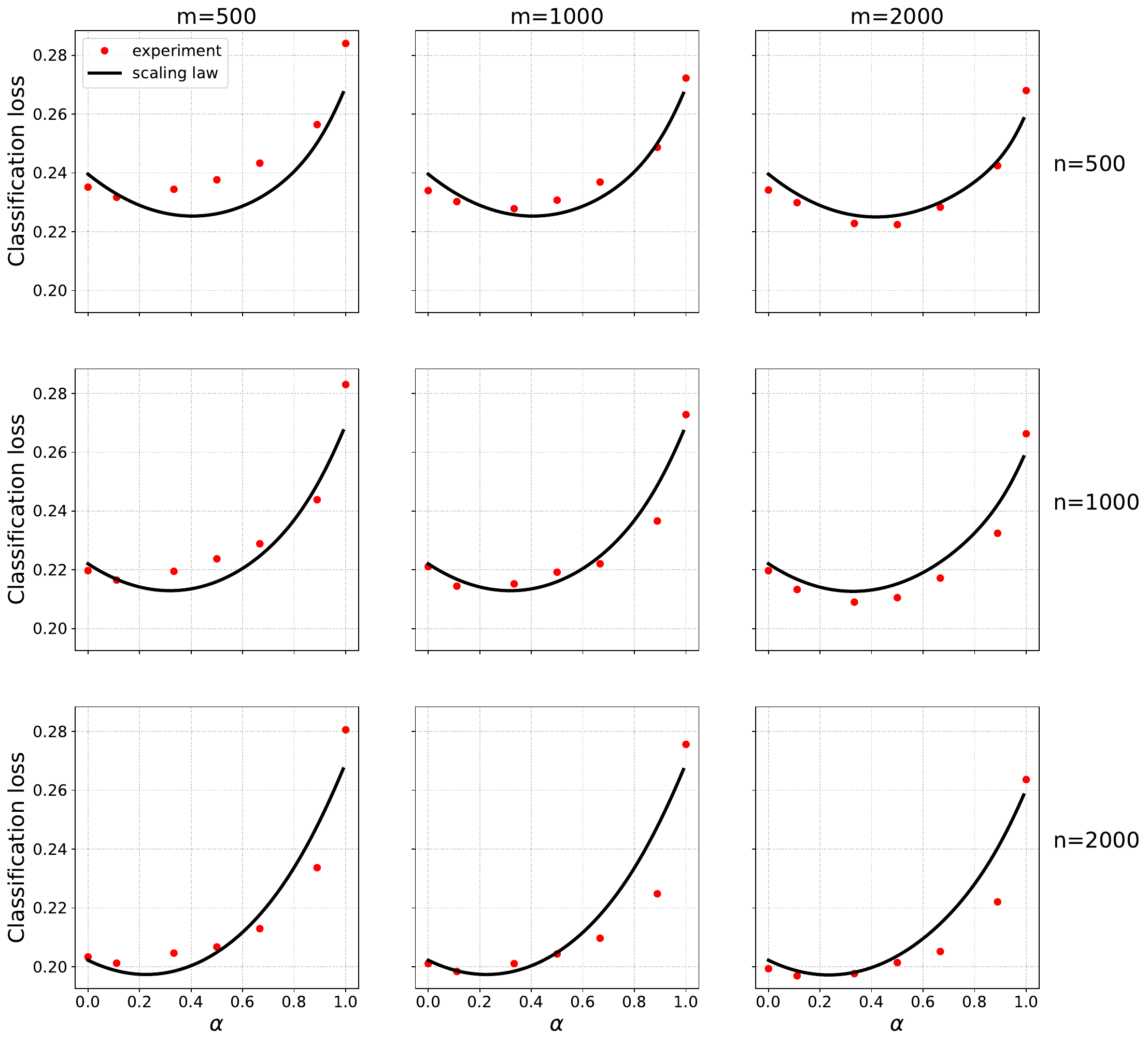}}
  \caption{IMDB and Rotten Tomatoes data and neural networks.  
  Test error
  when trained on  mixtures of original and surrogate
  data. 
  Black curves:  prediction from Eq.~\eqref{eq:FirstScaling}.}
  \label{fig:rottenAlphaNN}
  \end{center}
\end{figure}

\begin{figure}[h]
  \vskip 0.in
\begin{center}{\includegraphics[scale=.55]{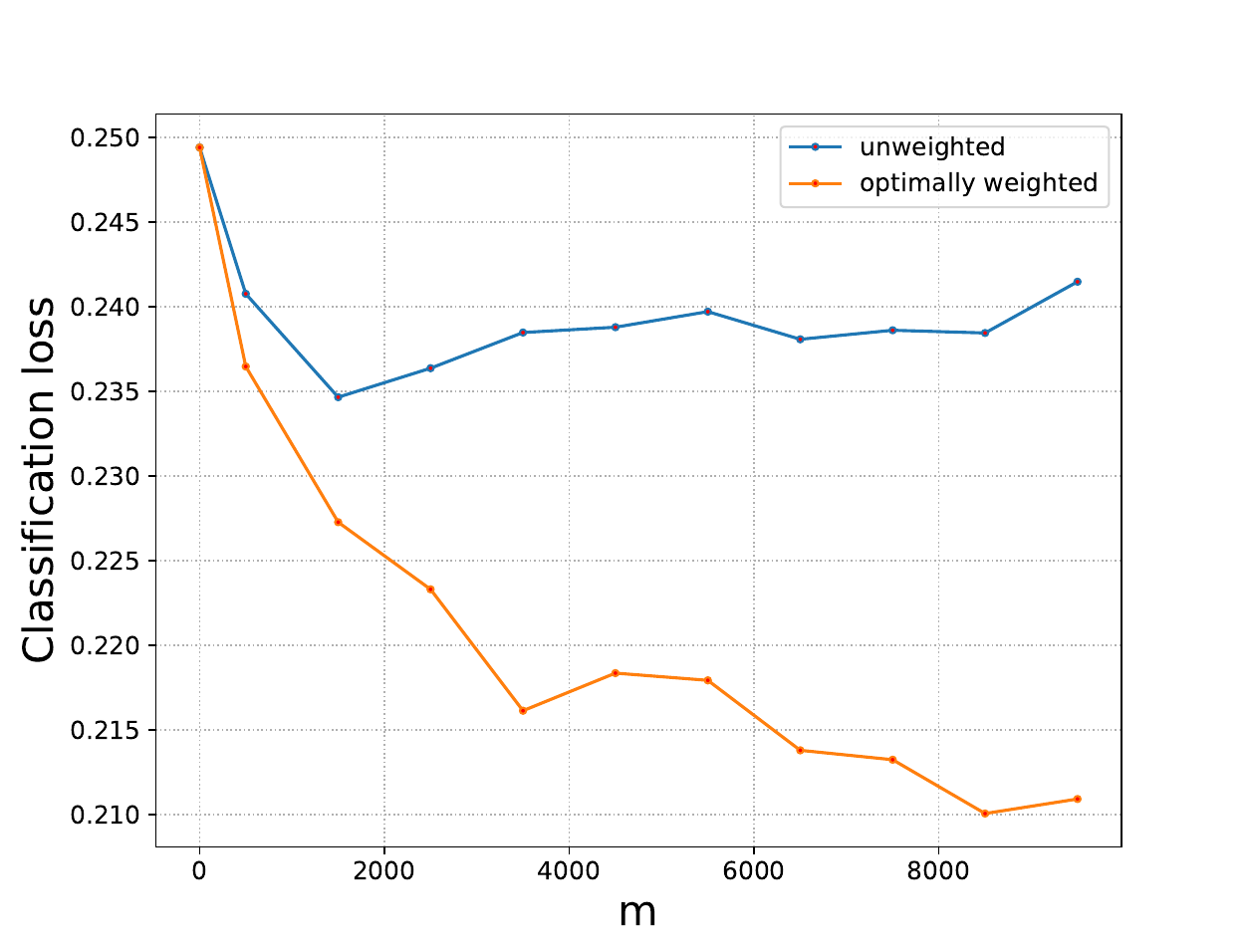}}
  \caption{Performance of unweighted vs weighted ERM approach for the setting in Figure~\ref{fig:rottenAlphaNN}}
  \label{fig:weightedvsun}
  \end{center}
\end{figure}

Figure \ref{fig:rottenAlphaNN} provides a preview of our results,
for a sentiment analysis task.
(Technical details provided in Section~\ref{sec:exp}
and Appendix~\ref{app:Sentiment}). Each frame corresponds to a different 
combination of $n$ and $m$, and we report the test error of our approach 
as a function of the weight parameter $\alpha$ (red circles).
Solid lines report the prediction of a scaling law that 
will be one of the main results presented below.

We observe that the weighted ERM approach systematically achieves better test
error than either training only on  original data ($\alpha\to 0$) or
on surrogate data ($\alpha\to1$). Further the error for optimal $\alpha$ is
always monotone decreasing both in $m$ and $n$, and the approach
outperforms the naive  unweighted approach. This is shown more clearly in Figure~\ref{fig:weightedvsun}, which also shows that the performance of unweighted ERM can degrade with more surrogate data.
% \textcolor{blue}{
% As shown in Figure~\ref{to be added} naively training a model on a mix of surrogate and original data with equal weighting of examples can lead to worse performance than training only on the original data.
% In contrast, the performance of weighted ERM with the optimal~$\alpha$ keeps improving as we add more surrogate data. 
% }
Also, while scaling laws
typically do not  capture the dependence on hyperparameters, the scaling law presented
below predicts the dependence on~$\alpha$ reasonably well. This is particularly
useful, because such a scaling law can be used to tune $\alpha$ optimally and to predict the amount of surrogate data needed.

\subsection{Summary of results}
\label{sec:summary}
We study the method outlined above both mathematically and via numerical experiments. 
Our mathematical results are developed in four different settings:
$(i)$~The Gaussian sequence model (Section \ref{sec:SeqModel});
$(ii)$~A non-parametric function estimation setting
(Section \ref{sec:Nonparam}); 
$(iii)$~Low-dimensional empirical-risk minimization (Section \ref{sec:LowDim});
$(iv)$~High dimensional
ridge  regression 
(Section \ref{sec:LinRegr});

We carry out experiments with the following data sources.
$(1)$~Simulated data from linear or Gaussian mixture models:
this allows us to explicitly control the distribution shift between the original
and surrogate datasets, as well as check our theoretical results in a controlled
setting. 
$(2)$~Real natural language processing (NLP) data for sentiment
analysis, with the role of original dataset played by IMDB reviews and the role of surrogate datasets played
respectively by Rotten Tomatoes review and Goodreads book reviews.  
$(3)$ Progression-free survival analysis using Lasso on TCGA PanCancer dataset with female patients data and male patients data as original and surrogate data, respectively.
$(4)$ Real image
classification data, with CIFAR-10 and CIFAR-100 datasets respectively playing
the role of original and surrogate data.
Our results support the
following conclusions:

\noindent {\bf Surrogate data improve test error.} Including surrogate data in training generally
  improves the test error on the original data, \emph{even if the surrogate data
    distribution is far from the original one}. In agreement with the
    interpretation of surrogate data as a regularizer 
    (see also Sec.~\ref{eq:SimpleModel}), the improvement is generally positive, although its size depend on the data distributions.

\noindent{\bf Tuning of~$\alpha$.} The above conclusion holds under the condition
  that~$\alpha$ can be tuned (nearly) optimally. For each of the theoretical
    settings already mentioned, we characterize this optimal value. We verify that
    nearly optimal~$\alpha$ can be effectively selected by minimizing the error on a
    validation split of the original data. An attractive alternative is to use the 
    scaling law we discuss next.

\noindent{\bf Scaling law.} We propose a scaling law that captures the behavior of the test error with $n,m, \alpha$: 
\begin{align}
&R(\hbtheta_{n,m}(\alpha))-R_*\approx \alpha^2 R^{\exc}_{\synth}(\infty)+\big[ \alpha^2 \big(R^{\exc}_{\synth}(m) 
-R^{\exc}_{\synth}(\infty)\big)^{1/\beta}+
(1-\alpha)^2 R^{\exc}_{\orig}(n)^{1/\beta}\big]^{\beta}\, .\label{eq:FirstScaling}
\end{align}
Here $R_*$ is the minimal (Bayes) error, 
$R^{\exc}_{\synth}(m) := R(\hbtheta_{0,m}(1))-R_*$ is the excess test error when training on the surrogate data (and testing 
on original), $R^{\exc}_{\orig}(n) := R(\hbtheta_{n,0}(0))-R_*$ is the excess test error\footnote{We assume here that $\lim_{n\to\infty}R(\hbtheta_{n,0}(0))=R_*$,
i.e. that we achieve Bayes risk with infinitely many original samples.
See Section \ref{sec:Discussion}.} when training on original data
(and testing on original), and $\beta$ is a scaling exponent as described in Section~\ref{sec:exp}.
The above scaling admits natural generalizations; see Section \ref{sec:Discussion}.

\noindent{\bf Practical uses of the scaling law.}
Given data $\{\bz_i\}_{i\le n}$ and a source of surrogate data,
we would like to predict how much the test error can be decreased by including any number $m$ of
surrogate samples in the mix. The scaling law \eqref{eq:FirstScaling} suggests a simple approach:
$(1)$~Learn models on purely original data to extract the behavior of test loss {$ R(\hbtheta_{n,0}(0)).$};
$(2)$~Learn models on purely surrogate data to extract the behavior of {$ R(\hbtheta_{0,m}(1)).$}
(A relatively small sample is sufficient for this step.)
%$R_{\synth}^{\exc}(m)$.
$(3)$~Use the minimum over $\alpha$ of Eq.~\eqref{eq:FirstScaling} to predict the test error at any given pair $n,m$.

We can further leverage the scaling law to achieve
the desired error by: $(I)$~Using the  scaling law to determine the number of surrogate samples needed to achieve the desired performance. 
$(II)$~Acquiring the surrogate samples and train the model using weighted ERM with optimal weighting predicted by scaling law.

\subsection{Related work}
\label{sec:Related}

The use of surrogate data to enhance training has 
attracted increasing research effort, also 
because of the recent progresses in generative modeling. 

This line of work has largely focused on the techniques to generate 
synthetic data that are well suited for training.
A wide variety of methods have been demonstrated to be useful
in generating data for computer vision tasks,
ranging from object classification to semantic segmentation \cite{ros2016synthia,johnson2017driving,abu2018augmented,tremblay2018training,chen2019learning,he2022synthetic,moreau2022lens,yen2022nerf}.
We refer to \cite{seib2020mixing} for a review. 
More recently, synthetic data have been used for training in natural language processing
\cite{he2022generate,meng2022generating}.

Scaling laws have been broadly successful in guiding the development of
large machine learning models  \cite{hestness2017deep,rosenfeld2019constructive,henighan2020scaling,kaplan2020scaling,tay2021scale,hernandez2021scaling,hoffmann2022training,
alabdulmohsin2022revisiting,muennighoff2023scaling}. We expect them to be similarly useful for integrating
heterogeneous data into training. 
The change in scaling laws when training on synthetic data was the subject of a
recent empirical study \cite{fan2023scaling}.  On the other hand, no systematic attempt was made at integrating real and synthetic data.

In data augmentation~\cite{krizhevsky2012imagenet,shorten2019survey}, the original samples are supplemented with transformed or noisy version of the same. In contrast,
we assume that surrogate data is obtained from a different source than the original one, and the surrogate samples are independent of the original samples.

%\textcolor{red}{remove?? Our results are also relevant to problems in transfer learning.
%However, that literature focuses on learning shared data representations rather than training a  model for the target distribution \cite{maurer2016benefit,tripuraneni2020theory}.}

The problem we consider was also studied within
`domain adaptation', a subarea of transfer learning~\cite{maurer2016benefit,tripuraneni2020theory}. 
Among others, \cite{ben2010theory} establishes bounds on 
the generalization error of weighted ERM via uniform convergence.
However these bounds do not reveal the 
full advantage achieved by this approach and are not precise enough to
justify the scaling laws that we derive.
Recent works in domain adaptation study the behavior of test error  error~\cite{hashimoto2021model,kang2024performance,ye2024data} and its scaling laws~\cite{hashimoto2021model}, but only consider vanilla ERM, a special of weighted ERM considered here.

%
%\vspace{0.2cm}
%
%We present the following contributions:
%\begin{enumerate}
%\item We will characterize mathematically the behavior of this scheme under several
%classical statistical settings, both in high and low-dimensional regimes.
%\item We carry out numerical experiments both on synthetic data and on
%real data for image classification, natural language processing, and genomics. 
%\am{???}
%\item We develop a scaling law to predict the test error of weighted
%ERM, and  guide the choice of optimal weighting.
%\end{enumerate}
%
%**********************************
%
\section{Regularization, Gaussian mean estimation, Stein paradox}
\label{eq:SimpleModel}

 The role of the parameter $\alpha$ can be understood 
by considering the limit $m\to\infty$:
\begin{align*}
{\hR_{n,\infty}(\btheta;\alpha) = \frac{1-\alpha}{n}\sum_{i=1}^n\ell(\btheta;\bz_i) + \alpha\, R^s(\btheta)+\Omega(\btheta)\,},
\end{align*}
and $R^s(\btheta)= \E_{\bz^s\sim\mathcal D^s}\ell(\btheta;\bz^s)$ is the population risk for surrogate
data.  This suggests to think of the surrogate data as an additional (highly non-trivial)
regularizer, with parameter~$\alpha$. This leads to a 
simple yet important insight: adding surrogate data to the original 
data is beneficial if~$\alpha$ is chosen optimally, and large
$m$ reduces statistical fluctuations in this regularizer. 
This contrasts with the  unweighted approach whose test error 
in general deteriorates for 
large $m$.

As a toy example, reconsider the mean estimation problem
mentioned in the introduction: $\bz_i\sim \normal(\btheta_*,\id_d)$
and $\bz^s_i\sim \normal(\btheta^s_*,\id_d)$, $\ell(\btheta;\bz) =\|\btheta-\bz\|^2$ and $R_{\stest}({\btheta}) = \|\btheta-\btheta_*\|^2$. 
We have $\hbtheta_{n,m}(\alpha) = (1-\alpha)\sum_{i\le n}{\bz_i}/n+
\alpha \sum_{i\le m}{\bz^s_i}/m$. In other words, the weighted ERM
shrinks the mean of the original data towards the mean of the surrogate data.
For a given $\alpha$,
the resulting test errors are 
\begin{align}
& {R(\hbtheta_{n,m}(\alpha))= \alpha^2 R^{\exc}_{\synth}(\infty)+
\left(\frac{\alpha^2}{m} +\frac{(1-\alpha)^2}{n}
\right)d\, , \;\;\;
R^{\exc}_{\synth}(\infty) = \|\btheta_*-\btheta_*^s\|^2\,} ,
\end{align}
and for the optimum value $\alpha_*=\arg\min_{\alpha}R(\hbtheta_{n,m}(\alpha))$, this yields
\begin{align}\label{eq:Elementary}
 R(\hbtheta_{n,m}(\alpha_*))= 
\left(\frac{R^{\exc}_{\synth}(\infty)+d/m}{R^{\exc}_{\synth}(\infty)+d/m +d/n}\right)\cdot \frac{d}{n}\, .
\end{align}
Note that $1/n$ is the error of training only on original 
data and the prefactor is always strictly smaller than one. Hence,
weighted ERM  always 
achieves better error than training only on original data,
\emph{regardless of the distance between original and surrogate data}, although the improvement is larger for small $R^{\exc}_{\synth}(\infty)$.
This might seem paradoxical at first. As mentioned above, we are shrinking towards an arbitrary point given by the empirical mean of the surrogate data: how can this help? 

In fact, this is a disguised version of the celebrated
Stein paradox \cite{efron1977stein,stein1981estimation}:
in estimating a Gaussian mean, a procedure that shrinks the empirical mean 
\emph{towards an arbitrary point} by a carefully chosen amount
outperforms the naive empirical mean. In our toy example, the naive
empirical mean corresponds to estimation purely based on the original data,
and we shrink it towards the mean of the surrogate  data.
Of course,  the 
improvement over empirical mean is only possible if $\alpha$ is chosen optimally. Equation \eqref{eq:Elementary} assumes
$\alpha=\alpha_*$ is chosen by an oracle that knows the value of $R^{\exc}_{\synth}(\infty)$. Stein's analysis implies that
in the Gaussian mean problem, $\alpha$ can be chosen empirically as long as the dimension of $\btheta$ is $d\ge 3$. In the settings we are interested in, $\alpha$ can be chosen via cross-validation.

%As a concrete example, consider the case of least squares regression, i.e.,
%$\ell(\btheta,(y,\bx)) = (y-\<\btheta,\bx\>)^2$ and $\Omega(\btheta)=0$.  
%We study this case in detail in Section~\ref{sec:LinRegr}, but consider for now the
%special case where $m=\infty$, and the surrogate data is pure noise.  Namely,
 %$y_i^s$ is independent of $\bx_i^s$, with $\E\{(y_i^s)^2\}=\sigma_s^2$. %Then
%we have
%
%\begin{align}
%\hR_{n,\infty}(\btheta;\alpha) &= \frac{1-\alpha}{n}\|\by-\bX\btheta\|^2_2+
%\alpha\, \|\btheta\|^2_{\bSigma^s}+\sigma^2_s\, ,\label{eq:SimpleModel}
%\end{align}
%
%where $\by\in\R^n$, $\bX\in \R^{n\times d}$ is the original data, and $\bSigma^s:= \E\{\bx^s(\bx^s)^{\sT}\}$ is the covariance matrix of the surrogate data. The equation above says that adding pure noise surrogate data is equivalent to adding a ridge penalty (more precisely, a weighted ridge, with weight determined by $\bSigma^s$).  Even such a poor approximation of the actual data, with proper tuning of $\alpha$, will generally improve the test error, especially in the noisy regime\footnote{But not if the original data is noiseless and $n>d$.}.

%
%**********************************************************
%
\section{Theoretical results}

%
%**********************
%
\subsection{Gaussian sequence model}
\label{sec:SeqModel}

The sequence model captures the behavior of many models in 
non-parametric statistics while being simpler to
analyze \cite{tsybakov2009introduction,gine2021mathematical}.
It is also known to approximate the behavior of
overparametrized linear regression \cite{cheng2022dimension}.
The unknown target is $\btheta_*\in \R^d$ (with potentially $d=\infty$),
and we observe 
\begin{align}
\by_i = \btheta_* + \sigma\, \bg_i, \; i\le n\, ,\;\;\;
\by^s_i = \btheta^s_* + \sigma_s\, \bg^s_i, \; i\le m\, ,
\end{align}
where~$\btheta_*^s$ is also unknown and $\bg_i,\bg_i^s\sim\normal(0,\id_d)$ are i.i.d.
We study the penalized estimator
\begin{align}
    \hbtheta_{n,m}(\alpha):= \arg\min_{\btheta}
    \Big\{\frac{(1-\alpha)}{n}\sum_{i=1}^n
    \|\by_i-\btheta\|^2_2+\frac{\alpha}{m}\sum_{i=1}^m\|\by_i^s-\btheta\|^2_2+
    \lambda\|\btheta\|_{\bOmega}^2\Big\}\, ,
\end{align}
where $\|\btheta\|_{\bOmega}^2=\<\btheta,\bOmega\btheta\>$ and
$\bOmega\succeq \bfzero$ is a regularization weight matrix.
We will be concerned with the expected risk
\begin{align}
 {R_{n,m}(\alpha,\lambda) = \E\Big\{\big\|\hbtheta_{n,m}(\alpha)-\btheta_*\big\|^2\Big\}\,} .
\end{align}
The proof of the next result is presented in Appendix
\ref{sec:GaussianSeq}. 
\begin{theorem}\label{thm:GaussianSeqModel}
  Let $\omega_1\le \omega_2\le \cdots$ be the ordered 
  eigenvalues of $\bOmega$, and denote by $\bv_i$ the corresponding eigenvectors.
  Further denote by $\btheta_{*,>k}$, $\btheta^s_{*,>k}$ the projections
  of $\btheta_*$, $\btheta_{*,s}$ onto ${\rm span}(\bv_i:i>k)$, and similarly for
  $\btheta_{*,\le k}$, $\btheta^s_{*,\le k}$.
  Assume that $\omega_k \asymp k^{\mu}$, $\mu>1/2$, $\|\btheta_{*,>k}\|^2\le C_{\theta}
  k^{-2\rho}$, $\rho\neq \mu$,
  and let $\Delta_k$ be such that (for all $k$):
$\Delta_k := \omega_{k}^{-1}|\<\btheta_{*,\le k}-\btheta^s_{*,\le k},\btheta_{*,\le k}\>_{\bOmega}|\le  C_0k^{-2(\mu\wedge \rho)}$. %\label{eq:Delta_k}
Then the following hold:
 \begin{enumerate}
  \item[$(a)$] There exists an explicit $\lambda_*(\alpha)$ such that,
  letting $\beta:= 2(\mu\wedge \rho)/(1+2(\mu\wedge \rho))$, 
\begin{align}
  R_{n,m}\big(\alpha,\lambda_*(\alpha)\big) \le   \alpha^2 R^{\exc}_{\synth}(\infty)+
C \cdot \left[(1-\alpha)^2\frac{\sigma^2}{n}+\alpha^2\frac{\sigma^2_s}{m}\right]^{\beta}\,.
\end{align}
\item[$(b)$] If $\mu>2\rho-1/2$, there exists  $C'>0$ and 
there exist
$\btheta_*,\btheta_*^s$ satisfying the assumptions in point $(a)$, such that,
\begin{align}
  \min_{\lambda }R_{n,m}\big(\alpha,\lambda\big) \ge   \alpha^2 R^{\exc}_{\synth}(\infty)+
C' \cdot \left[(1-\alpha)^2\frac{\sigma^2}{n}+\alpha^2\frac{\sigma^2_s}{m}\right]^{\beta}\,.
\end{align}
\end{enumerate}
\end{theorem}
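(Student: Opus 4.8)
The argument rests on the fact that the penalized least-squares problem has a closed-form solution. With $\bar\by=\frac1n\sum_{i\le n}\by_i$ and $\bar\by^s=\frac1m\sum_{i\le m}\by^s_i$ one has $\hbtheta_{n,m}(\alpha)=(\id+\lambda\bOmega)^{-1}[(1-\alpha)\bar\by+\alpha\bar\by^s]$; diagonalizing in the eigenbasis $(\bv_k)$ of $\bOmega$ and writing $\theta_{*,k}=\<\btheta_*,\bv_k\>$, $\delta_k=\<\btheta^s_*-\btheta_*,\bv_k\>$, $c_k=(1+\lambda\omega_k)^{-1}\in(0,1]$, the estimator decouples into $\hat\theta_k=c_k[(1-\alpha)\bar y_k+\alpha\bar y^s_k]$, with $\bar y_k\sim\gauss(\theta_{*,k},\sigma^2/n)$ and $\bar y^s_k\sim\gauss(\theta^s_{*,k},\sigma^2_s/m)$ independent. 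Taking expectations coordinatewise gives the exact identity
\begin{align*}
R_{n,m}(\alpha,\lambda)=\alpha^2\sum_k c_k^2\delta_k^2+\sum_k(1-c_k)^2\theta_{*,k}^2-2\alpha\sum_k(1-c_k)c_k\,\theta_{*,k}\delta_k+V\sum_k c_k^2,\qquad V:=\frac{(1-\alpha)^2\sigma^2}{n}+\frac{\alpha^2\sigma^2_s}{m}.
\end{align*}
Since $c_k\le1$ the first sum is at most $\|\btheta^s_*-\btheta_*\|^2\cdot\alpha^2=\alpha^2 R^{\exc}_{\synth}(\infty)$, which reproduces the ``offset'' in both claims; everything else has to be controlled at order $V^\beta$.

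For part $(a)$ I would bound the three remaining sums as powers of $\lambda$, using $1-c_k=\lambda\omega_k c_k$ and $\omega_k\asymp k^\mu$. The variance sum satisfies $\sum_k c_k^2\asymp\lambda^{-1/\mu}$ (the tail converges since $\mu>1/2$). Summing by parts against $\|\btheta_{*,>k}\|^2\le C_\theta k^{-2\rho}$ — splitting at $k\asymp\lambda^{-1/\mu}$, where $(1-c_k)^2$ goes from $\asymp\lambda^2 k^{2\mu}$ to $\asymp1$ — gives $\sum_k(1-c_k)^2\theta_{*,k}^2\lesssim\lambda^{2(\mu\wedge\rho)/\mu}$ (the hypothesis $\rho\neq\mu$ removes logarithmic terms). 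For the cross sum, writing $\sum_k(1-c_k)c_k\theta_{*,k}\delta_k=\lambda\sum_k(c_k^2-c_{k+1}^2)S_k$ with $S_k$ the partial sum defining $\Delta_k$, so that $|S_k|=\omega_k\Delta_k\le C_0\,\omega_k k^{-2(\mu\wedge\rho)}$, and using $|c_k^2-c_{k+1}^2|\lesssim\lambda k^{\mu-1}c_k^3$, one finds $\left|\sum_k(1-c_k)c_k\theta_{*,k}\delta_k\right|\lesssim\lambda^{2(\mu\wedge\rho)/\mu}$ (up to a $\log$ factor when $\rho>\mu$). Choosing $\lambda_*(\alpha)\asymp V^{\mu/(1+2(\mu\wedge\rho))}$ to balance these $O(\lambda^{2(\mu\wedge\rho)/\mu})$ bias-type terms against $V\sum_k c_k^2$ puts everything at order $V^\beta$ with $\beta=2(\mu\wedge\rho)/(1+2(\mu\wedge\rho))$, which is the claimed inequality.

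For part $(b)$ I would produce an explicit hard instance and lower-bound $\min_\lambda R_{n,m}(\alpha,\lambda)$ uniformly in $\lambda$. Observe that $\mu>2\rho-1/2$ together with $\mu>1/2$ forces $\rho<\mu$, hence $\mu\wedge\rho=\rho$ and $\beta=2\rho/(1+2\rho)$. Take $\btheta^s_*=\btheta_*$ (so $R^{\exc}_{\synth}(\infty)=0$ and $\Delta_k\equiv0$, and all hypotheses of $(a)$ hold) with a benchmark $\theta_{*,k}^2\asymp k^{-2\rho-1}$, so that $\|\btheta_{*,>k}\|^2\asymp k^{-2\rho}$ nearly saturates the class; then $R_{n,m}(\alpha,\lambda)=\sum_k(1-c_k)^2\theta_{*,k}^2+V\sum_k c_k^2$. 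For $\lambda$ bounded away from $0$ the first sum alone is bounded below by a positive constant, which beats $V^\beta$ once $V$ is small; for small $\lambda$ one has $\sum_k c_k^2\gtrsim\lambda^{-1/\mu}$ (the $\asymp\lambda^{-1/\mu}$ coordinates with $\lambda\omega_k\lesssim1$ each contribute $\gtrsim1$) and $\sum_k(1-c_k)^2\theta_{*,k}^2\gtrsim\lambda^{2\rho/\mu}$ (the tail $k\gtrsim\lambda^{-1/\mu}$, where $(1-c_k)^2\gtrsim1$). Minimizing $\lambda^{2\rho/\mu}+V\lambda^{-1/\mu}$ over $\lambda$ then yields the matching lower bound $C'V^\beta$; the role of $\mu>2\rho-1/2$ is to keep this construction inside the assumed class while making the bias- and variance-type contributions comparable.

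The step I expect to be the main obstacle is the cross term in $(a)$: it is not sign-definite, $\theta_{*,k}\delta_k$ need not decay fast enough by itself, and the crude bound $|2\alpha\sum_k(1-c_k)c_k\theta_{*,k}\delta_k|\le\alpha\sum_k(1-c_k)^2\theta_{*,k}^2+\alpha\sum_k c_k^2\delta_k^2$ would only give the offset $\alpha R^{\exc}_{\synth}(\infty)$, not $\alpha^2 R^{\exc}_{\synth}(\infty)$. It is precisely to control this term at the required order that the non-standard alignment hypothesis on $\Delta_k$ is imposed, and executing the associated $\lambda$-uniform summation by parts is the delicate part of the proof, with the matching construction in $(b)$ a secondary difficulty.
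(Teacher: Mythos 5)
For part $(a)$ your route is essentially the paper's: the same exact bias/variance decomposition in the eigenbasis, the same $\alpha^2\sum_k c_k^2\delta_k^2\le \alpha^2 R^{\exc}_{\synth}(\infty)$ step, the same effective dimension $k_1\asymp \lambda^{-1/\mu}$, and the same balancing giving $\beta=2(\mu\wedge\rho)/(1+2(\mu\wedge\rho))$. The only real difference is the cross term: you Abel-sum against the partial sums $S_k$ (i.e.\ use $\Delta_k$ at every $k$), whereas the paper splits at $k_1$, bounds the low-frequency cross term by $2\alpha\lambda|S_{k_1}|\le 2\alpha\Delta_{k_1}$ (using $\lambda\omega_{k_1}\le 1$), and handles the tail by Cauchy--Schwarz against the tail decay of $\btheta_*$ (and implicitly of $\btheta_*^s$). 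Your variant can be made to work, but as written it has two soft spots: the increment bound $|c_k^2-c_{k+1}^2|\lesssim \lambda k^{\mu-1}c_k^3$ does not follow from $\omega_k\asymp k^{\mu}$ alone (only monotonicity of $\omega_k$ is guaranteed; you should instead exploit $c_k^2-c_{k+1}^2\ge 0$ and telescoping), and the log factor you concede when $\rho>\mu$ is not allowed by the statement -- the paper's single-index bound at $k_1$ avoids both issues.

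Part $(b)$ is where you genuinely diverge, and in a way that misses the point of the claim. Taking $\btheta_*^s=\btheta_*$ makes $R^{\exc}_{\synth}(\infty)=0$ and $\Delta_k\equiv 0$, so your lower bound is just the classical ridge/sequence-model rate $C'\,s_{n,m}(\alpha)^{\beta}$ for a least-favorable $\btheta_*$; it satisfies the existential statement literally, but it says nothing about the offset term and never uses the hypothesis $\mu>2\rho-1/2$ (your claim that this hypothesis "keeps the construction in the class" is not what it is for). The paper instead constructs a genuinely shifted surrogate, $\theta^s_{*,k}=\theta_{*,k}+a_k k^{-\rho-1/2}$ with random signs $a_k\in\{-A,+A\}$ and $\theta_{*,k}\asymp k^{-\rho-1/2}$, so that $R^{\exc}_{\synth}(\infty)>0$; the condition relating $\mu$ and $\rho$ is exactly what makes the partial sums defining $\Delta_k$ enjoy square-root cancellation, so that this shifted pair still satisfies the assumptions of part $(a)$ with $\Delta_k\lesssim A k^{-2(\mu\wedge\rho)}$. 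The paper then lower-bounds the bias \emph{after subtracting} $\alpha^2\|\btheta_*-\btheta_*^s\|^2$ (choosing $A$ small so the cross term and the $(1-c_k^2)$ correction cannot eat the $k_1^{-2(\mu\wedge\rho)}$ bias), and lower-bounds the variance by $Ck_1$, before minimizing over $k_1$. This is the step your proposal omits, and it is the substantive content of $(b)$: tightness of the upper bound in $(a)$ with both the $\alpha^2 R^{\exc}_{\synth}(\infty)$ offset and the $s_{n,m}(\alpha)^{\beta}$ term present simultaneously, not merely tightness of the rate when the surrogate distribution coincides with the original one.
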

Note that since
the theorem also implies 
$R^{\exc}_{\synth}(m) -R^{\exc}_{\synth}(\infty)\asymp
(\sigma_s^2/m)^{\beta}$ and $R^{\exc}_{\orig}(m) \asymp
(\sigma^2/n)^{\beta}$, this result confirms the
scaling law \eqref{eq:FirstScaling}.
%
%**********************
%
\subsection{Non-parametric regression in Sobolev classes}
\label{sec:Nonparam}

In this section we consider the classic non-parametric regression model.  We
assume that $n=Q^d$ for some integer $Q\ge2$, and the original data
$(\bx_i,y_i)_{i\le n}$ are defined through
\begin{align}\label{eq:Nonparam}
    y_i = f_*(\bx_i) +\eps_i \, ,\;\;\; \eps_i\sim \normal(0,\sigma^2)\, ,
\end{align}
where~$\eps_i$ are independent of $\bx_i$ and of each other, and
$\{\bx_i\}_{i\le n}$ equally spaced grid points in the $d$-dimensional
unit-cube, i.e.\ $\cX_n=\{\bq /Q:\;\; \bq\in[Q]^d\}$.  Surrogate data have a similar
distribution, with $m=Q_s^d$ equally spaced points~$\bx^s_i$ in the unit cube, and
$y^s_i = f_{*,s}(\bx^s_i) +\eps^s_i$,
where $\eps_i^s\sim\normal(0,\sigma^2_s)$. We assume that $f_*$ has small
Sobolev norm, that is,
$$
\|f_*\|^2_{r,2}:= \int_{[0,1]^d}\big(|f_*(t)|^2+\|f_*^{(r)}(t)\|^{2}\big) 
\de t \le 1\, .
$$
Recall that $\|f\|^2_{r,2}$ is a special reproducing kernel Hilbert space (RKHS)
norm: we expect some of the considerations below to generalize to other RKHS norms.

Following our general methodology, we use the estimator
\begin{align}
 \hf_{n,m,\alpha} = \arg\min_f\bigg\{\frac{1-\alpha}{n}\sum_{i=1}^n\big(y_i-f(\bx_i)\big)^2+\frac{\alpha}{m}\sum_{i=1}^m\big(y^s_i-f(\bx^s_i)\big)^2+\lambda\|f\|_{p,2}^2
\bigg\}\, .\label{eq:NonparamEstimator}
\end{align}
We are interested in $R(f) = \E\{(f(\bx)-f_*(\bx))^2\}$, which is the excess squared loss for a test point $\bx\sim\Unif([0,1]^d)$.

In order to avoid technical burden we will carry out the analysis 
for a continuous model, the so-called white noise model, where we 
observe the function~$f$ at all points~$\bx\in[0,1]^d$, perturbed by $d$-dimensional 
white noise:
\begin{align}\label{eq:WhiteNoise}
  \de Y = f_*(\bx) \, \de\bx +\frac{\sigma}{\sqrt{n}}\de B(\bx)\, ,
\end{align}
and similarly for $Y^s$.  We use an estimator that naturally generalizes \eqref{eq:NonparamEstimator} to the continuous case.
Our results for the white noise model are as follows.
\begin{theorem}\label{thm:Nonparam}
Let $\beta = (2p\wedge 4r)/(d+ (2p\wedge 4r))$.  If $r>d/4$ and $\lambda=
  (\delta K_{n,m}\sigma^2)^{2r/(d+(2p\wedge 4r))}$, then 
  for every $\delta\in (0,1)$ there exists a constant $C=C(d,\delta)$ such that 
\begin{align}
  R(\hf_{n,m,\alpha} )
    \le (1+\delta)\alpha^2 R_{\synth}^{\exc}(\infty)+ C\left\{(1-\alpha)^2 \cdot \frac{\sigma^2}{n}
      +\alpha^2\cdot \frac{\sigma_s^2}{m}\right\}^{\beta}\label{eq:ThmNonparam}
\end{align}
with high probability, where $K_{n,m}:=(1-\alpha)^2/n+\alpha^2/m$.
\end{theorem}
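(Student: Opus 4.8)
The plan is to diagonalize everything in the trigonometric (Fourier) basis $\{e_k\}_{k\in\integers^d}$ of $L^2([0,1]^d)$, which simultaneously diagonalizes the white-noise observation model~\eqref{eq:WhiteNoise} and the Sobolev penalty. Projecting onto this basis turns the estimator into a Gaussian sequence model: with $\theta_{*,k}=\langle f_*,e_k\rangle$, $\theta^s_{*,k}=\langle f_{*,s},e_k\rangle$, one observes $y_k=\theta_{*,k}+(\sigma/\sqrt n)g_k$ and $y^s_k=\theta^s_{*,k}+(\sigma_s/\sqrt m)g^s_k$ with independent standard normals, the penalty is $\lambda\sum_k\omega_k\theta_k^2$ with eigenvalues $\omega_k\asymp(1+|k|^2)^{p}$, and $R(\hf_{n,m,\alpha})=\sum_k\E(\hat\theta_k-\theta_{*,k})^2$. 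As in the proof of Theorem~\ref{thm:GaussianSeqModel} the minimizer decouples across coordinates, $\hat\theta_k=\big((1-\alpha)y_k+\alpha y^s_k\big)/(1+\lambda\omega_k)$, so from here on the argument is a deterministic bias/variance analysis of this diagonal linear estimator plus one concentration inequality for the high-probability claim.

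Decompose $\hat\theta_k-\theta_{*,k}=A_k+B_k+\xi_k$, with the distribution-shift bias $A_k=\alpha(\theta^s_{*,k}-\theta_{*,k})/(1+\lambda\omega_k)$, the regularization bias $B_k=-\lambda\omega_k\theta_{*,k}/(1+\lambda\omega_k)$, and $\xi_k$ centered Gaussian of variance $V/(1+\lambda\omega_k)^2$, where $V:=(1-\alpha)^2\sigma^2/n+\alpha^2\sigma_s^2/m$. First split off the shift term with Young's inequality, $\|\hat\theta-\theta_*\|^2\le(1+\eta)\sum_kA_k^2+(1+\eta^{-1})\sum_k(B_k+\xi_k)^2$ for a small $\eta=\eta(\delta)$. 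A second Young's inequality, applied to the numerator $\theta^s_{*,k}-\theta_{*,k}=\big(\theta^s_{*,k}-(1+\lambda\omega_k)\theta_{*,k}\big)+\lambda\omega_k\theta_{*,k}$ and then summed over $k$, gives $\sum_kA_k^2\le(1+\eta)\alpha^2\sum_k\big(\theta^s_{*,k}/(1+\lambda\omega_k)-\theta_{*,k}\big)^2+O_\eta\!\big(\sum_kB_k^2\big)$, and the first sum is exactly $R^{\exc}_{\synth}(\infty)$, the risk of the ridge fit to $f_{*,s}$. Choosing $\eta$ with $(1+\eta)^2\le1+\delta$, the shift contribution is $(1+\delta)\alpha^2R^{\exc}_{\synth}(\infty)$ plus a term of the order of the regularization bias. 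The point of isolating $A_k$ at the very start is that it keeps the $\Theta(1)$ quantity $R^{\exc}_{\synth}(\infty)$ out of every subsequent cross term, so no $\sqrt V$-size error is generated.

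It remains to bound $\sum_k(B_k+\xi_k)^2=\sum_kB_k^2+2\sum_kB_k\xi_k+\sum_k\xi_k^2$. The regularization bias $\sum_kB_k^2=\sum_k\lambda^2\omega_k^2\theta_{*,k}^2/(1+\lambda\omega_k)^2$ is classical: factor $B_k^2$ as the order-$r$ Sobolev weight of $f_*$ at frequency $k$ times a ratio, bound $\sum_k$ of the weights by $\|f_*\|_{r,2}^2\le1$, and maximize the ratio over $k$; this bounds it by a power of $\lambda$, and since the maximizer can sit at the lowest frequency when the penalty order and the smoothness of $f_*$ are sufficiently mismatched, this is where the $\min$ in $\beta$ comes from. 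The noise term has mean $\E\sum_k\xi_k^2=V\sum_k(1+\lambda\omega_k)^{-2}$, comparable to $V$ times the effective dimension $\#\{k:\lambda\omega_k\le1\}$; convergence of this series (a lattice-point count) is precisely where the hypothesis $r>d/4$ is used. For the high-probability statement, $\sum_k\xi_k^2$ is a quadratic form in i.i.d.\ Gaussians with variance at most $2V\cdot\E\sum_k\xi_k^2$, so Hanson--Wright (equivalently, Gaussian concentration of a Lipschitz function) gives $\sum_k\xi_k^2\le(1+\eta)\E\sum_k\xi_k^2$ with high probability, while the cross term $\sum_kB_k\xi_k$ is a centered Gaussian of variance $\le V\sum_kB_k^2$, hence $O(\sqrt{V\sum_kB_k^2})$ with high probability, which AM--GM absorbs into $\sum_kB_k^2+V$. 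Collecting terms yields, with high probability, $R(\hf_{n,m,\alpha})\le(1+\delta)\alpha^2R^{\exc}_{\synth}(\infty)+C\big(\text{(ridge bias)}+V\cdot\text{(effective dimension)}\big)$; the stated choice of $\lambda$ balances the last two quantities (using $V\asymp K_{n,m}\sigma^2$), giving $C\{(1-\alpha)^2\sigma^2/n+\alpha^2\sigma_s^2/m\}^{\beta}$.

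The diagonalization, the lattice-point asymptotics, and the Gaussian concentration are routine. The step that needs genuine care is the two-stage Young's-inequality bookkeeping: one must verify that the factor in front of $\alpha^2R^{\exc}_{\synth}(\infty)$ is exactly $1+\delta$, and that the gap between $\sum_kA_k^2$ and $\alpha^2R^{\exc}_{\synth}(\infty)$ together with every discarded cross term is of the strictly smaller order $\{(1-\alpha)^2\sigma^2/n+\alpha^2\sigma_s^2/m\}^{\beta}$, rather than $O(\sqrt V)$ — which would dominate whenever $\beta>1/2$. A secondary point of care is checking that the ridge-bias maximization and the effective-dimension count produce matching powers of $\lambda$, so that the stated $\lambda$ actually equalizes them.
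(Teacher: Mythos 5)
Your proposal follows essentially the same route as the paper's proof in Appendix~\ref{app:Nonparam}: reduce to a Gaussian sequence model via the Fourier transform, observe that the estimator is diagonal, split the per-mode error into distribution-shift bias, regularization bias, and noise, isolate the shift term by a Young-type inequality at the cost of a factor $(1+\delta)$, bound the regularization bias by $C\lambda^{2\wedge(p/r)}$ using $\|f_*\|_{r,2}\le 1$, bound the noise contribution by $V_{n,m}$ times an effective dimension of order $\lambda^{-d/(2r)}$ (which is where $r>d/4$ enters), and balance the two terms with the prescribed $\lambda$. On one point you are actually more careful than the paper: the paper's risk formula \eqref{eq:NonparamRisk} already averages over the noise, so the ``with high probability'' clause is never explicitly discharged there, whereas your Hanson--Wright bound for the noise quadratic form and the Gaussian tail bound for the bias-noise cross term do that work.

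There is one slip to fix, and it sits exactly at the step you flagged as delicate. After your second Young step you assert that $\sum_k\big(\theta^s_{*,k}/(1+\lambda\omega_k)-\theta_{*,k}\big)^2$ ``is exactly $R^{\exc}_{\synth}(\infty)$.'' It is not: $R^{\exc}_{\synth}(\infty)=\|f_{*,s}-f_*\|_{L^2}^2=\sum_k(\theta^s_{*,k}-\theta_{*,k})^2$, since in the definition of $R^{\exc}_{\synth}(m)$ one has $\alpha=1$ and the prescribed $\lambda\to 0$ as $m\to\infty$; the ridge-at-$\lambda$ population risk you wrote can strictly exceed this quantity (for instance, at modes where $\theta^s_{*,k}=\theta_{*,k}\neq 0$ the shrinkage creates error where there was none). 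The repair is immediate: either apply Young once more to the decomposition $\theta^s_{*,k}/(1+\lambda\omega_k)-\theta_{*,k}=(\theta^s_{*,k}-\theta_{*,k})/(1+\lambda\omega_k)-\lambda\omega_k\theta_{*,k}/(1+\lambda\omega_k)$, which costs a third factor $(1+\eta)$ (choose $\eta$ with $(1+\eta)^3\le 1+\delta$) and one more $O_\eta\big(\sum_k B_k^2\big)$ term of the right order; or, as the paper does in \eqref{eq:RsimpleBound}, simply bound the shrinkage factor $1/(1+\lambda c_{p,\bq})^2\le 1$ on the shift term, so that a single Young application directly yields $(1+\delta)\,\alpha^2\|\btheta_{*,s}-\btheta_*\|_2^2$. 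With that correction your argument matches the paper's.
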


\begin{remark}
The white noise model \eqref{eq:WhiteNoise} is known to be equivalent to the 
original model \eqref{eq:Nonparam} (with deterministic equispaced designs)
in the sense of Le Cam, for $r>d/2$ \cite{brown1996asymptotic,reiss2008asymptotic}.
While suggestive, this equivalence does not allow us to formally deduce
results for the data \eqref{eq:Nonparam}, because it does not apply to the specific estimators
of interest here.
\end{remark}

%\begin{theorem}\label{thm:Nonparam-2}
%  {\color{purple} [ES Simpler version of Theorem~\ref{thm:Nonparam} with $p=r$]}
%
%%%%%

%Let $\beta = 2r/(d+2r)$ and $V_{n,m}=(1-\alpha)^2 \frac{\sigma^2}{n}
%  +\alpha^2\frac{\sigma_s^2}{m}$. There exists a constant $C=C(d)$ such that for
%  every $\delta\in (0,1)$

%
%\begin{align}
%  R(\hf_{\lambda,r,n,m,\alpha} )
%    \le (1+\delta)\alpha^2 R_{\synth}^{\exc}(\infty) + CV_{n,m}^{\beta}  
%      \label{eq:ThmNonparam}
%\end{align}
%  with high probability, provided that $r>d/4$ and $\lambda= (\delta/(1+\delta)V_{n,m}d/(2r))^\beta$.
%\end{theorem}
%
%\begin{theorem}\label{thm:Nonparam-3}
%  {\color{purple} [ES Simpler version of Theorem~\ref{thm:Nonparam-2} with $\beta=1/3]}
%
%%%%%
%
%Let $V_{n,m}=(1-\alpha)^2 \frac{\sigma^2}{n}
%  +\alpha^2\frac{\sigma_s^2}{m}$. There exists a constant $C=C(d)$ such that for
%  every $\delta\in (0,1)$
%
%
%\begin{align}
%  R(\hf_{\lambda,r,n,m,\alpha} )
%    \le (1+\delta)\alpha^2 R_{\synth}^{\exc}(\infty) + CV_{n,m}^{1/3}
%      \label{eq:ThmNonparam}
%\end{align}
%with high probability, provided that $r>d/4$ and $\lambda=
%  (2\delta/(1+\delta)V_{n,m})^{1/3}$.
%\end{theorem}
%
%

With the given choice of $\lambda$, $r$, the derivation of~\eqref{eq:ThmNonparam} also implies $R_{\synth}^{\exc}(m)- R_{\synth}^{\exc}(\infty)\ge C' (\sigma_s^2/m)^{\beta}$,
$R_{\orig}^{\exc}(n)\ge C' (\sigma/n)^{\beta}$ (for the least favorable $f$
\cite{tsybakov2009introduction}).
Hence~\eqref{eq:ThmNonparam} is consistent with the scaling law~\eqref{eq:FirstScaling}.\looseness-1

\subsection{Low-dimensional asymptotics}
\label{sec:LowDim}

We study the estimator of Eqs.~\eqref{eq:MixedObj},
\eqref{eq:ThetaMN} under the classical asymptotics $n,m\to\infty$ at $d$ fixed.
Since this type of analysis is more standard, we defer it to Appendix
\ref{app:LowDim}.  The main result of this analysis is that the scaling law \eqref{eq:FirstScaling}
holds in this setting, with the classical parametric exponent $\beta=1$,
for $\alpha \in [0,\alpha_{\max}]$ for a suitable $\alpha_{\max}\in (0,1)$.
Importantly, the interval $[0,\alpha_{\max}]$ includes the optimal choice of the weight $\alpha$. 

%
%********************************************************************************
%
\subsection{High-dimensional linear regression}
\label{sec:LinRegr}

In this section, we study ridge regression in the high-dimensional regime in which the 
number of samples is proportional to the number of parameters.
Denoting the original data by $(\by,\bX)$ (with $\by\in\R^n$ the vector of responses
and $\bX\in\R^{n\times d}$ the matrix of covariates), and the surrogate data
by  $(\by^s,\bX^s)$ (with $\by^s\in\R^m$ 
and $\bX^s\in\R^{m\times d}$), we minimize the
regularized empirical risk
\begin{align}
\hR_{n,m}(\btheta;\alpha) &= \frac{1-\alpha}{2n}\|\by-\bX\btheta\|^2_2+
\frac{\alpha}{2m}\|\by^s-\bX^s\btheta\|^2_2
+\frac{\lambda}{2}
\, \|\btheta\|^2_{2}\, ,\label{eq:regridgereg}
\end{align}
 We assume a simple distribution, whereby the rows of $\bX$, $\bX^s$
 (denoted by $\bx_i$, $\bx^s_i$) are standard normal vectors and
\begin{align}
\by = \bX\btheta_*+\beps\, ,\;\;\;\;\;\;\by^s = \bX^s\btheta_{*}^s+\beps^s\, .
\end{align}
for $\beps\sim\normal(\bfzero,\sigma^2\id_n)$, $\beps^s\sim\normal(\bfzero,\sigma_s^2\id_m)$.
Note that the two data distributions differ in the true coefficient vectors $\btheta_*$ versus  $\btheta_*^s$
as well as in the noise variance.  We will denote by $\hbtheta_{n,m}(\alpha)$ the
ridge estimator,
$\hbtheta_{n,m}(\alpha) = \arg\min_{\btheta\in\reals^d} \hR_{n,m}(\btheta;\alpha)$.
%
%For $\lambda=0+$, $\hbtheta_{n,m}(\alpha)$ is defined as the minimum $\ell_2$-norm minimizer.

The excess test error (for square loss) is given by
$R(\hbtheta) := \E\big\{\big(\<\bx,\btheta_*\>-\<\bx,\hbtheta\>\big)^2\big\} = \|\hbtheta-\btheta_*\|^2$.
The next result characterizes this error in the proportional asymptotics.

\begin{theorem}\label{propo:HiDimLinRegr}
Consider the ridge regression estimator $\hbtheta_{n,m}(\alpha)$. Let $r:= \|\btheta_*\|_2 $, $r_s:= \|\btheta_*^s\|_2$ and 
$\gamma:= \cos^{-1}(\<\btheta_*,\btheta_*^s\>/ (\|\btheta_*\|_2\|\btheta_*^s\|_2))$.
Assume $n,m,d\to\infty$ such that $n/d\to\delta$, $m/d\to\delta_s$, with $\delta+\delta_s>1$\footnote{The same  proof, with some additional
technical work, yields a characterization for 
$\delta+\delta_s\le 1$ as well. We omit it here for brevity.}.
For $\cuR(.)$ defined in Appendix~\ref{app:aux}, let
\[
\xi^*(\alpha),\xi^*_{\perp}(\alpha),\omega^* (\alpha)=\argmin_{\xi,\xi_{\perp}\ge 0,\omega\ge 0} \cuR(\xi,\xi_{\perp},\omega; \alpha),
\]
be the unique minimizer. Then for any $\eps,\eps_0>0$, there exist $c>0$ such that, for all 
$n$
\begin{align*}
 \P\Big(\sup_{\alpha\in [\eps_0,1-\eps_0]}\big|
R(\hbtheta_{n,m}(\alpha)) - \cuR_{\stest}(\alpha) \big|\le  \eps \Big)
\ge 1-2\, e^{-cn}\,,
\end{align*}
where $
 \cuR_{\stest}(\alpha)   := (\xi^*(\alpha)-r)^2+(\xi_{\perp}^*(\alpha))^2+
(\omega^*(\alpha))^2.$
Further, we can take  $\eps_0=0$ if $\delta,\delta_s>1$. 
\end{theorem}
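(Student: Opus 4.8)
The plan is to prove this via a Gaussian comparison argument, specifically the Convex Gaussian Min-Max Theorem (CGMT), which is the standard tool for characterizing ridge regression in the proportional regime. The first step is to write the optimization problem \eqref{eq:regridgereg} in a form amenable to CGMT. Introducing the residual variables $\bu = \by - \bX\btheta$ and $\bu^s = \by^s - \bX^s\btheta$, and substituting $\by = \bX\btheta_* + \beps$, $\by^s = \bX^s\btheta_*^s + \beps^s$, the objective becomes a function of $\bX(\btheta_*-\btheta)$ and $\bX^s(\btheta_*^s-\btheta)$. Since $\bX$ and $\bX^s$ are independent standard Gaussian matrices, I would decompose $\btheta - \btheta_*$ (and the difference with $\btheta_*^s$) into components along $\btheta_*$, along the part of $\btheta_*^s$ orthogonal to $\btheta_*$, and in the orthogonal complement of $\spn(\btheta_*,\btheta_*^s)$. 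This is exactly where the three scalar order parameters $\xi$ (component along $\btheta_*$), $\xi_\perp$ (component in the $\btheta_*^s$ direction orthogonal to $\btheta_*$), and $\omega$ (norm of the orthogonal-complement part) come from; the geometry is encoded through $r, r_s, \gamma$.

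Next I would apply CGMT to each of the two quadratic-in-$\bX$ terms. Because the problem has \emph{two} independent Gaussian matrices, I would apply the min-max theorem twice (or use a product-space version), introducing auxiliary Gaussian vectors $\bg, \bg^s$ and $\bh, \bh^s$ to replace the bilinear forms. After the standard scalarization — optimizing out the directions of the auxiliary vectors and the residual vectors, which only depend on the problem through the norms — the high-dimensional min-max problem concentrates around a deterministic scalar optimization: minimize over $(\xi,\xi_\perp,\omega)$ a function involving the "effective noise" terms $\tau^2 = (\xi-r)^2 + \xi_\perp^2 + \omega^2$ and $\tau_s^2 = (\xi - r_s\cos\gamma)^2 + (\xi_\perp - r_s\sin\gamma)^2 + \omega^2$. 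The additional scalar variables $\rho, \rho_s$ (or $\rhobar, t$) arise from the Lagrangian/saddle-point structure when one dualizes the norm constraints on the residuals; the fixed-point equations \eqref{eq:rhobar_def}, \eqref{eq:t_def} are the stationarity conditions for these inner variables. The condition $\delta + \delta_s > 1$ guarantees that the combined design has enough effective samples for the relevant quadratic forms to be nondegenerate, which is what makes the auxiliary optimization coercive and the minimizer unique.

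The key structural facts to establish are: (i) the auxiliary (Gordon) optimization is jointly convex (after the appropriate reformulation) so that CGMT applies in its two-sided form, giving both upper and lower deviation control; (ii) the scalar function $\cuR(\xi,\xi_\perp,\omega;\alpha)$ has a unique minimizer, which follows from strict convexity once $\lambda \ge 0$ and $\delta + \delta_s > 1$, or from a direct analysis of the fixed-point map; and (iii) uniform control over $\alpha$ in a compact interval $[\eps_0, 1-\eps_0]$ — this is obtained by a standard net argument over $\alpha$ combined with Lipschitz continuity of both $R(\hbtheta_{n,m}(\alpha))$ and $\cuR_{\stest}(\alpha)$ in $\alpha$, since the objective depends smoothly on $\alpha$ and is strongly convex in $\btheta$. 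The exponential probability bound $1 - 2e^{-cn}$ comes directly from the Gaussian concentration in CGMT (Lipschitz functions of Gaussians) applied to the scalarized objective, together with a union bound over the $\alpha$-net whose cardinality is polynomial in $n$. When $\delta, \delta_s > 1$ each design matrix individually is well-conditioned, so the $\alpha \to 0$ and $\alpha \to 1$ limits are regular and we can take $\eps_0 = 0$.

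The main obstacle I anticipate is handling the \emph{two} independent Gaussian matrices simultaneously within the CGMT framework — the classical theorem is stated for a single Gaussian matrix, so I would need either a careful sequential application (conditioning on one matrix while applying CGMT to the other, then iterating) with attention to the interaction terms, or to set up a single augmented min-max problem over the block structure. Ensuring that the resulting auxiliary problem is still in the "bilinear + convex-concave" form required by Gordon's comparison inequality, and that the order of min and max can be exchanged, is the delicate part; the bookkeeping of the inner scalar variables $\rho, \rho_s, \rhobar, t$ and verifying that their fixed-point equations have a unique admissible solution (nonnegativity, the right branch of the polynomial in \eqref{eq:rhobar_def}) is where most of the technical care goes. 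The uniformity over $\alpha$ and the passage to $\eps_0 = 0$ under $\delta,\delta_s>1$ are comparatively routine once the pointwise characterization is in place.
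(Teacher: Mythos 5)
Your proposal follows essentially the same route as the paper: a Gordon/CGMT comparison with the decomposition of $\btheta$ along $\btheta_*$, the orthogonalized $\btheta_*^s$ direction, and their complement (giving $\xi,\xi_\perp,\omega$), scalarization producing the inner variables $\rho,\rho_s$ and the fixed-point equations, strict convexity in $(\xi,\xi_\perp,\omega)$ for uniqueness, and a net-plus-Lipschitz argument for uniformity over $\alpha\in[\eps_0,1-\eps_0]$. The two-matrix issue you flag is handled in the paper exactly along one of the lines you suggest, namely a single augmented Gordon comparison for the combined process (with independent auxiliary Gaussians for the two blocks), and the condition $\delta+\delta_s>1$ enters both to rule out the degenerate solution $\rho=\rho_s=0$ of the inner maximization and to control the eigenvalues of the stacked design on the good event used for the Lipschitz-in-$\alpha$ bound.
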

\begin{remark}[Optimizing $\alpha$ over the validation set]
Note that the concentration of $R(\hbtheta_{n,m}(\alpha))$ around the theoretical
prediction $\cuR_{\stest}(\alpha)$ in Theorem \ref{propo:HiDimLinRegr} is uniform over $\alpha\in [\eps_0,1-\eps_0]$.
This means that we can find the optimal $\alpha$ by computing
$\hbtheta_{n,m}(\alpha)$ over a grid of $\alpha$ values, estimating 
$R(\hbtheta_{n,m}(\alpha))$ over the validation set and choosing the optimal $\alpha$.
The uniform guarantee insures that this procedure will achieve risk $\min_{\alpha\in[0,1]}\cuR_{\stest}(\alpha)  +o_P(1)$.
\end{remark}
\begin{remark}[Relation to scaling laws]
   An analysis of the equations for $(\xi^*,\xi^*_{\perp},\omega^*)$
   reveals that, for large $\delta,\delta_s$, the predicted excess risk
   behaves as $ \cuR_{\stest}(\alpha)  = \alpha^2\cuR^*_{s,\infty} + \alpha^2C_1/\delta_s+
   (1-\alpha)^2C_2/\delta+o(1/\delta,1/\delta_s)$
   (for some constants $\cuR^*_{s,\infty}, C_1,C_2$). This matches the low-dimensional asymptotics
   and our scaling law \eqref{eq:FirstScaling} with $\beta=1$.
   In practice, we find that, for moderate $\delta,\delta_s$, the behavior
   of $\cuR_{\stest}(\alpha) $ is better approximated by a different value of $\beta$ (see Appendix \ref{app:Empirical}.)
 %  Hence, we still use empirical values of $\beta$ in applying the scaling law. 
\end{remark}
\section{Empirical results}\label{sec:exp}
In this section, we present experiments validating that the scaling law~\eqref{eq:FirstScaling} is a good approximation 
 both for simulated and real-world data.  For simulated
data, we select two different distributions for the original and surrogate
datasets.  The test and validation sets are generated from the same distribution
as the original dataset. In case of real-world data, we choose two different
datasets as the original and surrogate datasets. We split the original dataset
into train, test, and validation sets, while all examples in the surrogate
datasets are allocated solely to the train split. 
%For each experiment run, we
%randomly draw examples from the train split of both the original and surrogate
%datasets.

%%We report excess prediction risk (under square loss) for regression examples,
%%and test classification error for classification examples. %(As mentioned earlier, the scaling 
%law \eqref{eq:FirstScaling}  applies to excess test error.)

% For each dataset and model discussed in this section, we carry out the same experiment:
% $(i)$~We use models trained on purely surrogate data to fit
% the scaling curve $R_{\synth}^{\exc}(m)\approx R_{\synth}^{\exc}(\infty) +A_{\synth}m^{-\beta}$;
% $(ii)$~We use models trained on original data to fit
% the scaling curve $R_{\orig}^{\exc}(n)\approx A_{\orig}n^{-\beta}$
% (if the best fit exponents differ across original and surrogate, 
% we extract $\beta$ from the fit on original data);
% $(iii)$~For each combination $n,m$, we use our estimates of $R_{\synth}^{\exc}(m)$, $R_{\orig}^{\exc}(n)$
% (as measured empirically on the test set) and $\beta$, $R_{\synth}^{\exc}(\infty)$,
%  plot the predicted~$R(\hbtheta_{n,m}(\alpha))-R_*$ as a function of $\alpha$.
%  $(iv)$~We then train the model using~$n$ original and~$m$
% surrogate examples with weights $(1-\alpha)$ and,~$\alpha$ for the two
% datasets, respectively. We average the results of multiple independent runs to
% compare it against those predicted by the scaling law \eqref{eq:FirstScaling}.
% For ridge regression,
% we also compare with exact high-dimensional asymptotics.

For each dataset and model discussed in this section, we carry out the same experiment:
$(i)$~We use models trained on original data to fit
the scaling curve {$ R(\hbtheta_{n,0}(0)) = A_{\orig}+ B_{\orig}n^{-\beta_{\orig}}$ and obtain $A_{\orig}$ and $\beta_{\orig}$} %$R_{\orig}^{\exc}(n)\approx A_{\orig}n^{-\beta}$;
$(ii)$~We use models trained on purely 
surrogate data to fit
the scaling curve {$ R(\hbtheta_{0,m}(1)) = A_{\synth}+B_{\synth}m^{-\beta_{\synth}}$ to obtain $A_{\synth} $ and $\beta_{\synth}$. 
$(iii)$~Since assume $R_* = R(\hbtheta_{\infty,0}(0))$, we let $R_* = A_{\orig}$ and excess risk estimates $R_{\orig}^{\exc}(n)=R(\hbtheta_{n,0}(0))-A_{\orig}$, $R_{\synth}^{\exc}(m)= R(\hbtheta_{0,m}(1))-A_{\orig}$ and $R_{\synth}^{\exc}(\infty)= A_{\synth}-A_{\orig}$, and we use $\beta=\beta_{\orig}$, the fit exponent obtained from original data);
$(iv)$~For each combination of $n,m$, we use our estimates of $R_{\synth}^{\exc}(m)$, $R_{\orig}^{\exc}(n)$
(as measured empirically on the test set), $\beta$, $R_{\synth}^{\exc}(\infty)$, and $R_*$ to
plot the predicted~$R(\hbtheta_{n,m}(\alpha))$ as a function of $\alpha$ using scaling law~\eqref{eq:FirstScaling}.
 $(v)$~We then train the model using~$n$ original and~$m$
surrogate examples with weights $(1-\alpha)$ and,~$\alpha$ for the two
datasets, respectively. We average the results of 10 independent runs to
compare it against those predicted by the scaling law.
For ridge regression,
we also compare with exact high-dimensional asymptotics from Theorem~\ref{propo:HiDimLinRegr}.

\emph{Let us emphasize that these plots probe the dependence on the hyperparameter $\alpha$. These are much more demanding tests that the usual ones in scaling laws.
We generally observe that the scaling law captures well the behavior  of the test error for data mixtures. Furthermore, we perform experiments for variety of loss functions to show these scaling laws hold more widely than the theoretical settings we considered.}

\paragraph{Binary classification with Gaussian mixture data} This is a simple simulated setting.
The original dataset consists of independent and identically distributed
examples $(y_i,\bx_i)\in \reals\times\reals^{d}$, $d=200$, where~$y_i$ is uniform over
$\{+1, -1\}$, and $\bx_i\big|_{y_i}\sim\normal(y_i\btheta_*, \id_d)$, where $\btheta_*\in\reals^d$,
$\|\btheta_*\|=1$. Surrogate data have the same distribution, with a different
unit vector $\btheta_{*,s}$. This data distribution is parametrized by $d$ and the angle $\gamma$ between
the original and surrogate parameters, $\cos\gamma := \<\btheta_*,\hbtheta_{*,s}\>$.
We use $\gamma=\pi/10$ in our experiments.  For
each $(n,m,\alpha)$, we averaged the results over 10 independent runs.

We use two different models for classification: 
(1)~Logistic regression; $(2)$~A one-hidden layer neural network
with 32 hidden ReLU neurons.
The results for both models are presented in Appendix~\ref{app:gaussMix}.

\paragraph{Linear regression with Gaussian mixture data}
For the Gaussian mixture data generation setup described above, we also perform ridge regression. The
results (presented in Appendix \ref{app:newridgegmm}) demonstrate that classification loss and square loss often have a similar qualitative behavior as a function of weight $\alpha$, as seen by comparing the classification loss in Figure~\ref{fig:gmmAlphaLogReg} and the squared loss in Figure~\ref{fig:gmmAlpharidge} for the same setup. Although our theoretical results do not apply directly to classification loss, we believe that our qualitative
conclusions generalize.
This is confirmed by the similar behavior between the two losses and the successful prediction of actual risk by scaling laws in our classification experiments.

%
%

%
%***********************************************
%
%\subsection{Sentiment analysis in movie reviews}
%\label{sec:Sentiment}

\begin{figure}[h]
  \vskip 0in
\begin{center}{\includegraphics[scale=.3]{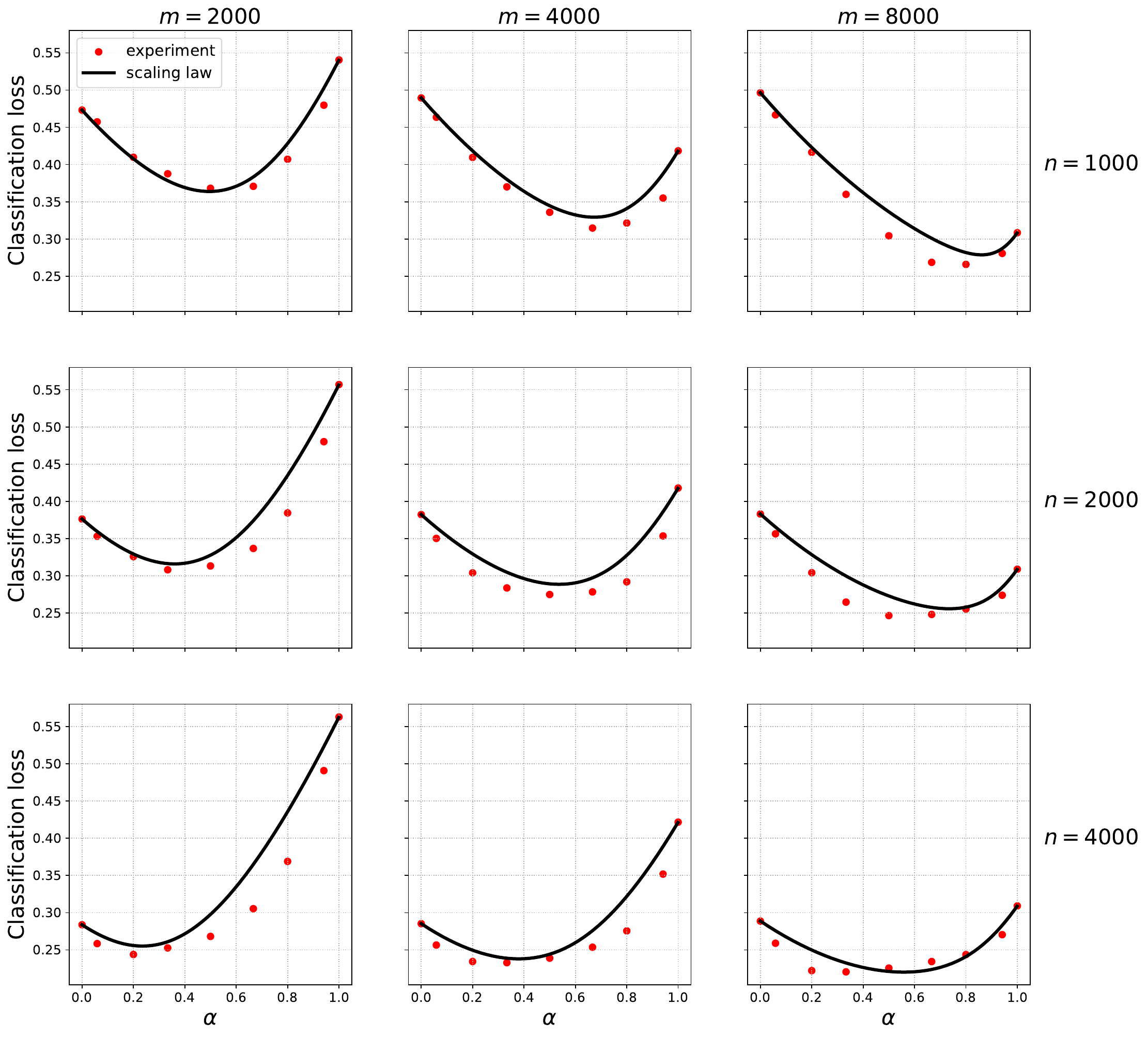}}
  \caption{CIFAR10 and CIFAR100 data.  Test error
  when trained on  mixtures of original and surrogate
  data. 
  Black curves:  prediction from Eq.~\eqref{eq:FirstScaling}.}
  \label{fig:cifar}
  \end{center}
\end{figure}

\paragraph{Sentiment analysis in movie reviews} As original data, we use the IMDB dataset (\href{huggingface.co/datasets/imdb}{link}) which has 25k reviews for training, each labeled as positive or negative. For validation and testing, we split the IMDB test dataset of 25k reviews into a validation set of 10k reviews and test set of 15k reviews.

We experiment with two different surrogate datasets: 1) Rotten Tomatoes dataset of movie reviews (\href{huggingface.co/datasets/rotten_tomatoes}{link}): these
are data with different distribution but within the same domain.
This dataset contains movie reviews and the corresponding sentiments, 2) Goodreads book reviews (\href{mengtingwan.github.io/data/goodreads.html}{link}):
these are data from a substantially different domain. This dataset has reviews and their ratings. We choose 10k reviews each with a rating of 5 and 1, and label them as positive and negative, respectively.

We convert reviews into feature vectors with $d=884$ dimensions as 
explained in Appendix~\ref{app:Sentiment}.
 We use logistic regression and
neural network models with the same set of parameters as in the Gaussian mixture
experiments (except for the input dimension). 

Results with neural nets and Rotten Tomatoes as synthetic dataset are presented in Figure~\ref{fig:rottenAlphaNN} and the remaining results are in Appendix~\ref{app:Sentiment}.
\paragraph{Image classification with CIFAR10 and CIFAR100}
We use 50,000 CIFAR10 training images as original data, its 10 classes for the classification task, and test on the 10,000 CIFAR10 test images.  We use 50,000 CIFAR100 training images as surrogate data. 
We train a 9-layer ResNet model for classification. Appendix~\ref{app:cifar} presents details on the data pre-processing and mapping of labels.
Results are shown in Figure \ref{fig:cifar}.  Note that CIFAR10 and CIFAR100 datasets are quite different from each other, as they have no overlap either in the images or in their label sets.  Yet, the test error on training on their mixture is well predicted by the scaling law~\eqref{eq:FirstScaling}.  

\paragraph{Lasso-based Cox regression on TCGA PanCancer dataset}
We use the public domain TCGA pancancer dataset~\cite{goldman2020visualizing} (\href{https://gdc.cancer.gov/about-data/publications/PanCan-Clinical-2018}{link}), with gene expressions as covariates and progression-free survival (PFS) as response.  After filtering and feature selection, we are left with 3580 female patients, which we use as original data, and 3640 male patients, which we use as surrogate data.
We fit CoxPHFitter model (\href{https://lifelines.readthedocs.io/en/latest/fitters/regression/CoxPHFitter.html}{link}) with 500 selected genes and use ``1-concordance score'' as our loss function. The results are shown in Figure~\ref{fig:genomic_comb}. The details of pre-processing and experiment parameters\footnote{We observe that training at
$\alpha=1$
yields a somewhat singular behavior: we use a $\alpha =0.95$ as a proxy of $\alpha=1$, see appendices.} are in Appendix~\ref{app:cancer}.\looseness-1

%
%*************************************************
%
%\subsection{High-dimensional ridge regression}

\begin{figure*}
\vskip-0.1in
\begin{center}{\includegraphics[scale=.3]{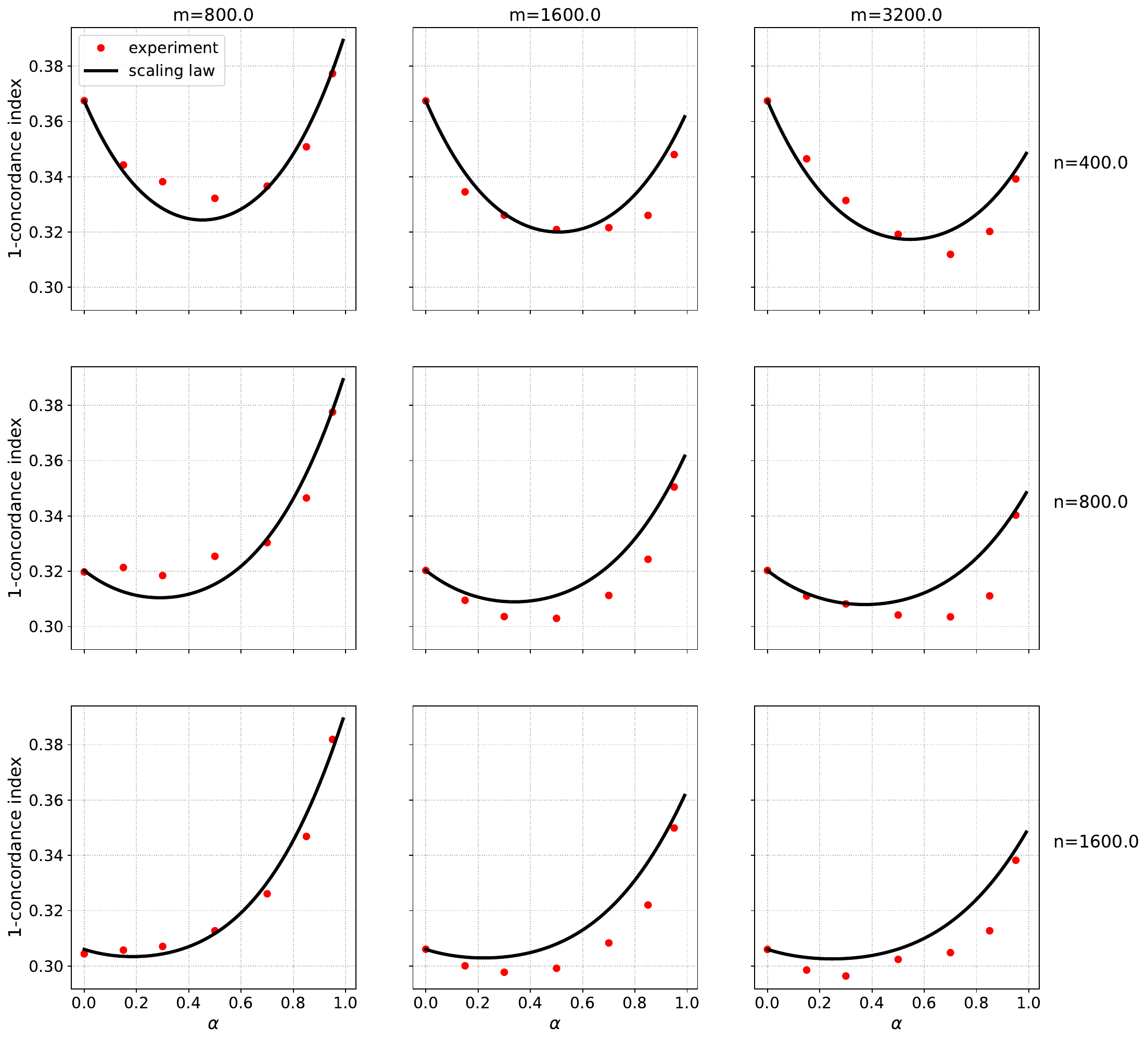}}
  \caption{Lasso-based Cox regression on TCGA PanCancer dataset. Test error
  when trained on  mixtures of original and surrogate
  data.  Black curves: prediction from Eq.~\eqref{eq:FirstScaling}.}
  \label{fig:genomic_comb}
\end{center}
\vskip -0.2in
\end{figure*}

\paragraph{High-dimensional ridge regression} We simulate the data distribution in Section~\ref{sec:LinRegr},
i.e., $y_i=\<\btheta_*,\bx_i\>+\eps_i$, $i\le n$;
$y^s_i=\<\btheta_{*,s},\bx^s_i\>+\eps^s_i$, $i\le m$; with $\bx_i,\bx_i^s\sim
\normal(\bfzero,\id_d)$, $\eps_i\sim\normal(0,\sigma^2)$,
$\eps^s_i\sim\normal(0,\sigma_s^2)$, and fit a simple linear model using ridge
regression.  The results are shown in Figure~\ref{fig:linRegr-New}.  In our experiments, we use
$d=500$, $\sigma^2=\sigma_s^2=1$, $\|\btheta_*\|=\|\btheta_{*,s}\|=1$ and regularization parameter $\lambda=2^{-10}$.  Under
these settings, the model is parametrized by the angle $\gamma$ between
$\btheta_*$ and $\btheta_{*,s}$, {where} $\cos\gamma :=\<\btheta_*,\btheta_{*,s}\>$. We used $\gamma=\pi/6$ and $\pi/2$ in our experiments.\footnote{For ridge regression simulations, we directly plot the excess test risks, as the parameter $\btheta$ for original data is known. For any $\hbtheta$ the excess test risk in this model is simply $\|\btheta-\hbtheta\|^2$.}

The theoretical predictions of Theorem~\ref{propo:HiDimLinRegr} for these curves
in high-dimensional asymptotics $n,m,d\to\infty$, with $n/d\to\delta$,
$m/d\to\delta_s$ are reported as blue lines, and match remarkably well with the
empirical data. The simple scaling law \eqref{eq:FirstScaling} nevertheless
provides a good approximation of these (more complicated) theoretical
formulas.\looseness-1

Note in particular that in the top row of Figure \ref{fig:linRegr-New}, we have
$\<\btheta_*,\btheta_{*,s}\>=0$, i.e. the surrogate data are as far as possible from the
original ones. 
Nevertheless, the induced regularization effect leads to smaller test error on the original distribution.

\begin{figure}
  \vskip 0in
\begin{center}{\includegraphics[scale=.3]{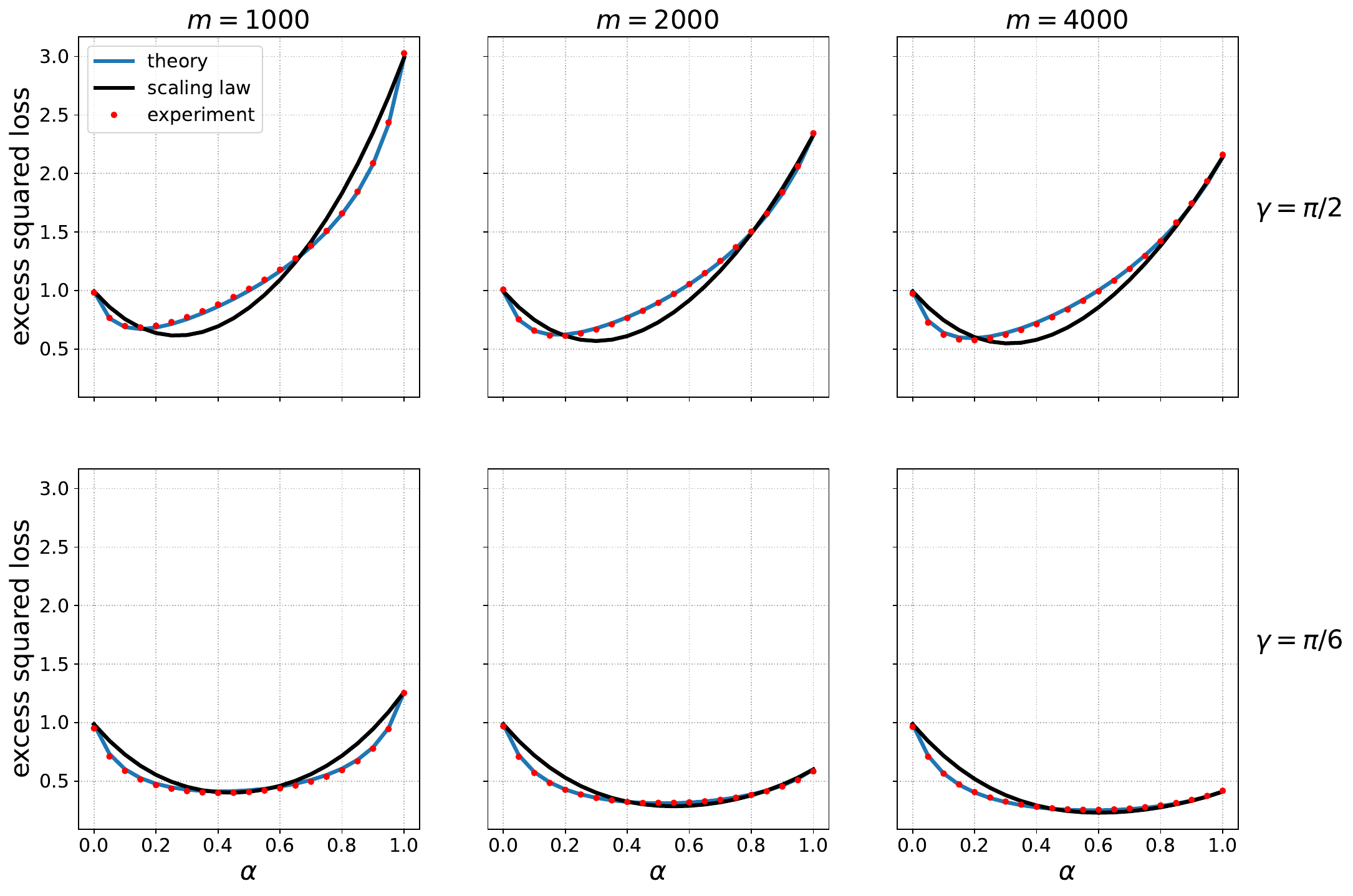}}
  \caption{Ridge regression on simulated data. Here $d=500$, $n=1000$, 
  $\sigma^2=\sigma_s^2=1$, $\|\btheta_*\|=\|\btheta_{*,s}\|=1$, regul. par. $\lambda=2^{-10}$, and $m$ varies by column.
  Top row
  $\gamma=\pi/2$, bottom row
  $\gamma=\pi/6$. }
  \label{fig:linRegr-New}
\end{center}
\vskip -0.2in
\end{figure}

We observe  proposed scaling law \eqref{eq:FirstScaling} predicts well the behavior of the experiments, across of the datasets above, and for most combinations of original and surrogate examples we have tested.

%\begin{enumerate}
%  \item
%    The proposed scaling law \eqref{eq:FirstScaling} predicts well the behavior of the experiments, across a range
%    of $m$~and~$n$: both the optimal value of $\alpha$ and the optimal gain in test error
%    are captured quite well by our approach. 
%  \item
%    Adding surrogate data improves test error, provided that it is weighted
%    optimally or near optimally.  That is, the test risk with the optimal choice
%    of~$\alpha$ is typically lower than the test risk with $\alpha=0$. 
%  \item
%    For fixed original dataset size~$n$, the optimal weight for surrogate data
%    increases with~$m$.  This is not surprising, since test error improves with
%    size when trained only on surrogate data, and therefore all else being
%    equal, larger surrogate datasets must be given more weight.  The same
%    conclusion holds for fixed~$m$ and increasing~$n$.  
%  \item
%    The improvement over $\alpha=0$ decreases with~$n$.  This is also natural,
%    since in the limit~$n=\infty$, no amount of surrogate data will help.
    %

%    \item Also matching intuition, the improvement in test error due to
%    surrogate data is larger, when the surrogate data distribution is closer to
%    the original data distribution, as demonstrated in Fig.~\ref{fig:linRegr-New} for 
%    the case of ridge regression.
%\end{enumerate}

Finally, we emphasize that the scaling law is only an empirical approximation of reality.
This is clearly illustrated by the example of ridge regression:
in this case, we use Theorem~\ref{propo:HiDimLinRegr}
to precisely predict the discrepancy between precise asymptotics and scaling law, see Appendix~\ref{app:ridgeexp}.    
%\emph{First,} Fig.~\ref{fig:linRegr-New} 
%shows a small but non-vanishing discrepancy between the scaling law prediction, and the asymptotically
%exact prediction derived in Section \ref{sec:LinRegr}. \emph{Second,} in the appendix we present simulated examples 
%(within the same ridge regression model) in which the prediction of Eq.~\eqref{eq:FirstScaling} fails significantly. On the other hand, these failures are not unexpected, as they concern cases in which 
%even standard scaling (at $\alpha=0$ or $\alpha=1$) fails.

%******************************************
%
\section{Discussion}
\label{sec:Discussion}
We conclude by discussing two possible generalizations of the scaling law \eqref{eq:FirstScaling},
and its applicability.
\emph{First,} throughout this paper we assumed that $R^{\exc}_{\orig}(\infty)=0$,
namely that we can achieve the Bayes error by training on infinitely many original samples.
In practice this will not hold because of the limited model complexity.
Following standard scaling laws \cite{kaplan2020scaling,hoffmann2022training}, this effect can be accounted for by 
an additional term $C\cdot N^{-\omega}$,  where $N$ is the model size (number of parameters).
%In all of our experiments, we fixed $N$ and focused on the dependence on the two
%sample sizes $n,m$ and therefore ignored such a term, which is anyway negligible for the regimes we investigated empirically.
\emph{Second,} the scaling law \eqref{eq:FirstScaling} implies as special cases that 
$R^{\exc}_{\orig}(n)\approx A_{\orig}n^{-\beta}$, $R^{\exc}_{\synth}(m)\approx R^{\exc}_{\synth}(\infty)+
A_{\synth}m^{-\beta}$. In particular, the exponent $\beta$ is the same when training on real or surrogate
data. In practice, we observe often two somewhat different exponents $\beta_{\orig}\neq
\beta_{\synth}$. In these cases, we set $\beta=\beta_{\orig}$, and this appears to work reasonably well.
However, we can imagine cases in which the difference between $\beta_{\orig}$ and 
$\beta_{\synth}$ is significant enough  \eqref{eq:FirstScaling} will stop being accurate.

%\emph{Third,} the applicability of Eq.~\eqref{eq:FirstScaling},
%illustrated in Figs.~\ref{fig:gmmAlphaNN} to \ref{fig:linRegr-New} appears to be broader than
%what would be naively expected. Consider for instance Figure~\ref{fig:linRegr-New}.
%In this case, the scaling of the original error $R^{\exc}_{\orig}(n)\approx A_{\orig}n^{-\beta}$
%only holds for $n/d\gg 1$ and requires $\beta=1$. In contrast, we are applying 
%Eq.~\eqref{eq:FirstScaling} to $n/d=2$ \emph{and} with empirically determined $\beta\neq 1$.
%A possible explanation is that the two effect compensate: the empirically determined $\beta$
%captures the behavior in the regime of interest. 

%Of course, in order to be able to use the scaling law to predict how large the number of surrogate examples $m$ should be, we need the empirical scaling  of $R^{\exc}_{\synth}(m)$ to be accurate.

\section*{Acknowledgements}
We are grateful to Joseph Gardi, Germain Kolossov, Marc Laugharn, Kaleigh Mentzer, Rahul Ponnala, and Pulkit Tandon, for several conversations about this work. This work was carried out while Andrea Montanari was on leave from Stanford and a Chief Scientist at Granica (formerly known as Project N). The present research is unrelated to AM’s Stanford research.

%\newpage

%
%************************************************
 %
\bibliographystyle{amsalpha}
\bibliography{all-bibliography}

\newpage

\appendix

\section{Details of empirical results}
\label{app:Empirical}

\subsection{Binary classification with Gaussian mixture data}
\label{app:gaussMix}

\begin{figure*}[h]
\vskip 0.2in
\begin{center}
  {\includegraphics[scale=0.5]{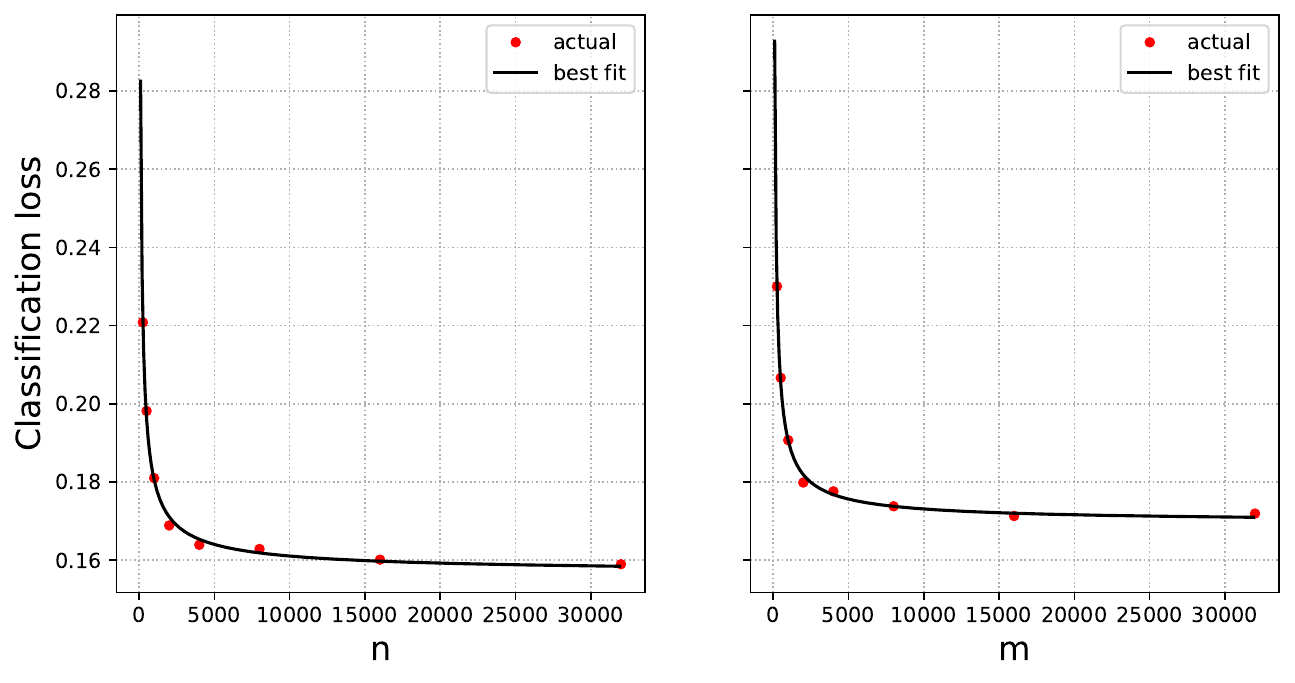}}
    \caption{Gaussian mixture data and logistic regression. Test error
  when trained on original (left plot) and surrogate (right plot)  data only (red dots).
  Best fits %$R_{\orig}(n)=R_{\orig}(\infty)+A_{\orig}n^{-\beta}$ and 
  %$R_{\synth}(m)=R_{\synth}(\infty)+A_{\synth}m^{-\beta}$
  are shown in black.
  These gives the estimates $\beta=0.72$, $R_*=0.157$, and 
  $R_{\synth}^{\exc}(\infty) =  0.013$.
  }
  \label{fig:gmmBetaFitLogReg}
\end{center}
\vskip -0.2in
\end{figure*}

\begin{figure*}[h]
  \vskip 0.2in
\begin{center}{\includegraphics[scale=0.4]{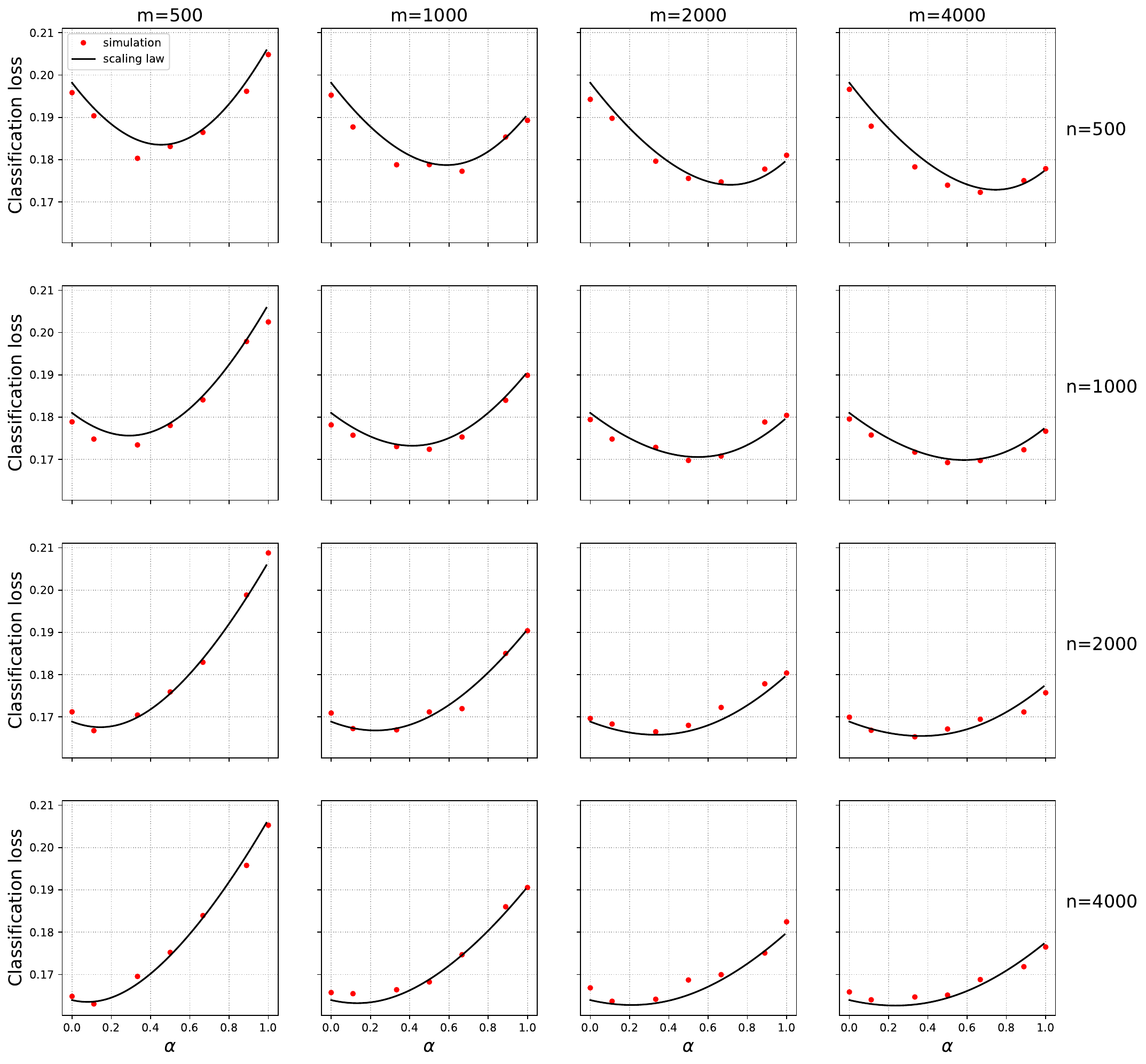}}
  \caption{Gaussian mixture data and logistic regression. Test error
  when trained on  mixtures of original ($n$ varying by row) and surrogate  ($m$ varying by column)
  data.  Black curves: scaling formula \eqref{eq:FirstScaling}.}
  \label{fig:gmmAlphaLogReg}
\end{center}
\vskip -0.2in
\end{figure*}

We provide details for the models used in the simulations.

\textbf{Logistic regression}:  
We use the scikit-learn implementation with the lbfgs solver, fitting the
intercept, with maximum iterations set to 10k.  For each run of each
$(n,m,\alpha)$ combination, we set the $\ell_2$ penalty (parameter C in
scikit-learn) to $2^{i}, i=-8,...,8$ and $10^i, i=-6,-5,-4,-3,3,4,5,6$, and only
report the test result for the  value that achieves the best validation error.
The results of the individual scaling law estimates and the comparison
of joint training results with the scaling law predictions are shown in
Figures~\ref{fig:gmmBetaFitLogReg}~and~\ref{fig:gmmAlphaLogReg}.

\begin{figure}[h]
  \vskip 0.2in
\begin{center}{\includegraphics[scale=0.5]{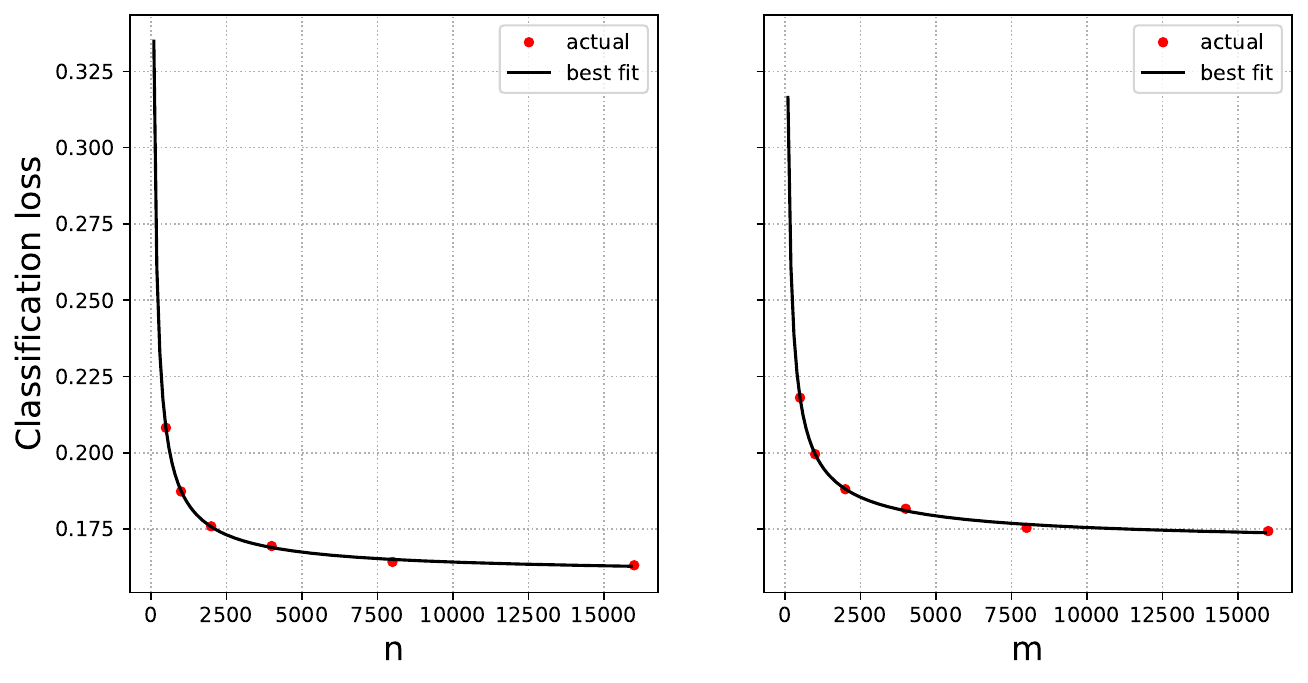}}
  \caption{Gaussian mixture data and neural network. Test error
  when trained on original (left plot) and surrogate (right plot)  data only (red dots).
  Best fits %$R_{\orig}(n)=R_{\orig}(\infty)+A_{\orig}n^{-\beta}$ and 
  %$R_{\synth}(m)=R_{\synth}(\infty)+A_{\synth}m^{-\beta}$
  are shown in black.
  These gives the estimates $\beta=0.79$, $R_*=0.160$ and
  $R_{\synth}^{\exc}(\infty)=0.010$.}
  \label{fig:gmmBetaFitNN} 
  \end{center}
\end{figure}

\begin{figure}[h]
  \vskip 0.in
\begin{center}{\includegraphics[scale=0.4]{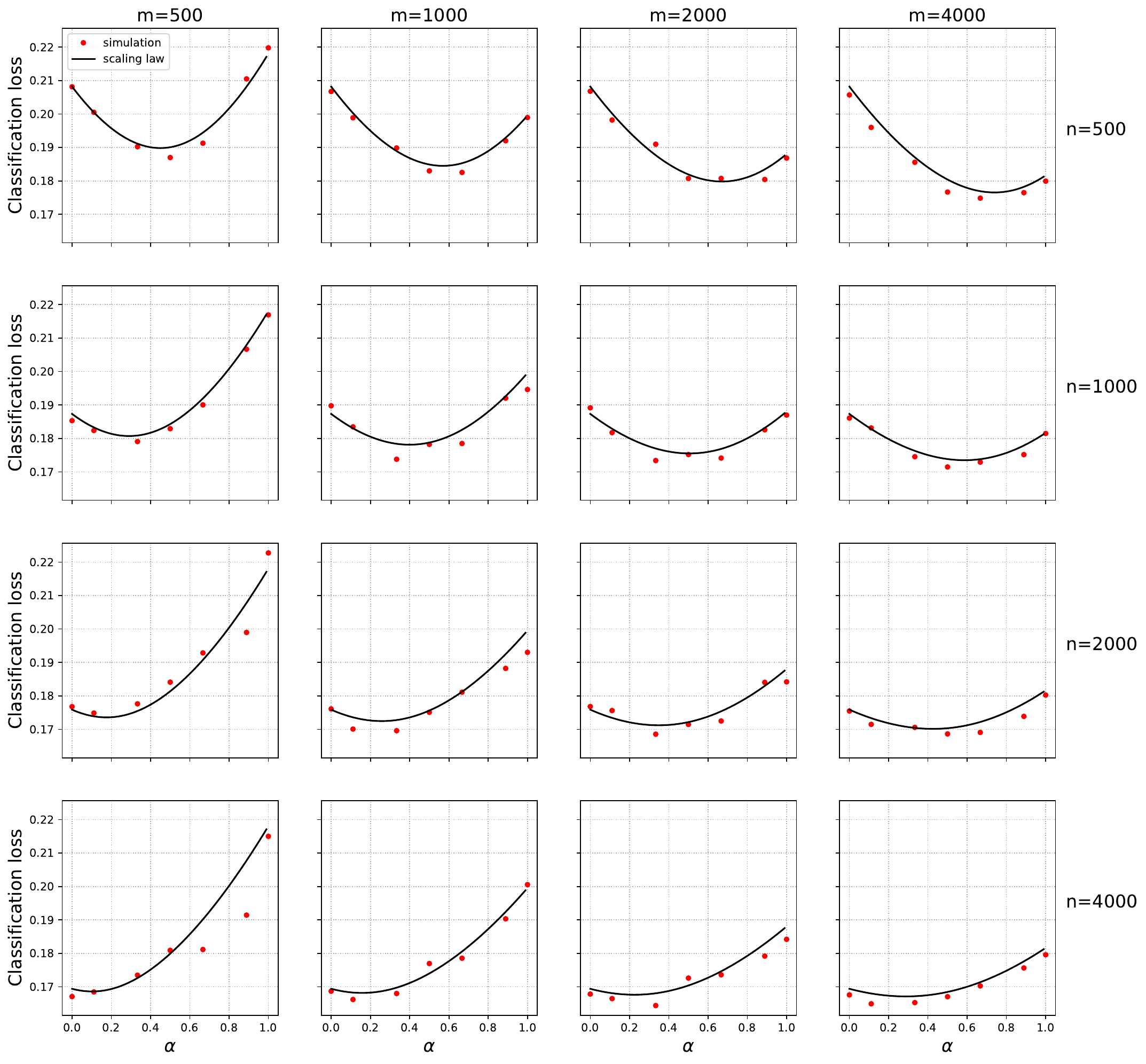}}
  \caption{Gaussian mixture data and neural network. Test error
  when training mixture of original ($n$ varying by row) and surrogate  ($m$ varying by column)
  data.  Black curves: scaling law \eqref{eq:FirstScaling}.}
  \label{fig:gmmAlphaNN}
  \end{center}
\vskip -0.2in
\end{figure}

\textbf{Neural network}: The network has one hidden layer with 32 ReLU neurons,
and an output neuron using sigmoid. For training, we use the binary cross entropy
loss, a constant learning rate of 0.05, and batch size 64. We train the network
for 1,000 epochs.  Similar to the procedure in logistic regression, we use
$\ell_2$ regularization (weight decay) and use the validation set to 
choose the best regularization
parameter from the set $\{0, 10^{-5}, 10^{-4}, 10^{-3}, 2\cdot 10^{-3}, 4\cdot 10^{-3}, 10^{-2}, 2\cdot 10^{-2}, 4\cdot 10^{-2}, 10^{-1}, 2\cdot 10^{-1}, 4\cdot 10^{-1}\}$. 
The results of the individual scaling law estimates and the comparison
of joint training results with the scaling law predictions are shown in
Figures~\ref{fig:gmmBetaFitNN}~and~\ref{fig:gmmAlphaNN}.  

%
%***********************************************
%

\subsection{Linear regression with Gaussian mixture data}\label{app:newridgegmm}
For the Gaussian mixture data, described in the previous section, we perform weighted ridge regression experiments according to equation~\eqref{eq:regridgereg} and plot the square loss.
As before, we choose the best regularizer for the ridge regression of the set $2^{i}, i=-8,...,8$ and $10^i, i=-6,-5,-4,-3,3,4,5,6$, and report the test result for the value that achieves the best validation error. The results are presented in Figures~\ref{fig:ridgegmmBetaFit} and~\ref{fig:gmmAlpharidge}.

\begin{figure*}
  \vskip 0.2in
\begin{center}{\includegraphics[scale=0.5]{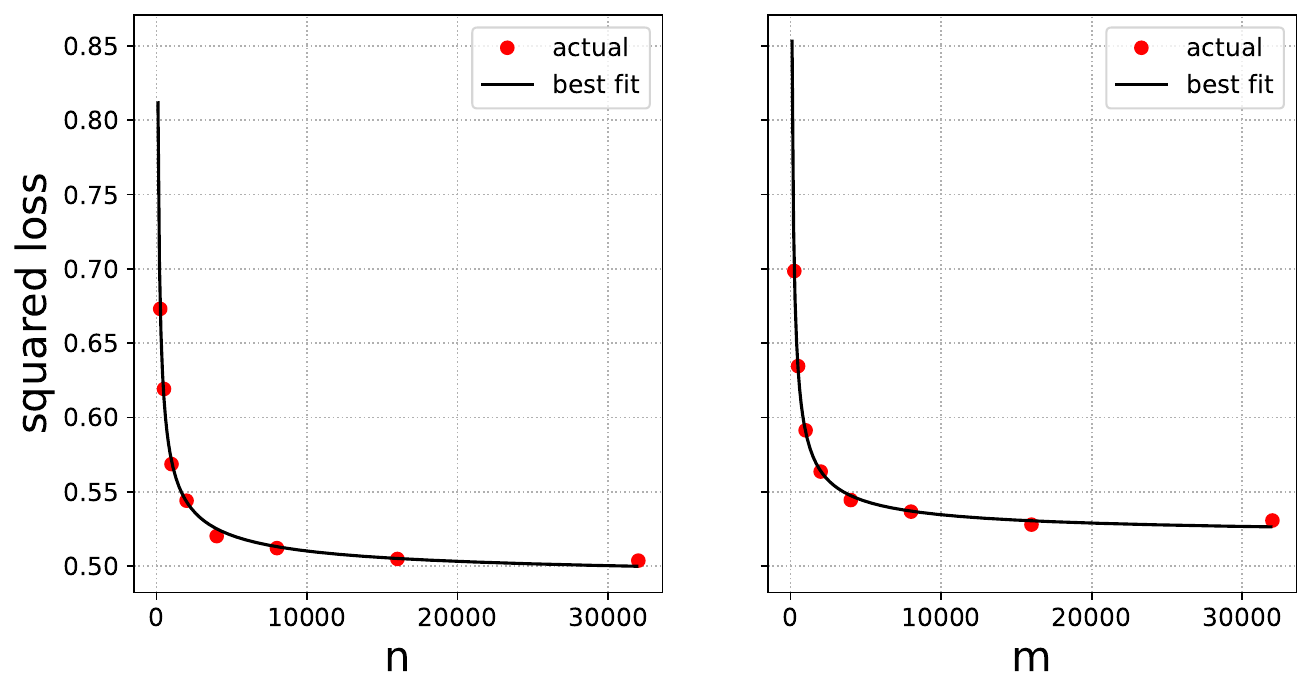}}
    \caption{Gaussian mixture data and ridge regression. Test error
  when trained on original (left plot) and surrogate (right plot)  data only (red dots).
  Best fits %$R_{\orig}(n)=R_{\orig}(\infty)+A_{\orig}n^{-\beta}$ and 
  %$R_{\synth}(m)=R_{\synth}(\infty)+A_{\synth}m^{-\beta}$
  are shown in black.
  These gives the estimates $\beta=0.60$, $R_*=0.49$, and 
  $R_{\synth}^{\exc}(\infty) =  0.03$.
}
  \label{fig:ridgegmmBetaFit}
\end{center}
\vskip -0.2in
\end{figure*}

\begin{figure*}
  \vskip 0.2in
\begin{center}{\includegraphics[scale=0.4]{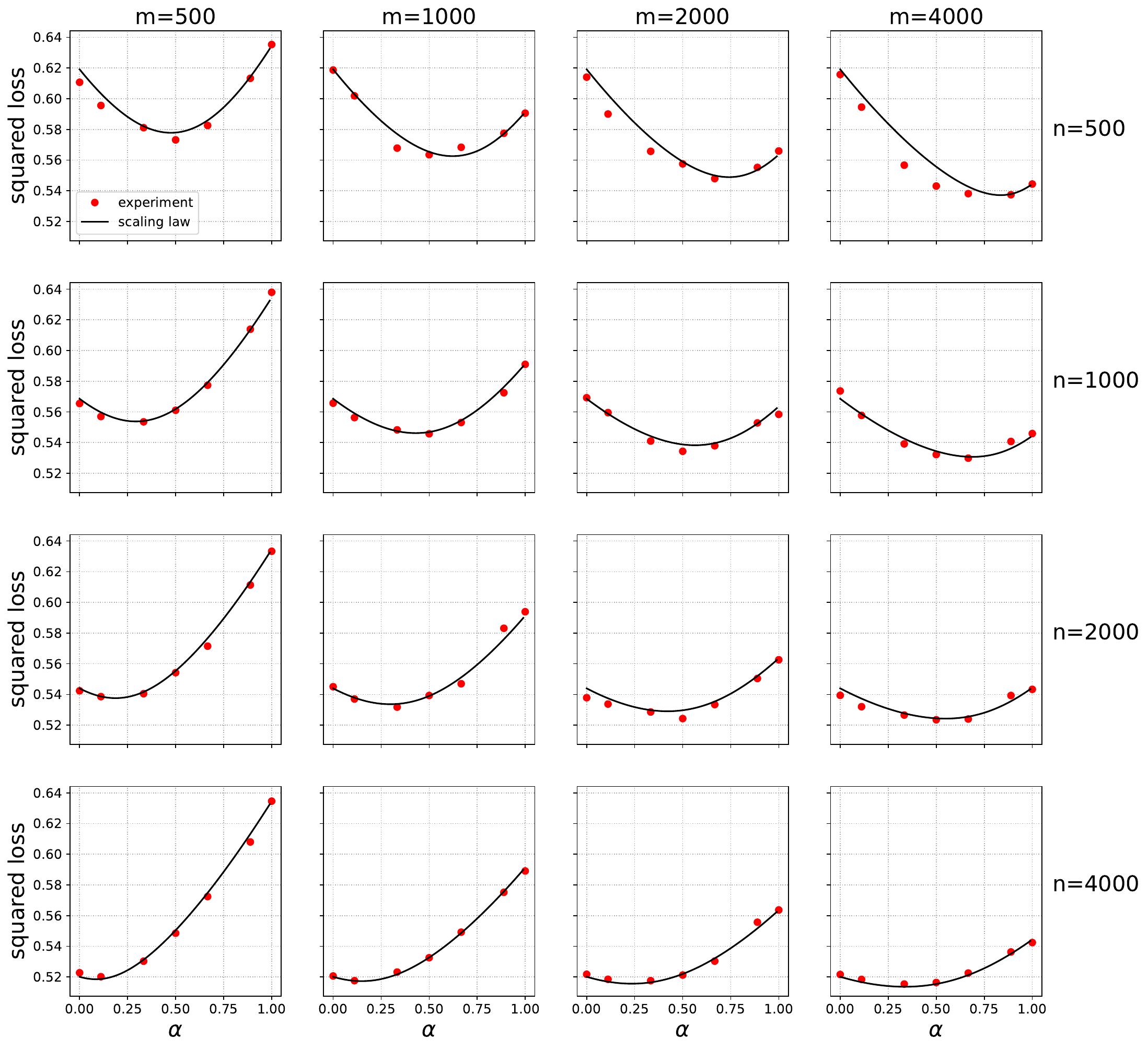}}
  \caption{Gaussian mixture data and ridge regression. Test error
  when trained on  mixtures of original and surrogate
  data.  Black curves: prediction from Eq.~\eqref{eq:FirstScaling}.}
  \label{fig:gmmAlpharidge}
\end{center}
\vskip -0.2in
\end{figure*}

\subsection{Sentiment analysis in movie reviews}
\label{app:Sentiment}

\begin{figure*}
  \vskip 0.2in
\begin{center}{\includegraphics[scale=0.5]{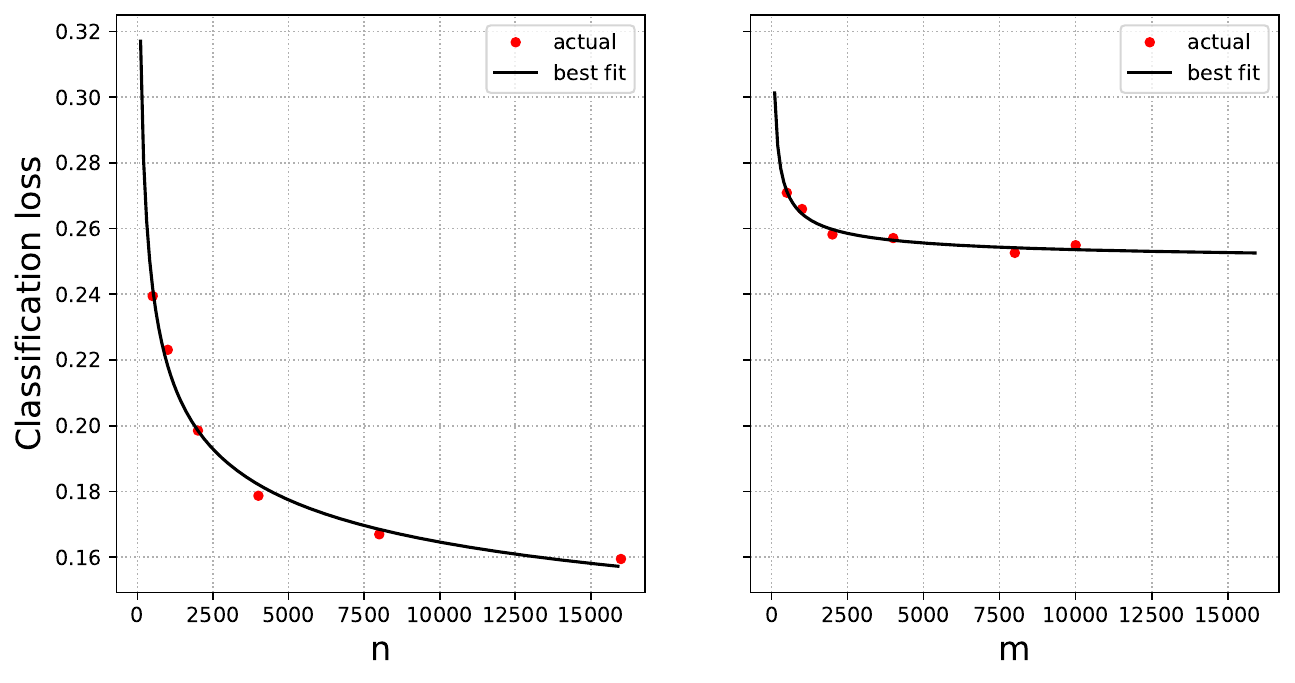}}
    \caption{IMDB and Rotten Tomatoes data and logistic regression.  Test error
  when trained on original (left plot) and surrogate (right plot)  data only (red dots), together 
  with scaling law fits (black lines). Best fit parameters are $\beta=0.27$, $R_*=0.101$ and
  $R_{\synth}^{\exc}(\infty)=0.148$.}
  \label{fig:rottenBetaFitLogReg}
\end{center}
\vskip -0.2in
\end{figure*}

\begin{figure*}
  \vskip 0.2in
\begin{center}{\includegraphics[scale=0.4]{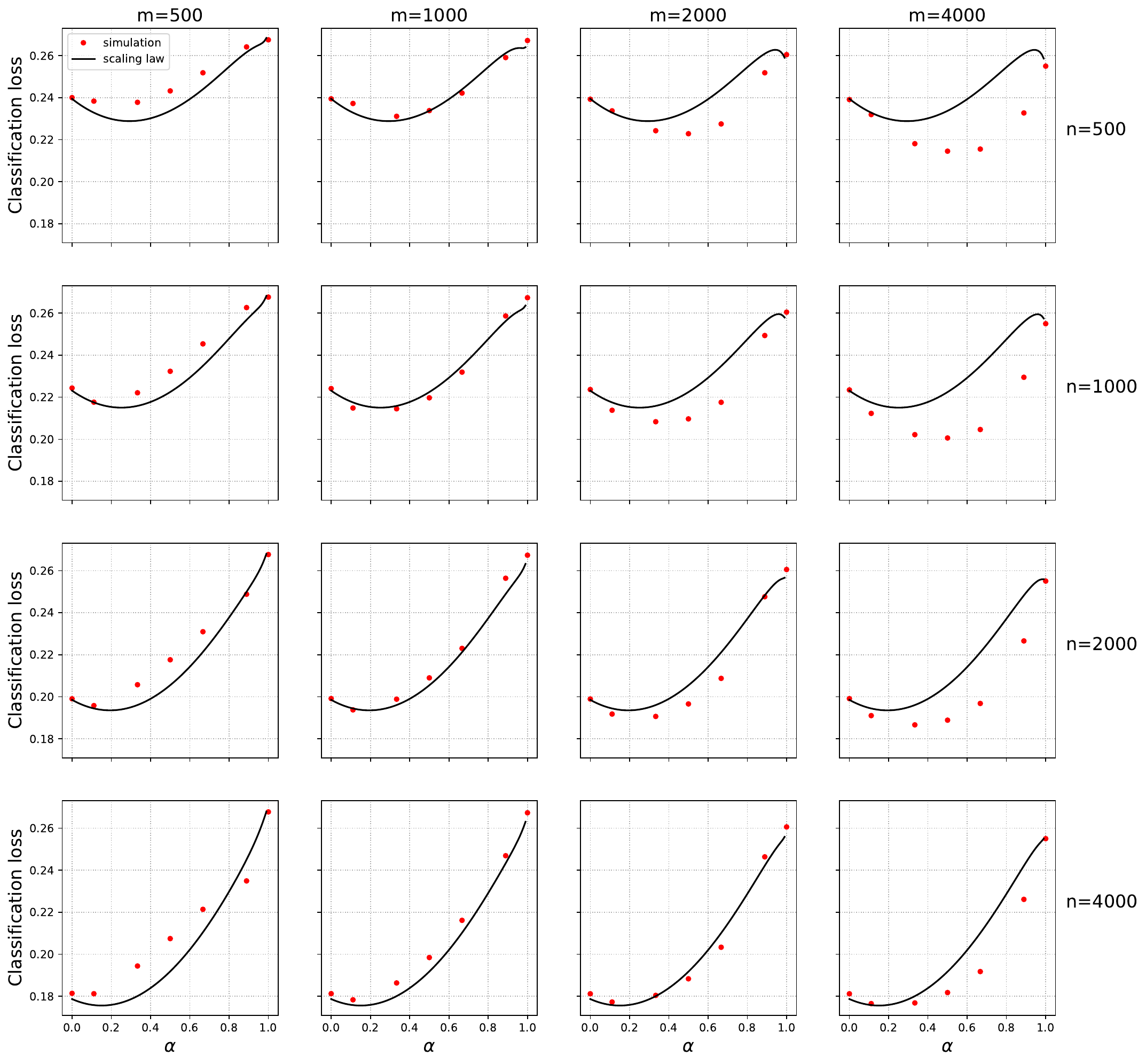}}
  \caption{IMDB and Rotten Tomatoes data and logistic regression. Test error
  when trained on  mixtures of original and surrogate
  data.  Black curves: prediction from Eq.~\eqref{eq:FirstScaling}.}
  \label{fig:rottenAlphaLogReg}
\end{center}
\vskip -0.2in
\end{figure*}

\begin{figure*}
  \vskip 0.2in
\begin{center}{\includegraphics[scale=0.5]{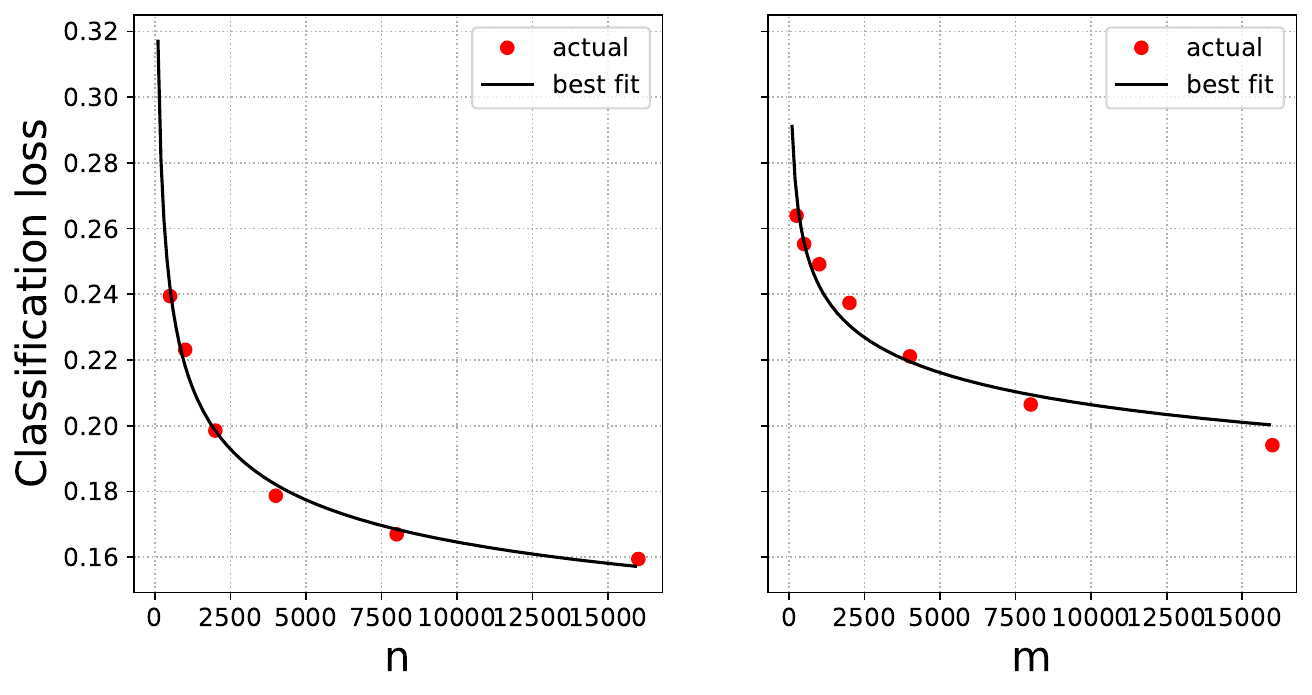}}
    \caption{IMDB and Goodreads book reviews (as surrogate dataset) and logistic regression.  Test error
  when trained on original (left plot) and surrogate (right plot)  data only (red dots), together 
  with scaling law fits (black lines). Best fit parameters are $\beta=0.27$, $R_*=0.101$ and
  $R_{\synth}^{\exc}(\infty)=0.101$.}
  \label{fig:bookBetaFitLogReg}
\end{center}
\vskip -0.2in
\end{figure*}

\begin{figure*}
  \vskip 0.2in
\begin{center}{\includegraphics[scale=0.4]{fig/log_reg/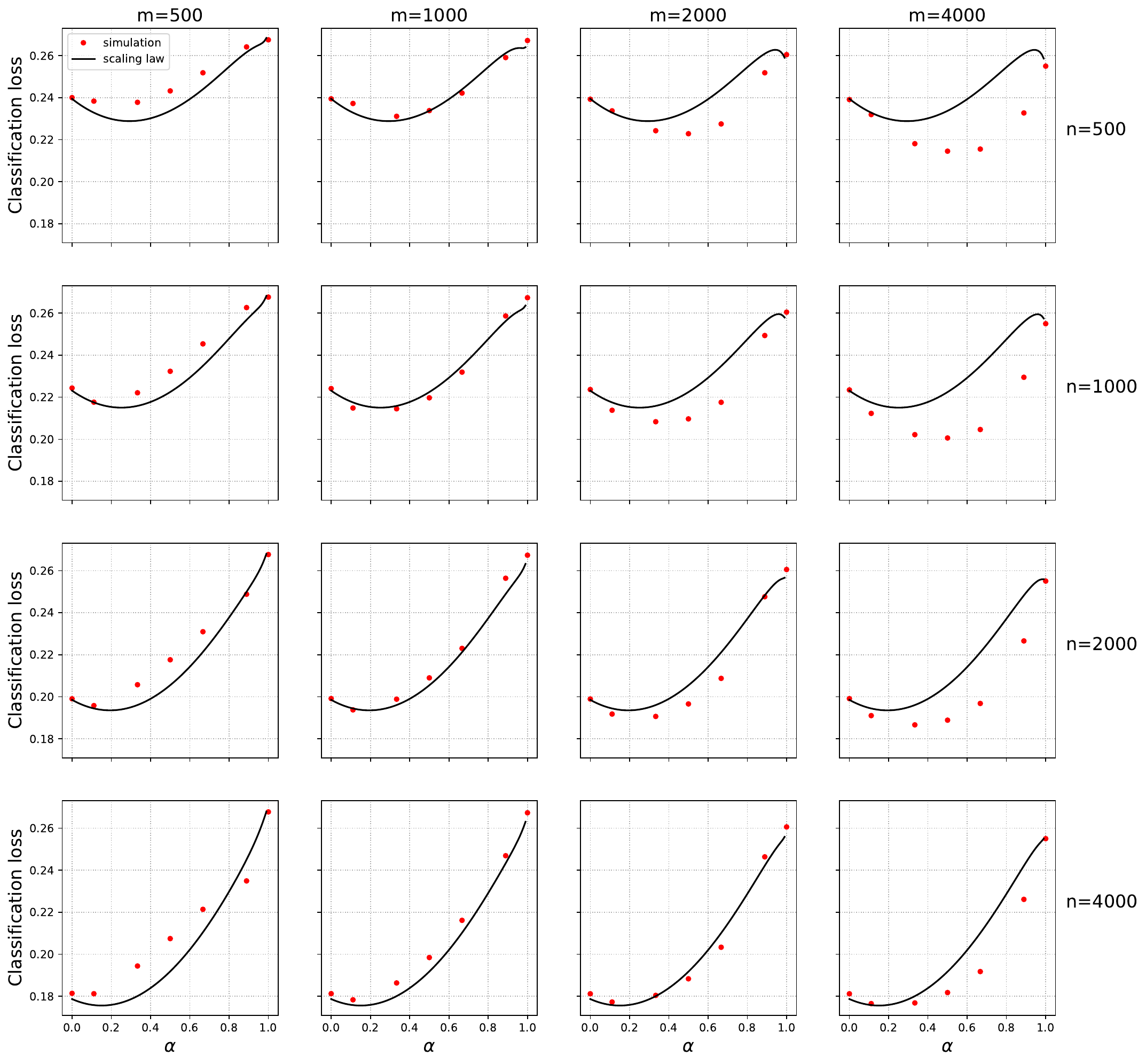}}
  \caption{IMDB and Goodreads book reviews and logistic regression. Test error
  when trained on  mixtures of original and surrogate
  data.  Black curves: prediction from Eq.~\eqref{eq:FirstScaling}.}
  \label{fig:bookAlphaLogReg}
\end{center}
\vskip -0.2in
\end{figure*}

\begin{figure*} 
\vskip 0.2in
\begin{center}{\includegraphics[scale=0.5]{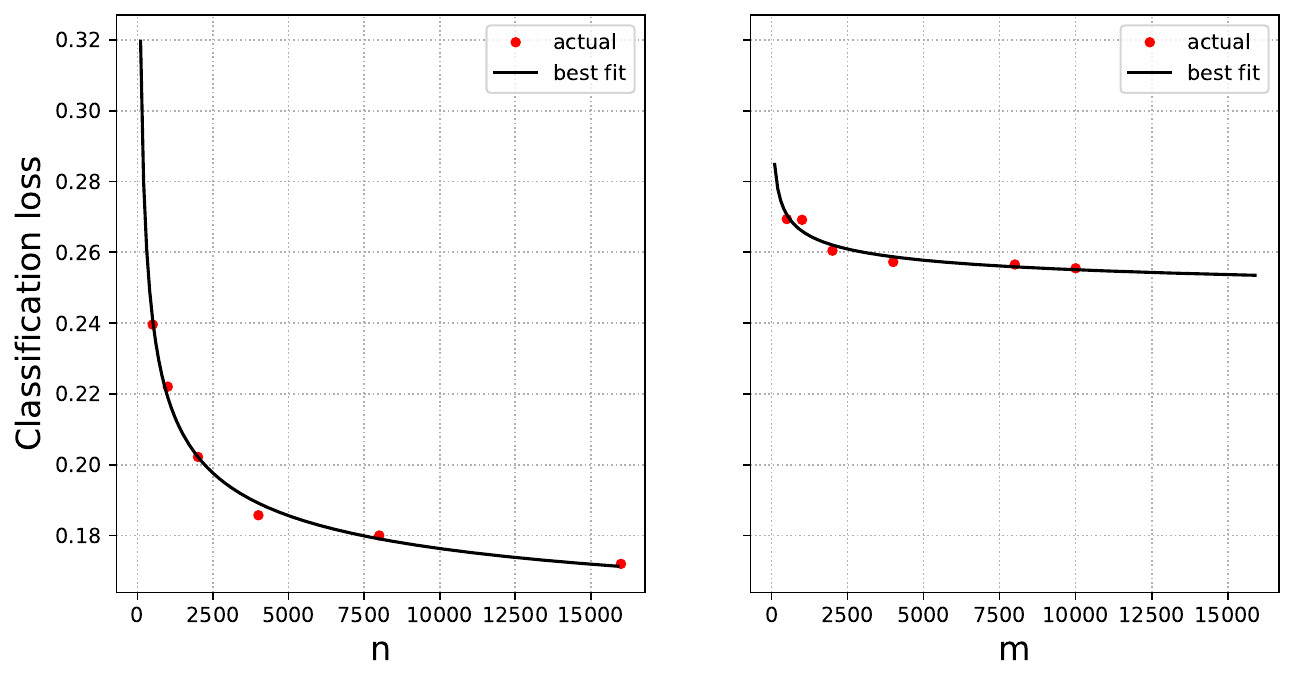}}

  \caption{IMDB and Rotten Tomatoes data and neural networks. Scaling law fits for models trained on
  original (left plot) and surrogate (right plot)  data only (red dots)(as 
  in Fig.~\ref{fig:rottenBetaFitLogReg}), together 
  with scaling law fits (black lines). Best fit parameters are $\beta=0.37$, $R_*=0.145$ and
  $R_{\synth}^{\exc}(\infty)=0.095$.}
  \label{fig:rottenBetaFitNN} 
  \end{center}
\vskip -0.2in
\end{figure*}

To convert the movie reviews and book reviews to vectors, we use a combination of two different
embedding: We use all the reviews in the training data and then use nltk
tagger~\cite{bird2006nltk} to find the most frequent 500 adjectives appearing in
the samples used for training. Then we use the common Tfidf vectorizer (we used scikit-learn's implementation of tfidf vectorizer)
%~\cite{scikit-learn-tfidfvectorizer}\am{Add citation?}
for which we use the list of these most common 500 adjectives as vocabulary.
This gives us a vector of length 500 dimension for each review. In addition, we
also apply ``Paraphrase-MiniLM-L6-v2'' sentence transformer which is based on
BERT with 6 Transformer Encoder Layers, and return a 384 dimension vector
representation of the reviews. For each movie review we concatenate the results
of tfidf vectorizer and sentence transformer to get a 884 dimensional
representation that we use as our input vector. 

 We use logistic regression and
neural networks with the same set of parameters as in the Gaussian mixture
experiments (except for the input dimension). We plot the average loss over 10 independent runs.

Results omitted from the main text are presented in Figures
\ref{fig:rottenBetaFitLogReg}--\ref{fig:rottenBetaFitNN}.

%
%***********************************************
%
\subsection{Image classification with CIFAR10 and CIFAR100}
\label{app:cifar}

\begin{figure*}
  \vskip 0.2in
\begin{center}{\includegraphics[scale=0.5]{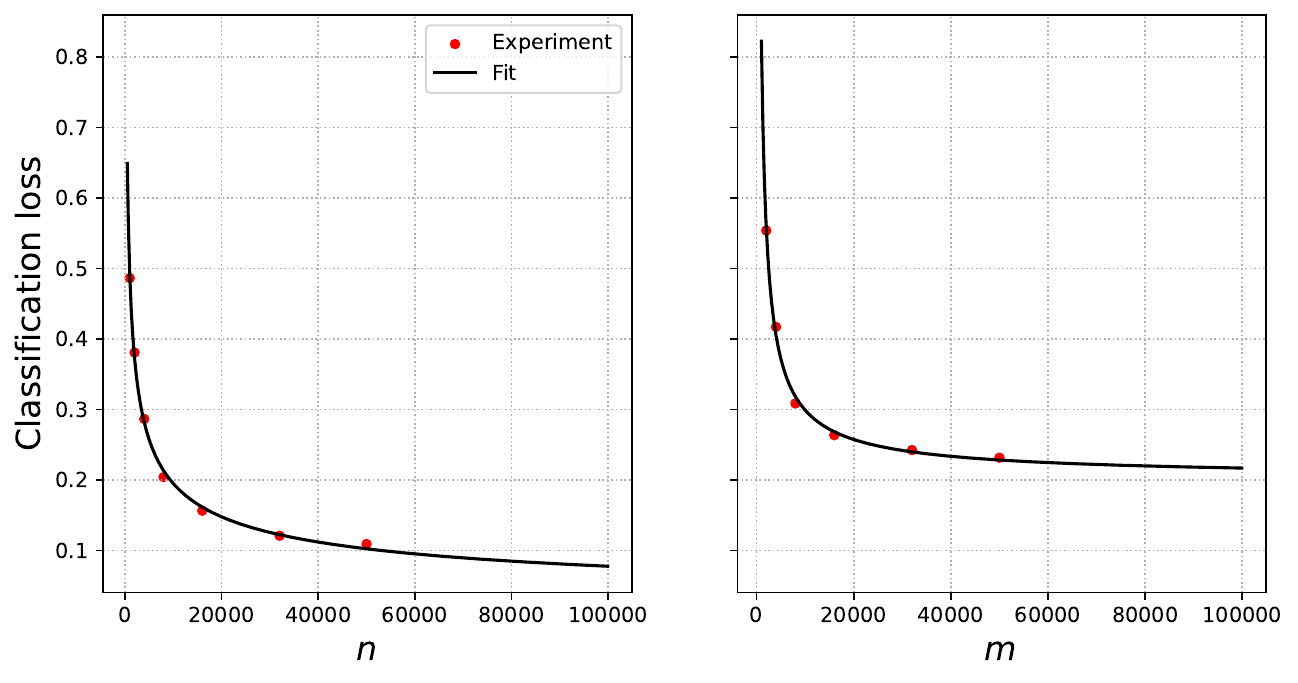}}
  \caption{CIFAR10 and CIFAR100 data: (left) Test error scaling of original data
  (left) and surrogate data (right). Best fit parameters are 
  $\beta=0.404$, $R_*=0.0013$, and  $R_{\synth}^{\exc}(\infty)=0.199$.} \label{fig:cifarBetaFit}
  \end{center}
\vskip -0.2in
\end{figure*}

We largely use the model and the
training procedure described at https://jovian.ml/aakashns/05b-cifar10-resnet.
We normalize the images for mean and standard deviation.  We train a 9-layer
ResNet model for classification, using Adam for optimization, weight decay, and
gradient clipping, trained over 16 epochs with a one-cycle learning rate
scheduling policy, minimizing cross entropy loss. For each combination of~$m$,
$n$,~and~$\alpha$, we report the average test error over~$10$ runs.  Since there is no overlap between the label sets of
CIFAR10 and CIFAR100, the latter dataset needs to be relabeled.  We do this by
training a separate 9-layer ResNet model on 10,000 randomly chosen CIFAR10
images from the training set of 50,000 examples (without creating a separate split for them), and use its predictions on CIFAR100 images as labels.    

Scaling curves are presented in Figure \ref{fig:cifarBetaFit} and \ref{fig:cifar}.

%
%***********************************************
%

\subsection{Lasso on TCGA PanCancer dataset}
\label{app:cancer}

We used public domain TCGA pancancer dataset.
After, filtering samples with incomplete values we are left with 9220 patients, each having 20,531  gene expression values and the outcome was PFS (progression-free survival). Out of these we used a group of 2000 patients, splitted into train and test set of 1000 each to select 500 genes having the largest absolute Cox PH score. We also used the mean and standard deviation of gene expression values of these 2000 patients to normalize the gene expression columns for the remaining 7220 patients.
Among the remaining of 7220 patients 3580 were females. We treated the female patients data as original data, and split them into train (50$\%$), test (25$\%$) and validation split (25$\%$). The remaining 3640 patients data was used as surrogate dataset. We fit CoxPHFitter model (\href{https://lifelines.readthedocs.io/en/latest/fitters/regression/CoxPHFitter.html}{link}) with 500 selected genes and use ``1-concordance score'' as our loss function. We used the validation split to choose best value of $\ell_1$ penalty parameter from $2^i,i= 2, 0, -2, -4, -6, -8, -10, -12, -14, -16$ in the model. 
We observed discontinuity at $\alpha=1$. To avoid this discontinuity, we approximated $R(\hbtheta_{n,m}(1))$ by $R(\hbtheta_{n,m}(1-\epsilon))$ if $n>0$ and by $R(\hbtheta_{m/2,m}(1-\epsilon))$ if $n=0$, where we choose $\epsilon=0.05$.  We plot the average loss over 10 independent runs.
The results are presented in Figures~\ref{fig:genomicBetaFit} and~\ref{fig:genomic_comb}.

\begin{figure*} 
\vskip 0.2in
\begin{center}{\includegraphics[scale=0.5]{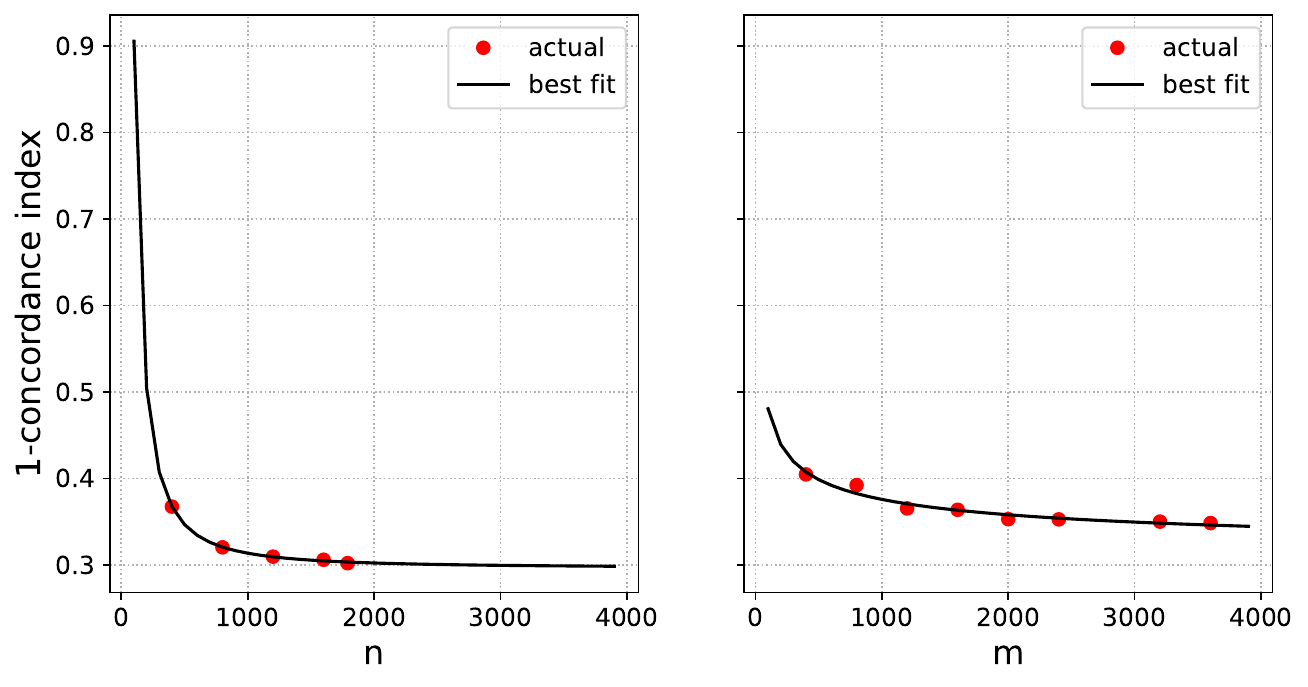}}

  \caption{Lasso-based Cox regression on TCGA PanCancer dataset with female patients data as original data and male patients data as surrogate data. Scaling law fits for models trained on
  original (left plot) and surrogate (right plot)  data only (red dots)(as 
  in Fig.~\ref{fig:rottenBetaFitLogReg}.) Best fit parameters are $\beta=1.55$, $R_*= 0.29$ and
  $R_{\synth}^{\exc}(\infty)= 0.29$.}
  \label{fig:genomicBetaFit} 
  \end{center}
\vskip -0.2in
\end{figure*}

\subsection{High-dimensional ridge regression}\label{app:ridgeexp}
We present additional ridge regression experiments here in Figs. \ref{fig:linRegrBetaFitPI2SmallReg}--\ref{fig:linRegrAlphaPI2SmallsmallReg}.
We plot the average loss over 10 independent runs.
In these experiments, as in the main paper, we set $d=500$, $\sigma^2=\sigma_s^2=1$, $\|\btheta_*\|=1$, $\|\btheta_{*,s}\|= 1$, except for the last four Figs.~\ref{fig:linRegrBetaFitPI2smallBestReg}--\ref{fig:linRegrAlphaPI2SmallsmallReg}, where we use $\|\btheta_{*,s}\|= 1/2$. We used angle $\gamma=\pi/6$ and $\pi/2$ in our experiments.

We consider two methods: $(1)$~Fix
$\lambda$ to a very small value $2^{-10}$, and $(2)$~For each random draw of
datasets select $\lambda$ that achieves the best validation performance. For the latter method, we try $\lambda=2^i$, where $i=-10,-8,-6,\ldots,8,10$.
For ridge regression simulations, we directly plot the excess test risks, as the parameter $\theta$ for original data is known and for any $\hat \theta$ the excess test risk in this model is $\|\theta-\hat \theta\|^2$.

\begin{figure*}
  \vskip 0.2in
\begin{center}{\includegraphics[scale=0.5]{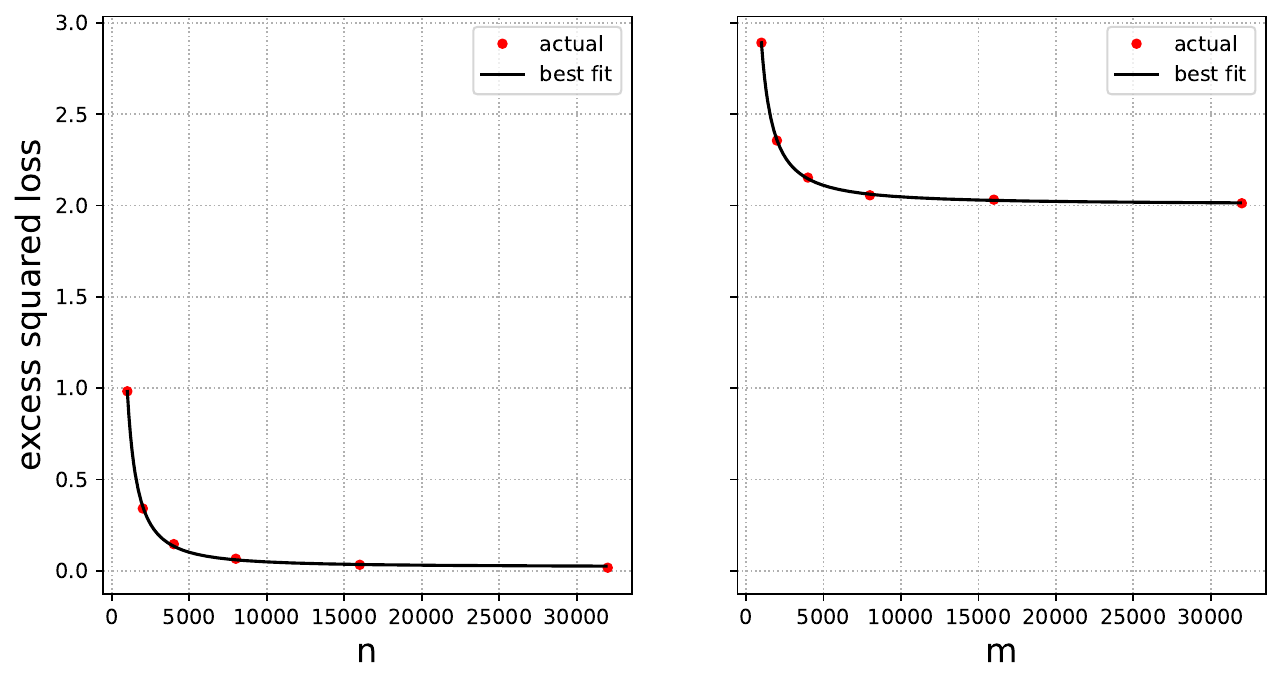}}
  \caption{Ridge regression with $\gamma = \pi/2$, and regularization parameter $\lambda=2^{-10}$:  Test error scaling of 
  the original data (left), and surrogate data (right). Best curve fits give
  the estimates $\beta=1.57$ and $R_{\synth}^{\exc}(\infty)=2.0$} \label{fig:linRegrBetaFitPI2SmallReg}
\end{center}
\vskip -0.2in
\end{figure*}

\begin{figure*}
  \vskip 0.2in
\begin{center}{\includegraphics[scale=0.4]{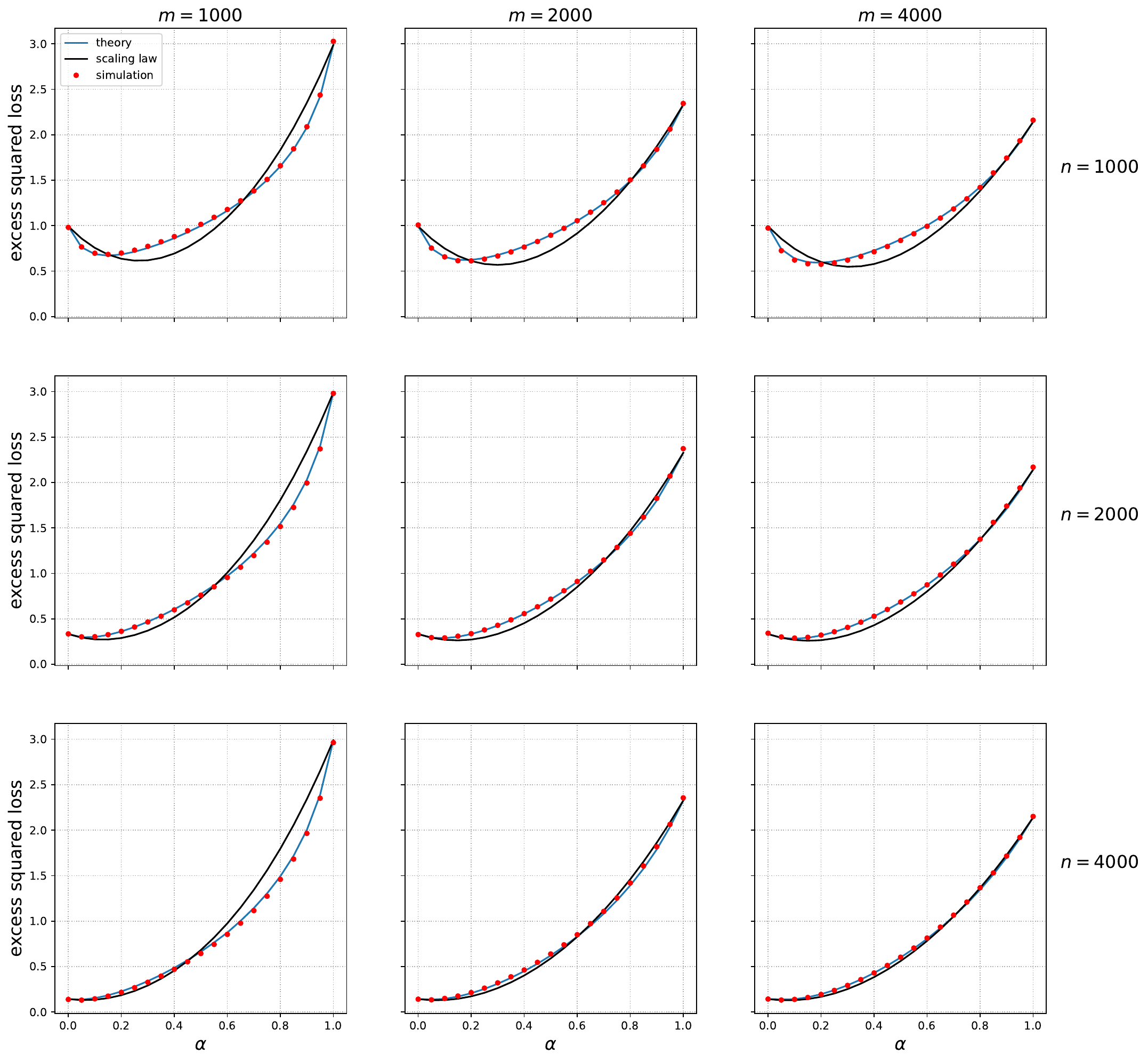}}
  \caption{Ridge regression with $\gamma=\pi/2$, and regularization parameter $\lambda=2^{-10}$}
  \label{fig:linRegrAlphaPI2SmallReg}
\end{center}
\vskip -0.2in
\end{figure*}

\begin{figure*}
  \vskip 0.2in
\begin{center}{\includegraphics[scale=0.5]{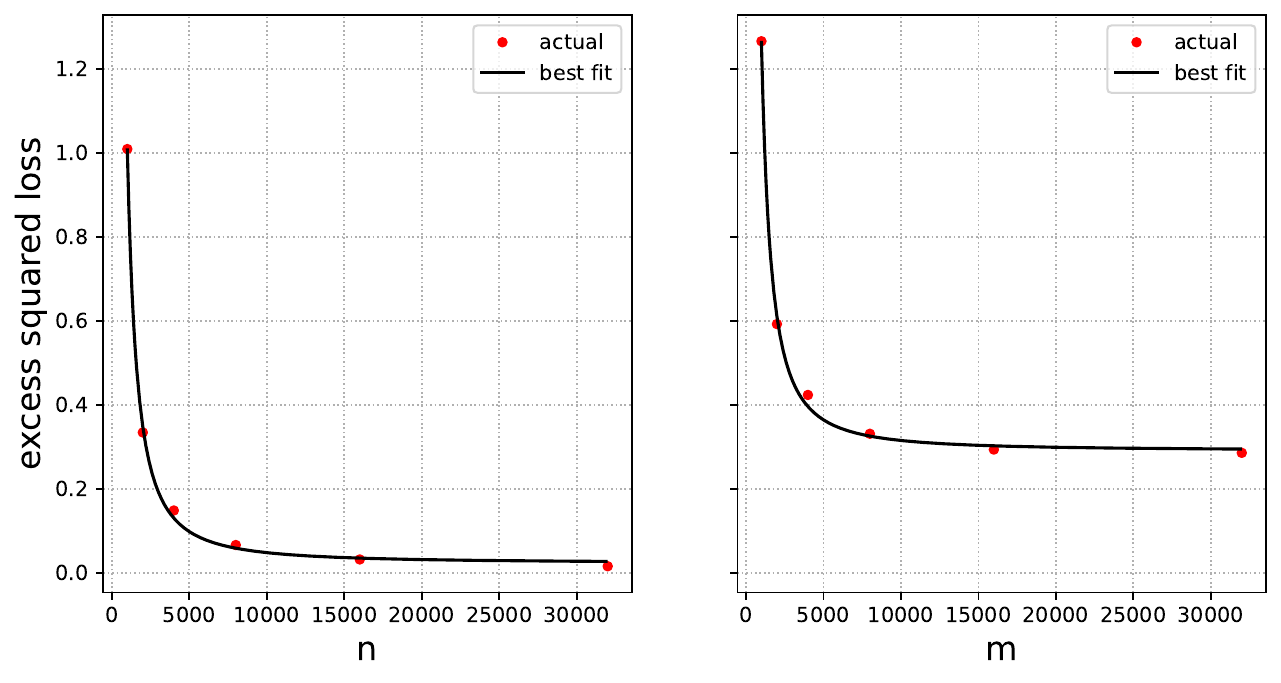}}

  \caption{Ridge regression with $\pi/6$ between $\theta$~and~$\theta_s$, and regularization parameter $\lambda=2^{-10}$: Test error scaling of 
  the original data (left), and surrogate data (right).  Best curve fits give
  the estimates $\beta=1.57$ and $R_{\synth}^{\exc}(\infty)=0.29$} \label{fig:linRegrBetaFitPI6SmallReg}
\end{center}
\vskip -0.2in
\end{figure*}

\begin{figure*}
 \vskip 0.2in
\begin{center}{\includegraphics[scale=0.4]{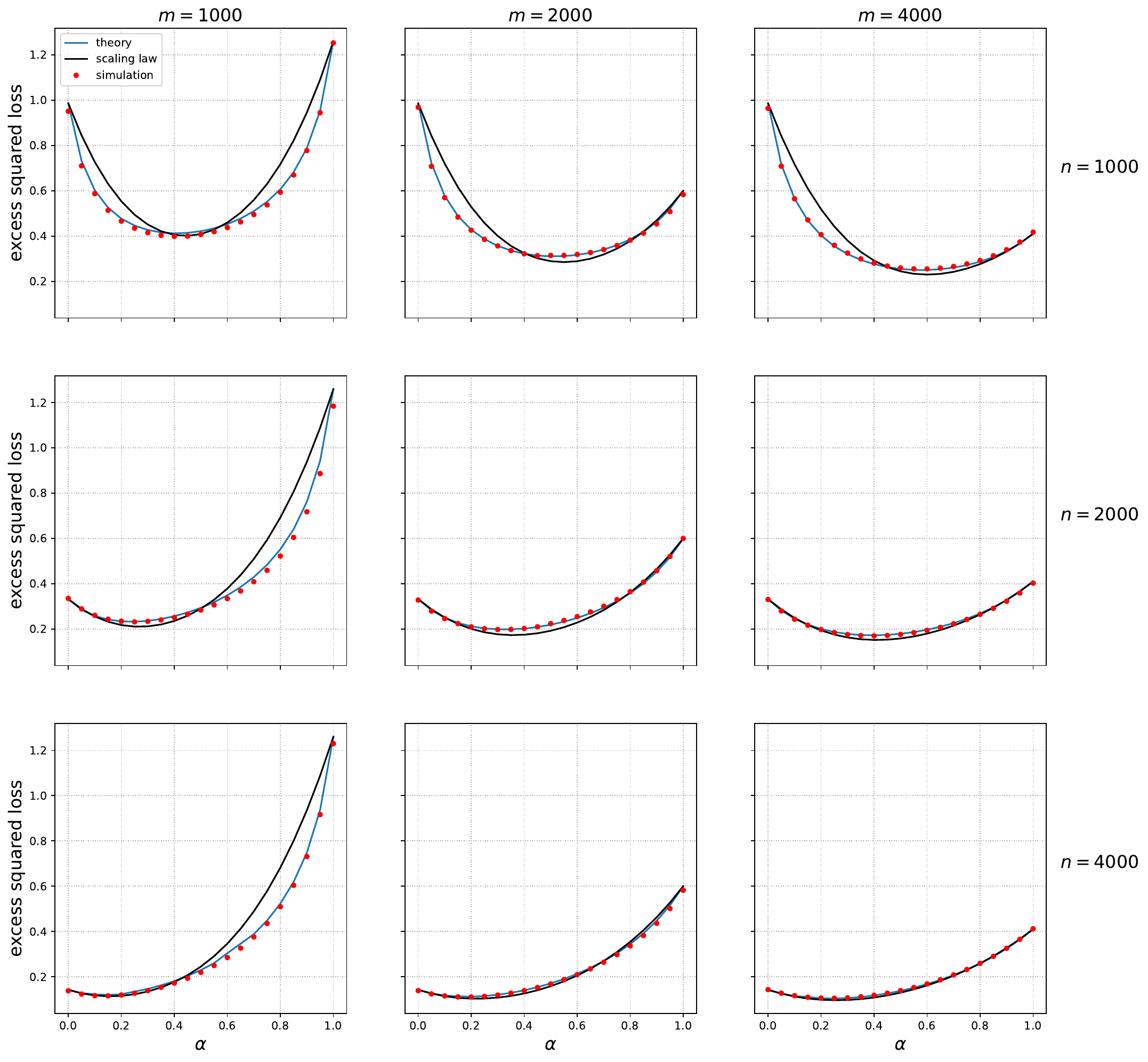}}
  \caption{Ridge regression with $\pi/6$ between $\theta$~and~$\theta_s$, and regularization parameter $\lambda=2^{-10}$}
  \label{fig:linRegrAlphaPI6SmallReg}
\end{center}
\vskip -0.2in
\end{figure*}

\begin{figure*}
   \vskip 0.2in
\begin{center}{\includegraphics[scale=0.5]{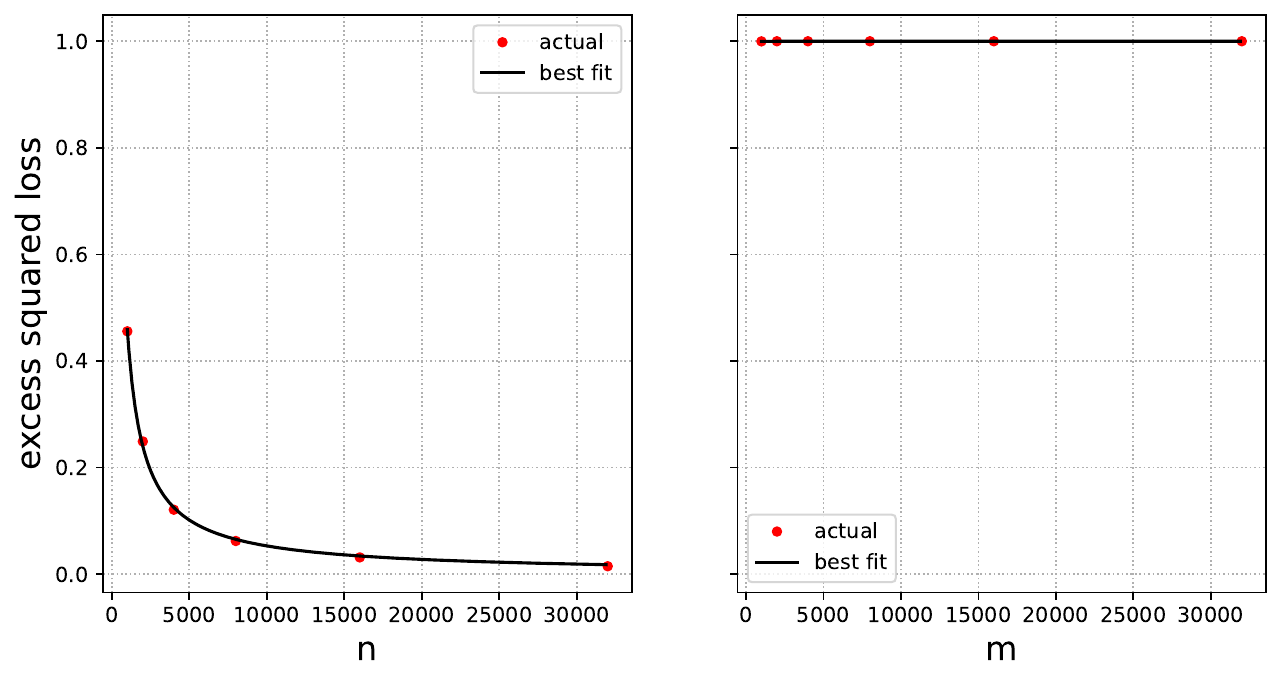}}
  \caption{Ridge regression with $\pi/2$ between $\theta$~and~$\theta_s$, and the best regularization parameter: Test error scaling of 
  the original data (left), and surrogate data (right).  Best curve fits give
  the estimates $\beta=0.94$ and $R_{\synth}^{\exc}(\infty)=1.0$} \label{fig:linRegrBetaFitPI2BestReg}
\end{center}
\vskip -0.2in
\end{figure*}

\begin{figure*}
   \vskip 0.2in
\begin{center}{\includegraphics[scale=0.4]{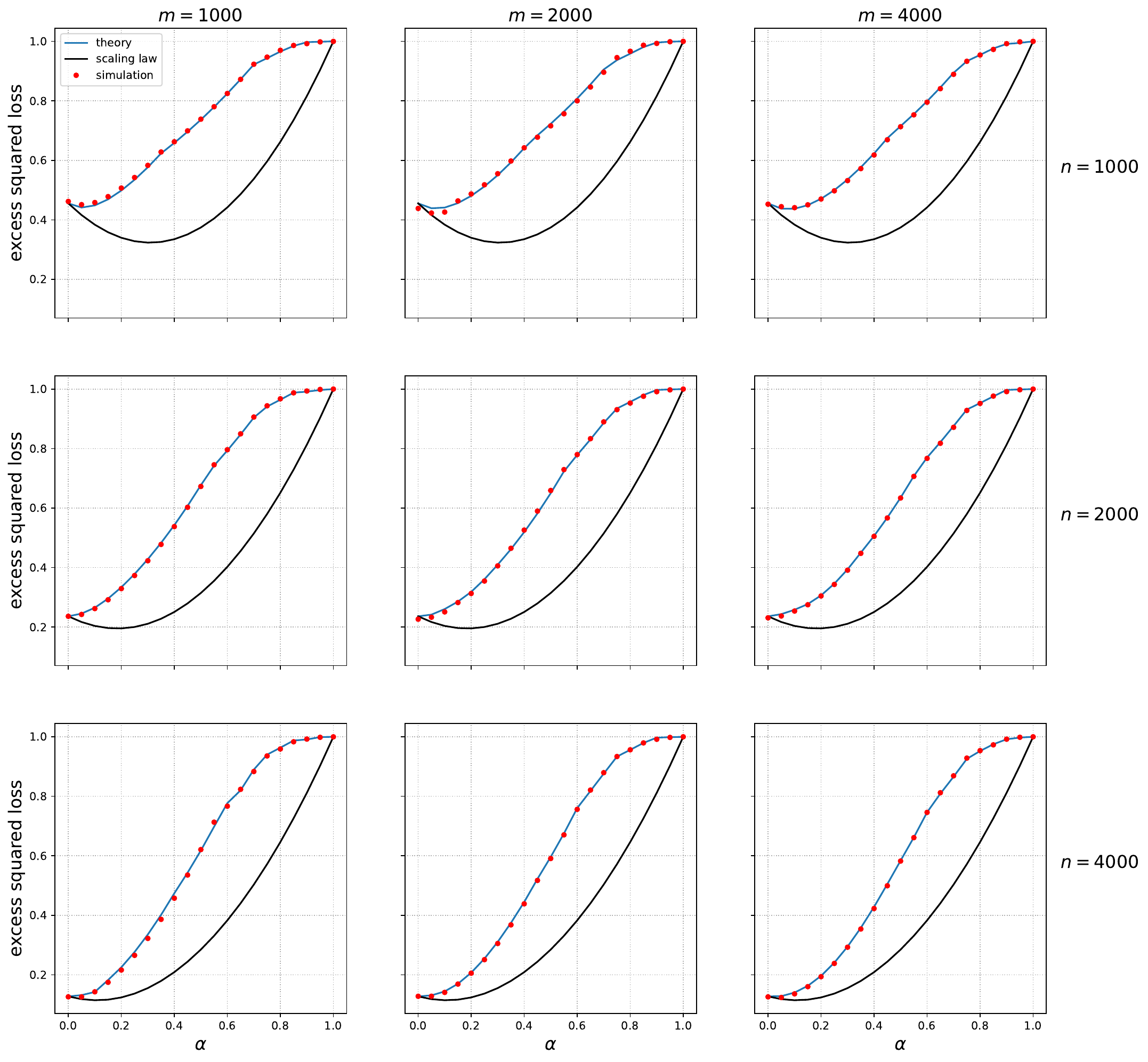}}
  \caption{Ridge regression with $\gamma = \pi/2$, and the best regularization parameter}
  \label{fig:linRegrAlphaPI2BestReg}
\end{center}
\vskip -0.2in
\end{figure*}

\begin{figure*}
  \vskip 0.2in
\begin{center}{\includegraphics[scale=0.5]{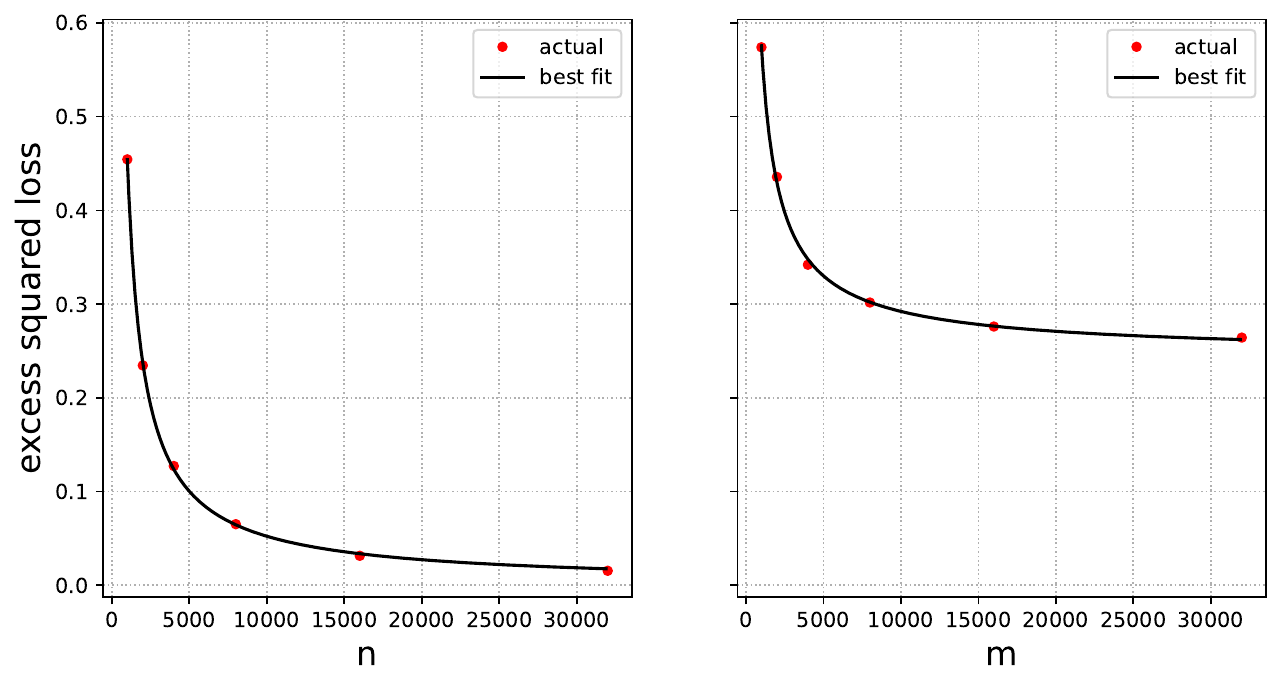}}
  \caption{Ridge regression with $\pi/6$ between $\theta$~and~$\theta_s$, and
  the best regularization parameter: Test error scaling of the original data
  (left), and surrogate data (right).  Best curve fits give the estimates
  $\beta=0.94$ and $R_{\synth}^{\exc}(\infty)=0.24$} 
  \label{fig:linRegrBetaFitPI6BestReg}
  \end{center}
\vskip -0.2in
\end{figure*}

\begin{figure*}
  \vskip 0.2in
\begin{center}{\includegraphics[scale=0.4]{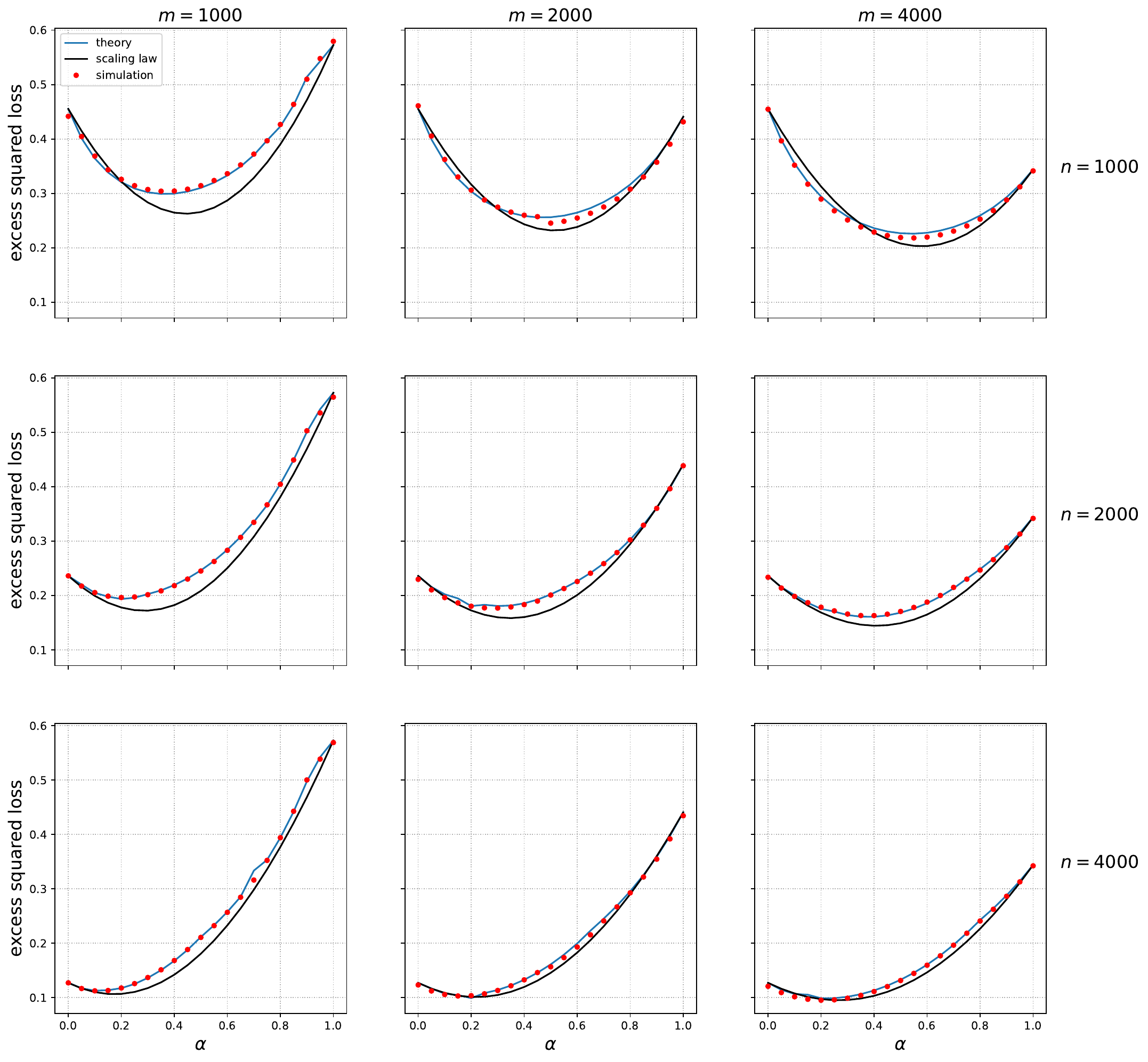}}
  \caption{Ridge regression with $\pi/6$ between $\theta$~and~$\theta_s$, and the best regularization parameter}
  \label{fig:linRegrAlphaPI6BestReg}
\end{center}
\vskip -0.2in
\end{figure*}

%\textcolor{}{If we include the plot for $\theta_s=1/2$, we need to change the text in the ridge regression section.}
\begin{figure*}
   \vskip 0.2in
\begin{center}{\includegraphics[scale=0.5]{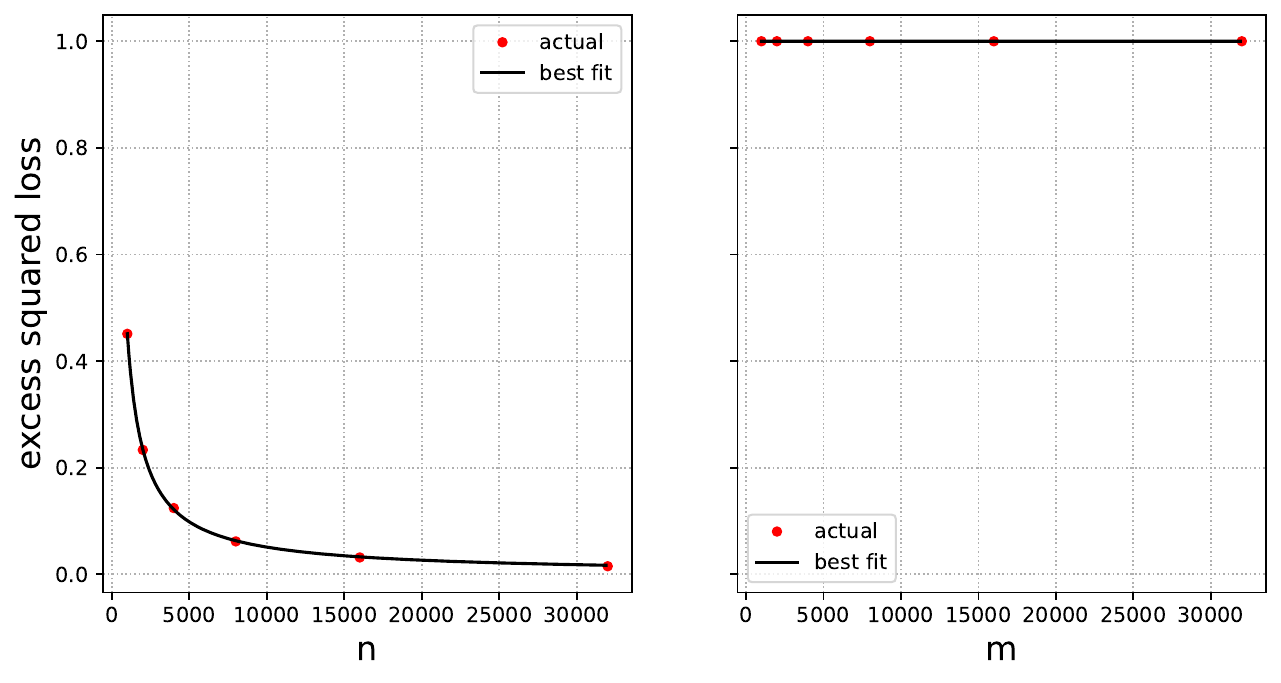}}
  \caption{Ridge regression with $\pi/2$ between $\theta$~and~$\theta_s$, $\|\theta\|=1$, $\|\theta_s\|=1/2$ and
  the best regularization parameter: Test error scaling of the original data
  (left), and surrogate data (right).  Best curve fits give the estimates
  $\beta=0.94$ and $R_{\synth}^{\exc}(\infty)=1.00$ 
  } 
  \label{fig:linRegrBetaFitPI2smallBestReg}
 \end{center}
\vskip -0.2in 
\end{figure*}

\begin{figure*}
   \vskip 0.2in
\begin{center}{\includegraphics[scale=0.4]{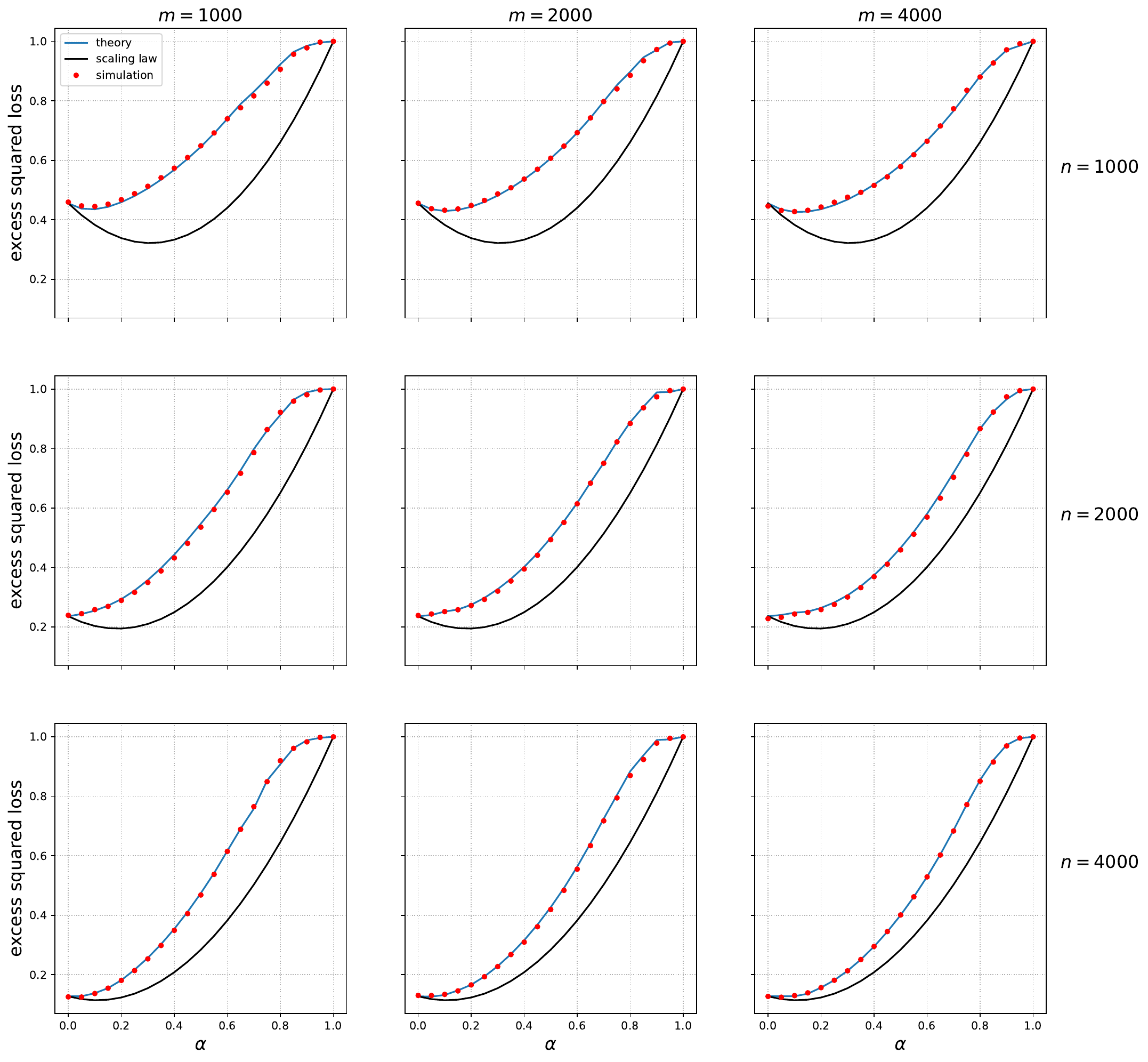}}
  \caption{Ridge regression with $\pi/2$ between $\theta$~and~$\theta_s$, $\|\theta\|=1$, $\|\theta_s\|=1/2$ and the best regularization parameter 
  }
  \label{fig:linRegrAlphaPI2smallBestReg}
\end{center}
\vskip -0.2in 
\end{figure*}

\begin{figure*}
   \vskip 0.2in
\begin{center}{\includegraphics[scale=0.5]{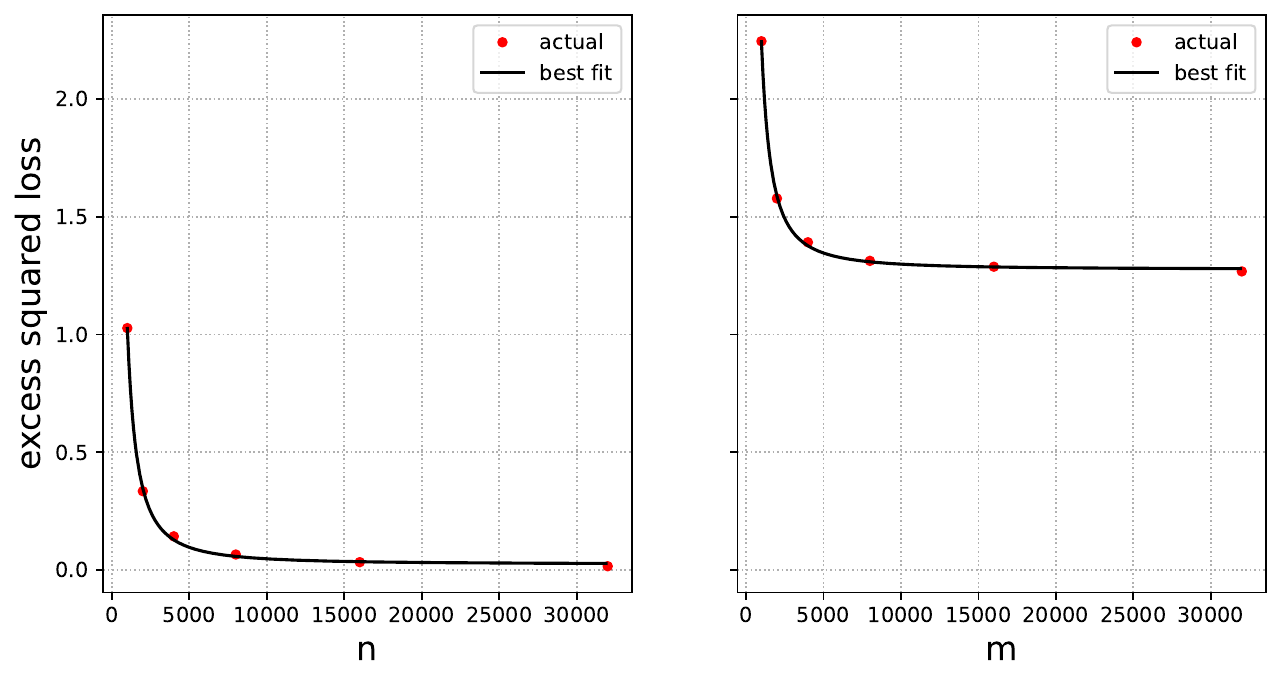}}
  \caption{Ridge regression with $\gamma = \pi/2$, $\|\theta\|=1$, $\|\theta_s\|=1/2$, and regularization parameter $\lambda=2^{-10}$:  Test error scaling of 
  the original data (left), and surrogate data (right).  Best curve fits give
  the estimates $\beta=1.57$ and $R_{\synth}^{\exc}(\infty)=1.27$  
  } \label{fig:linRegrBetaFitPI2SmallsmallReg}
\end{center}
\vskip -0.2in 
\end{figure*}

\begin{figure*}
   \vskip 0.2in
\begin{center}{\includegraphics[scale=0.4]{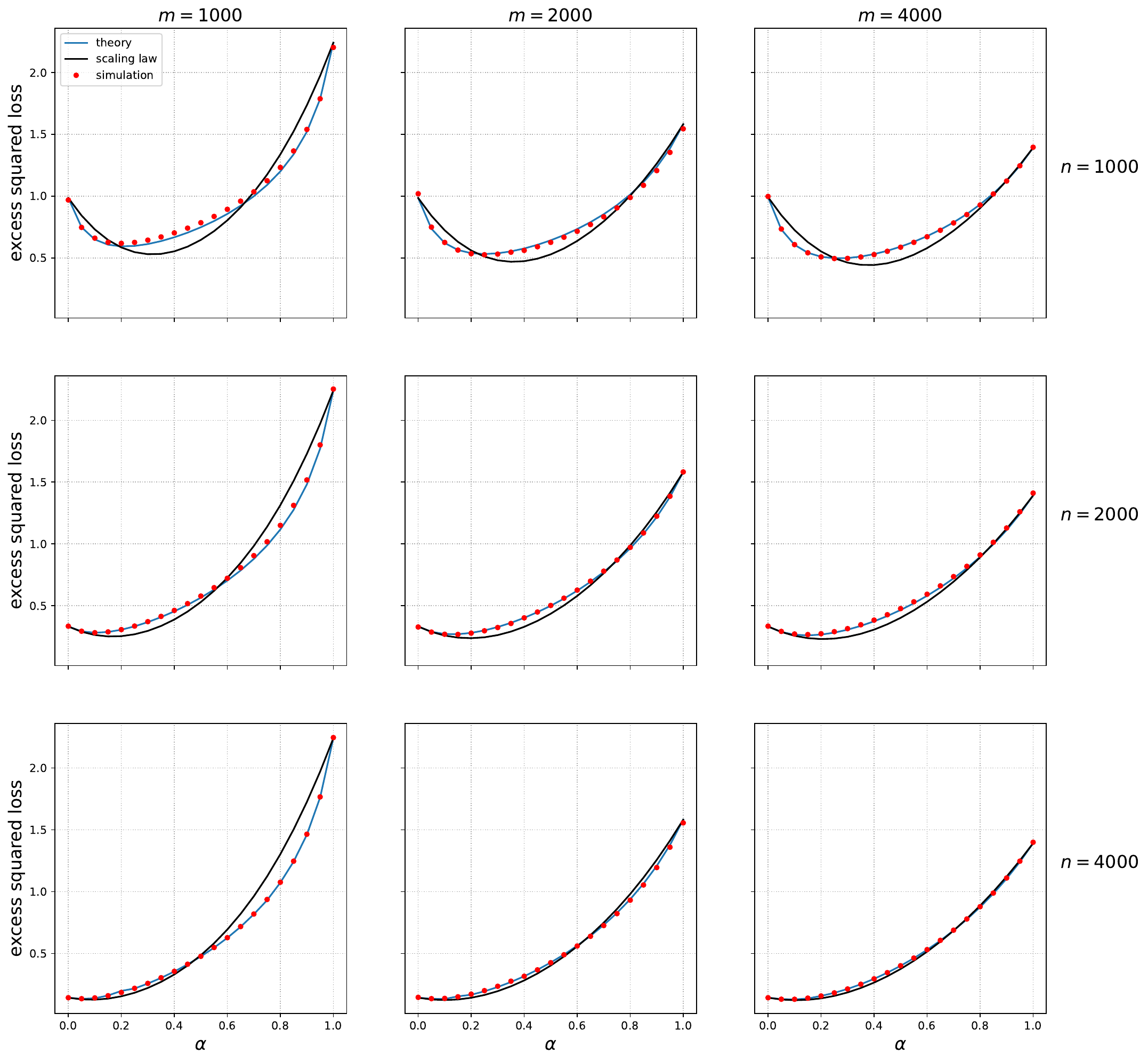}}
  \caption{Ridge regression with $\gamma=\pi/2$, $\|\theta\|=1$, $\|\theta_s\|=1/2$, and regularization parameter $\lambda=2^{-10}$ 
  }
  \label{fig:linRegrAlphaPI2SmallsmallReg}
\end{center}
\vskip -0.2in 
\end{figure*}

%
%*************************************************
%
\section{Low-dimensional asymptotics}
\label{app:LowDim}

\subsection{Formal statements}
In this appendix, we present our results on the estimator of Eqs.~\eqref{eq:MixedObj},
\eqref{eq:ThetaMN} under the classical asymptotics $n,m\to\infty$ at $d$ fixed.
For simplicity, we assume no regularizer is used in this
regime.

Beyond classical regularity assumptions of low-dimensional asymptotics, 
in this section we will make the following assumption
which guarantees that original and surrogate distribution are `not arbitrarily far.'
Recall that $R^s(\btheta)$ denotes the population error on surrogate data.
\begin{assumption}[Distribution shift for low-$d$ asymptotics]\label{ass:ClosenessClassic}
There exists a constant $K_*$ such that for all $\btheta\in\reals^d$,
\begin{align}
\big|R^s(\btheta)-R(\btheta)\big|\le K_*\big(1+R(\btheta)\big)\, .
\end{align}
\end{assumption}
The regularity conditions are similar to the 
ones in \cite{van2000asymptotic}.
Here and in the following
$ \Ball(\btheta_*,r)$ is the ball of radius $r$ centered at $\btheta_*$.
\begin{assumption}[`Classical' regularity]\label{ass:Regularity}
\phantom{x}
\begin{enumerate}
\item [$(a)$] The original population risk $R(\btheta)$ is uniquely minimized at a point $\btheta_*$. 
\item[$(b)$] $\btheta\mapsto \ell(\btheta;\bz)$ is non-negative lower semicontinuous.
Further, define the following limit in $[0,\infty]$ for $\bu\in\S^{d-1}$:
\begin{align}
\ell_{\infty}(\bu;\bz):=\liminf_{\substack{\btheta\to\infty \\ \btheta/\|\btheta\|_2\to\bu}} \ell(\btheta;\bz)\, .
\end{align}
Then we assume $\inf_{\bu\in\S^{d-1}}\E \ell_{\infty}(\bu;\bz) \ge R(\btheta_*)+c$ for some $c>0$.
\item[$(c)$]  $\btheta\mapsto \ell(\btheta;\bz)$ is differentiable at $\btheta_*$ almost surely,
both under $\bz\sim \P$ and  under $\bz\sim \P^s$. Further, there exists $r>0$
such that, letting $\Ball := \Ball(\btheta_*,r)$, the following holds for a constant $C$:
\begin{align}
\E \sup_{\btheta_1\neq \btheta_2\in\Ball}\Big\{\frac{|\ell(\btheta_1;\bz)-\ell(\btheta_2;\bz)|^2}{\|\btheta_1-\btheta_2\|^2_2}\Big\}\le C<\infty\, .
\end{align}
\item[$(d)$] The functions $\btheta\mapsto R(\btheta)$, $\btheta\mapsto R^s(\btheta)$, are twice 
differentiable in a neighborhood of $\btheta_*$, with Lipschitz continuous Hessian. Further $\nabla^2 R(\btheta_*) \succ \bfzero$ 
(strictly positive definite).
\end{enumerate}
\end{assumption}
\begin{proposition}\label{prop:LowDim}
Under Assumption \ref{ass:ClosenessClassic} and Assumption \ref{ass:Regularity}, 
define the following $d\times d$ matrices 
\begin{align}
\bH&:=\nabla^2 R(\btheta_*) =  \E[\nabla^2\ell(\btheta_*;\bz)]\, ,\\
\bK&:= \Cov\big(\nabla\ell(\btheta_*;\bz);\nabla\ell(\btheta_*;\bz)\big)\, ,\;\;\;\;\;\\
\bK_s &:= \Cov_s\big(\nabla\ell(\btheta_*;\bz^s);\nabla\ell(\btheta_*;\bz^s)\big)\, ,
\end{align}
where $\Cov$, $\Cov_s$ denote the covariances, respectively, with respect to
the original data (i.e., with respect to $\bz\sim \P$),  and with respect to
the surrogate data (i.e., with respect to $\bz^s\sim\P_s$). Further define the $d$-dimensional vector
\begin{align}
\bg^s := \nabla R^s(\btheta_*)-\nabla R(\btheta_*)\, .
\end{align}
Then there exists $\alpha_{\max}\in (0,1]$ (depending only on the constants in the assumptions) such that, 
for all $\alpha\in [0,\alpha_{\max}]$, the excess risk of the estimator $\hbtheta_{n,m}(\alpha)$
satisfies (for $D:=\|\bg^s\|$ bounded by a constant)
\begin{align}
R\big(\hbtheta_{n,m}(\alpha)\big)-R\big(\btheta_*\big) &=\; 
\alpha^2\<\bg^s, \bH^{-1}\bg^s\> + \frac{(1-\alpha)^2}{n}\cdot \Tr\big(\bH^{-1}\bK\big)\label{eq:LowDim}\\
&+\frac{\alpha^2}{m}
\cdot \Tr\big(\bH^{-1}\bK_s\big)+O\Big(\Big(\frac{1}{m\vee n}+D\alpha^2\Big)
\Big(\frac{1}{(m\vee n)^{1/2}}+D\alpha\Big)\Big) \, .\nonumber
\end{align}
(Here the big~$O$ hides dependence on the constants in 
 Assumptions \ref{ass:ClosenessClassic} and \ref{ass:Regularity}.)
\end{proposition}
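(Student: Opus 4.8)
The plan is to analyze $\hbtheta_{n,m}(\alpha)$ as a (regularizer-free) M-estimator for the population ``mixed risk'' $R_\alpha(\btheta):=(1-\alpha)R(\btheta)+\alpha R^s(\btheta)=\E[\hR_{n,m}(\btheta;\alpha)]$ and to expand its excess risk about $\btheta_*$. Write $\btheta_\alpha:=\argmin_\btheta R_\alpha(\btheta)$. First I would record the deterministic facts about $\btheta_\alpha$: since $\nabla R(\btheta_*)=\bfzero$ we have $\nabla R_\alpha(\btheta_*)=\alpha\bg^s$, so combining the stationarity $\nabla R_\alpha(\btheta_\alpha)=\bfzero$ with the Lipschitz-Hessian condition of Assumption~\ref{ass:Regularity}$(d)$ and the implicit function theorem gives $\btheta_\alpha-\btheta_*=-\alpha\bH_\alpha^{-1}\bg^s+O(\alpha^2 D^2)$, where $\bH_\alpha:=(1-\alpha)\bH+\alpha\nabla^2R^s(\btheta_*)=\bH+O(\alpha)$; hence $\btheta_\alpha-\btheta_*=-\alpha\bH^{-1}\bg^s+O(\alpha^2)$ and $\norm{\btheta_\alpha-\btheta_*}=O(\alpha D)$. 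This determines $\alpha_{\max}$: it is taken small enough that $\bH_\alpha\succeq\tfrac12\lambda_{\min}(\bH)\,\id$ and, using Assumption~\ref{ass:ClosenessClassic} (which bounds $R^s$, and in particular $R^s(\btheta_*)$, by a multiple of $R$ plus a constant), $R_\alpha$ stays strictly above $R_\alpha(\btheta_\alpha)$ outside a fixed compact neighborhood of $\btheta_*$, so $\btheta_\alpha$ is its unique global minimizer.

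Next I would establish localization: $\hbtheta_{n,m}(\alpha)\to\btheta_\alpha$ in probability with $\norm{\hbtheta_{n,m}(\alpha)-\btheta_\alpha}=O_P((n\wedge m)^{-1/2})$. This is the classical argmin-consistency argument, using a uniform law of large numbers on compacts (justified by the $L^2$-Lipschitz bound of Assumption~\ref{ass:Regularity}$(c)$ and the lower semicontinuity in Assumption~\ref{ass:Regularity}$(b)$), the ``no escape to infinity'' bound supplied by $\inf_{\bu\in\S^{d-1}}\E\ell_\infty(\bu;\bz)>R(\btheta_*)+c$ together with nonnegativity of the surrogate term and $\alpha\le\alpha_{\max}$, and the identifiability of $\btheta_\alpha$ established above.

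I would then linearize the stationarity condition $\nabla\hR_{n,m}(\hbtheta_{n,m}(\alpha);\alpha)=\bfzero$. Since $\ell(\cdot;\bz)$ is not assumed twice differentiable, rather than expanding the empirical gradient I would invoke the stochastic-equicontinuity argument for asymptotically linear M-estimators (in the style of \cite{van2000asymptotic}): the $L^2$-Lipschitz bound of Assumption~\ref{ass:Regularity}$(c)$ makes the centered empirical gradient process, and its surrogate counterpart, stochastically equicontinuous near $\btheta_*$, so that with $G:=\nabla\hR_{n,m}(\btheta_\alpha;\alpha)$,
\[
\bfzero=G+\bH_\alpha\big(\hbtheta_{n,m}(\alpha)-\btheta_\alpha\big)+o_P\!\big(\norm{\hbtheta_{n,m}(\alpha)-\btheta_\alpha}\big)+O\!\big(\norm{\hbtheta_{n,m}(\alpha)-\btheta_\alpha}^2\big),
\]
the quadratic term coming from the Lipschitz Hessian of Assumption~\ref{ass:Regularity}$(d)$; thus $\hbtheta_{n,m}(\alpha)-\btheta_\alpha=-\bH_\alpha^{-1}G+(\text{lower order})$. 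Now $\E[G]=\nabla R_\alpha(\btheta_\alpha)=\bfzero$, and by independence of the two datasets $\Cov(G)=\tfrac{(1-\alpha)^2}{n}\Cov(\nabla\ell(\btheta_\alpha;\bz))+\tfrac{\alpha^2}{m}\Cov_s(\nabla\ell(\btheta_\alpha;\bz^s))$; continuity of these covariances near $\btheta_*$ (again from Assumption~\ref{ass:Regularity}$(c)$) with $\norm{\btheta_\alpha-\btheta_*}=O(\alpha D)$ lets me replace $\btheta_\alpha$ by $\btheta_*$, giving $\Cov(G)=\tfrac{(1-\alpha)^2}{n}\bK+\tfrac{\alpha^2}{m}\bK_s+O(D\alpha/(n\wedge m))$, while replacing $\bH_\alpha$ by $\bH$ costs $O(\alpha)$.

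Finally I would assemble the risk. Expanding $R$ to second order about $\btheta_*$ (Assumption~\ref{ass:Regularity}$(d)$) and writing $\hbtheta_{n,m}(\alpha)-\btheta_*=(\btheta_\alpha-\btheta_*)+(\hbtheta_{n,m}(\alpha)-\btheta_\alpha)$ --- a deterministic term plus one that is mean-zero to leading order --- the cross term drops upon taking expectations, leaving (up to the factor $\tfrac12$ from the Taylor expansion) a deterministic contribution $\alpha^2\langle\bg^s,\bH^{-1}\bg^s\rangle+O(D^2\alpha^3)$ coming from $\btheta_\alpha-\btheta_*=-\alpha\bH^{-1}\bg^s+O(\alpha^2)$, and an expected-stochastic contribution $\tfrac{(1-\alpha)^2}{n}\Tr(\bH^{-1}\bK)+\tfrac{\alpha^2}{m}\Tr(\bH^{-1}\bK_s)$ coming from $\Tr(\bH^{-1}\Cov(G))$, plus remainders. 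Collecting these remainders --- third-order Taylor, the covariance and Hessian replacements, and the higher-order bias correction --- and using that $D$ is bounded yields the expansion~\eqref{eq:LowDim}. The main obstacle is exactly this concluding accounting: the two-scale expansion must be made rigorous with second-moment (not merely first-order) control of $\hbtheta_{n,m}(\alpha)-\btheta_\alpha$, and every remainder has to be checked to be of strictly lower order than $\alpha^2\langle\bg^s,\bH^{-1}\bg^s\rangle+(1-\alpha)^2/n+\alpha^2/m$ for all $\alpha\le\alpha_{\max}$; the stochastic-equicontinuity step, which stands in for a second derivative of $\ell$ that need not exist, is the other delicate point.
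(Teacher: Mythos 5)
Your proposal is correct and follows essentially the same route as the paper: a consistency/localization lemma identifying the mixed-risk minimizer $\btheta_*(\alpha)$ for $\alpha\le\alpha_{\max}$, an asymptotic-linearity step in the style of Theorem~5.39 of \cite{van2000asymptotic} (with stochastic equicontinuity standing in for a second derivative of $\ell$), the expansion $\btheta_*(\alpha)-\btheta_*=-\alpha\bH^{-1}\bg^s+O(\alpha^2)$, and substitution into the second-order expansion of $R$ about $\btheta_*$. The remainder bookkeeping and second-moment control you flag as the delicate final step is exactly where the paper's own argument is also only sketched, so your outline matches its level of rigor.
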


\begin{remark}
  For economy of notation we stated Proposition~\ref{prop:LowDim} in the case in which the 
  excess risk is measured by using the same loss as for training, i.e. $\ell_{\stest}=\ell$. 
  However the same result Eq.~\eqref{eq:LowDim} applies with minor modifications to the case $\ell_{\stest}\neq\ell$
  (and thus, with $R$ replaced by $R^{\stest}$), provided $R^{\stest}$ is also twice differentiable
  with Lipschitz Hessian, and $\nabla R^{\stest}(\btheta_*)=\bfzero$.
  In this case, \eqref{eq:LowDim} has to be modified replacing $\bH^{-1}$ by 
  $\bH^{-1}\nabla^2R^{\stest}(\btheta_*)\bH^{-1}$.
\end{remark}

\begin{remark}
The error terms in Eq.~\eqref{eq:LowDim} are negligible under two conditions: 
$(i)$~$m$ and $n$ are large, which is the classical condition for low-dimensional asymptotics to hold; $(ii)$~$\|\bg^s\|_2=\|\nabla R^s(\btheta_*)\|_{2}\alpha$ is small. 
In particular, the latter condition will hold in two cases. \emph{First,} when
$\|\nabla R^s(\btheta_*)\|_{2}$ is of order one (i.e.\ the distribution shift is large), 
but $\alpha$ is small (surrogate data are downweighted). Note that, 
when the distribution shift is large, and the sample size $n$ is large enough,
we expect small $\alpha$ to be optimal and therefore Eq.~\eqref{eq:LowDim} covers the `interesting'
regime.

\emph{Second,} when $\|\nabla R^s(\btheta_*)\|_{2}$ is small (i.e.\ the shift is small) and $\alpha$ 
is of order one. If in addition we have $\nabla^2 R^s(\btheta_*) \approx \nabla^2 R^s(\btheta_*)$,
it can be shown that the range of validity of Eq.~\eqref{eq:LowDim} covers the whole interval 
$\alpha\in [0,1]$.
\end{remark}

\begin{remark}
Note that the distribution shift is measured in Eq.~\eqref{eq:LowDim} 
by the first term $\<\bg^s,\bH^{-1}\bg^s\>$. The original and surrogate distribution can be 
very different in other metrics (e.g. in total variation or transportation distance), but as  long
as $\bg^s$ is small (as measured in the norm defined by $\bH^{-1}$), surrogate data will reduce test error.
\end{remark}

Note that, within the setting of Proposition~\ref{prop:LowDim},
the excess error of training only on original data is
$R_{\orig}^{\exc}(n):=R(\hbtheta_{n,0}(0))-R(\btheta_*) =\Tr\big(\bH^{-1}\bK\big)/n +o(1/n)$,
while $R_{\synth}^{\exc}(m):=R(\hbtheta_{n,m}(0))-R(\btheta_*) =\<\bg^s, \bH^{-1}\bg^s\> + 
\Tr\big(\bH^{-1}\bK_s\big)/m +o(1/m)$.
Hence Eq.~\eqref{prop:LowDim} can be recast in the form of our general scaling law
\eqref{eq:FirstScaling}, namely:
\begin{align}
R(\hbtheta_{n,m}(\alpha))-R\big(\btheta_*\big)&\approx \alpha^2 R_{\synth}^{\exc}(\infty) + \Big[ \alpha^2 \big(R_{\synth}^{\exc}(m)-R_{\synth}^{\exc}(\infty)\big)+
(1-\alpha)^2 R_{\orig}^{\exc}(n)\Big]\, ,\nonumber
\end{align}
which (as expected) corresponds to the parametric scaling exponent $\beta=1$. 

An immediate consequence of Proposition~\ref{prop:LowDim}
is that surrogate data do not hurt, and will help if their distribution is close enough to the
original one (under the assumption of optimally chosen $\alpha$).
\begin{corollary}
Under the assumptions of Proposition~\ref{prop:LowDim}, let
$\overline{R}_{\orig}(n):=\Tr\big(\bH^{-1}\bK\big)/n$,
and $\overline{R}_{\synth}(m):=\<\bg^s, \bH^{-1}\bg^s\> + 
\Tr\big(\bH^{-1}\bK_s\big)/m$.
For $\alpha^*_{n,m} =  \overline{R}_{\orig}(n)/( \overline{R}_{\synth}(m)+\overline{R}_{\orig}(n))$,
we have
\begin{align*}
R\big(\hbtheta_{n,m}(\alpha^*_{n,m})\big)-R_* &=\; 
\big(\overline{R}_{\orig}(n)^{-1}+ \overline{R}_{\synth}(m)^{-1}\big)^{-1}+\Delta_{n,m},
\end{align*}
with $\Delta_{n,m}$  of the same order as the error in Prop.~\ref{prop:LowDim}.
\end{corollary}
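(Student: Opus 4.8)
The plan is to read the corollary off directly from the expansion \eqref{eq:LowDim} in Proposition~\ref{prop:LowDim}. First I would rewrite its leading (non-error) part using the definitions $\overline{R}_{\orig}(n)=\Tr(\bH^{-1}\bK)/n$ and $\overline{R}_{\synth}(m)=\langle\bg^s,\bH^{-1}\bg^s\rangle+\Tr(\bH^{-1}\bK_s)/m$. Since $\tfrac{(1-\alpha)^2}{n}\Tr(\bH^{-1}\bK)=(1-\alpha)^2\overline{R}_{\orig}(n)$ and $\alpha^2\langle\bg^s,\bH^{-1}\bg^s\rangle+\tfrac{\alpha^2}{m}\Tr(\bH^{-1}\bK_s)=\alpha^2\overline{R}_{\synth}(m)$, Proposition~\ref{prop:LowDim} takes the form
\begin{align*}
R\big(\hbtheta_{n,m}(\alpha)\big)-R(\btheta_*) = g(\alpha) + E_{n,m}(\alpha)\,,\qquad g(\alpha):=\alpha^2\overline{R}_{\synth}(m)+(1-\alpha)^2\overline{R}_{\orig}(n)\,,
\end{align*}
where $E_{n,m}(\alpha)$ denotes the $O(\cdot)$ remainder in \eqref{eq:LowDim}.

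Next I would minimize the scalar quadratic $g$. It is strictly convex (both coefficients are positive when $\bH\succ\bfzero$), and $g'(\alpha)=2\alpha\overline{R}_{\synth}(m)-2(1-\alpha)\overline{R}_{\orig}(n)$ vanishes exactly at $\alpha^*_{n,m}=\overline{R}_{\orig}(n)/(\overline{R}_{\synth}(m)+\overline{R}_{\orig}(n))\in(0,1)$, which is the value in the statement. Substituting $\alpha^*_{n,m}$ and $1-\alpha^*_{n,m}=\overline{R}_{\synth}(m)/(\overline{R}_{\synth}(m)+\overline{R}_{\orig}(n))$ into $g$ and simplifying gives
\begin{align*}
g(\alpha^*_{n,m})=\frac{\overline{R}_{\orig}(n)\,\overline{R}_{\synth}(m)}{\overline{R}_{\orig}(n)+\overline{R}_{\synth}(m)}=\big(\overline{R}_{\orig}(n)^{-1}+\overline{R}_{\synth}(m)^{-1}\big)^{-1}\,,
\end{align*}
which is precisely the leading term claimed in the corollary.

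Finally, setting $\Delta_{n,m}:=E_{n,m}(\alpha^*_{n,m})$, it remains only to justify that $\Delta_{n,m}$ is of the order asserted, i.e.\ the order of the error term in Proposition~\ref{prop:LowDim} at $\alpha=\alpha^*_{n,m}$. The one nonroutine point — the main obstacle — is that \eqref{eq:LowDim} is only guaranteed to hold for $\alpha\in[0,\alpha_{\max}]$ with $\alpha_{\max}$ an a priori constant in $(0,1]$, so one must check $\alpha^*_{n,m}\le\alpha_{\max}$ before invoking the expansion at $\alpha=\alpha^*_{n,m}$. Since $\overline{R}_{\orig}(n)\asymp 1/n$ while $\overline{R}_{\synth}(m)\ge\langle\bg^s,\bH^{-1}\bg^s\rangle\gtrsim D^2$ with $D=\|\bg^s\|$, we get $\alpha^*_{n,m}=O\big(1/(nD^2)\big)$, so admissibility is automatic for large $n$ whenever the shift $D$ is of order one; in the complementary small-shift regime one instead uses the fact (noted in the remark after Proposition~\ref{prop:LowDim}) that the expansion then holds on all of $[0,1]$, which again contains $\alpha^*_{n,m}$. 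Once this is settled, $\Delta_{n,m}$ is literally the $O(\cdot)$ term of \eqref{eq:LowDim} evaluated at $\alpha=\alpha^*_{n,m}$, and the remaining steps — the rewriting, the one-variable minimization, and the algebraic reduction to the harmonic-mean form — are elementary.
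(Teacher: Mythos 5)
Your proposal is correct and follows essentially the route the paper intends: the corollary is presented as an immediate consequence of Proposition~\ref{prop:LowDim}, obtained exactly as you do by rewriting Eq.~\eqref{eq:LowDim} as $\alpha^2\overline{R}_{\synth}(m)+(1-\alpha)^2\overline{R}_{\orig}(n)$ plus the remainder, minimizing this quadratic at $\alpha^*_{n,m}$, and simplifying to the harmonic-mean form, with $\Delta_{n,m}$ the remainder evaluated at $\alpha^*_{n,m}$. Your additional check that $\alpha^*_{n,m}\le\alpha_{\max}$ addresses a point the paper glosses over (it simply asserts in Section~\ref{sec:LowDim} that $[0,\alpha_{\max}]$ contains the optimal weight), and your case analysis for handling it is sound.
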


\subsection{Proofs}

\begin{lemma}\label{lemma:Consistency}
    Under the assumptions of Proposition~\ref{prop:LowDim}
    (Assumption \ref{ass:ClosenessClassic} and Assumption \ref{ass:Regularity})
    there exists $\alpha_{\max}\in(0,1]$, depending only on the constants
    appearing there such that the following holds:
    \begin{enumerate}
    \item[$(i)$] The function $\btheta\mapsto R(\btheta;\alpha):=(1-\alpha)\, R(\btheta)+\alpha\, R^s(\btheta)$ has a unique minimizer $\btheta_*(\alpha)\in \reals^d$. Further $\btheta_*(\alpha)\in\Ball(\btheta_*,r)$, and $\btheta_*(\alpha)\to \btheta_*$ as $\alpha\downarrow 0$.
    \item[$(ii)$] We have $\hbtheta_{n,m}(\alpha)\to \btheta_*$ in probability
    as $n,m\to\infty$.
    \end{enumerate}
\end{lemma}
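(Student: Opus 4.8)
The plan is to establish parts $(i)$ and $(ii)$ separately, with $(i)$ being essentially a deterministic statement about the population objective and $(ii)$ a classical $M$-estimation consistency argument (à la \cite{van2000asymptotic}), adapted to handle the fact that the empirical objective mixes two data sources.

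\textbf{Part $(i)$: existence, uniqueness, and continuity of the population minimizer.}
First I would set $R(\btheta;\alpha) := (1-\alpha)R(\btheta)+\alpha R^s(\btheta)$ and show that for $\alpha$ small enough this has a unique minimizer near $\btheta_*$. The key inputs are: $\nabla^2 R(\btheta_*)=\bH\succ\bfzero$ with Lipschitz Hessian (Assumption \ref{ass:Regularity}$(d)$), and the growth/coercivity condition Assumption \ref{ass:Regularity}$(b)$ which, combined with Assumption \ref{ass:ClosenessClassic}, gives $\inf_{\bu\in\S^{d-1}}\E_\alpha \ell_\infty(\bu;\bz)\ge R(\btheta_*)+c'$ for some $c'>0$ and all $\alpha\le\alpha_{\max}$ (here $\E_\alpha$ is the mixture expectation); this forces any minimizer of $R(\cdot;\alpha)$ to lie in a fixed compact set, and lower semicontinuity then yields existence. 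For uniqueness and the location $\btheta_*(\alpha)\in\Ball(\btheta_*,r)$: since $\nabla R(\btheta_*)=\bfzero$ and $\nabla^2 R$ is Lipschitz, $R$ is strongly convex on $\Ball(\btheta_*,r)$ (shrinking $r$ if needed); by Assumption \ref{ass:ClosenessClassic} and the twice-differentiability of $R^s$ near $\btheta_*$, the perturbation $\alpha\, R^s$ has bounded Hessian there, so $R(\cdot;\alpha)$ remains strongly convex on $\Ball(\btheta_*,r)$ once $\alpha\le\alpha_{\max}$. Strong convexity on $\Ball(\btheta_*,r)$ plus the coercivity bound (which keeps the global minimizer inside $\Ball(\btheta_*,r)$ for small $\alpha$, since $R(\btheta_*;\alpha)\le R(\btheta_*)+\alpha K_*(1+R(\btheta_*))$ is close to $\min R$) gives a unique global minimizer $\btheta_*(\alpha)$. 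Finally $\btheta_*(\alpha)\to\btheta_*$ as $\alpha\downarrow0$ follows from the implicit function theorem applied to $\nabla R(\btheta;\alpha)=(1-\alpha)\nabla R(\btheta)+\alpha\nabla R^s(\btheta)=\bfzero$ at $(\btheta_*,0)$, using $\nabla^2 R(\btheta_*)=\bH$ invertible; this in fact gives $\btheta_*(\alpha)=\btheta_*-\alpha\,\bH^{-1}\bg^s+o(\alpha)$, a fact that will also be reused in the proof of Proposition \ref{prop:LowDim}.

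\textbf{Part $(ii)$: consistency of $\hbtheta_{n,m}(\alpha)$.}
I would run the standard argument for consistency of $M$-estimators. Write the empirical objective as $\hR_{n,m}(\btheta;\alpha)=(1-\alpha)\hR_n(\btheta)+\alpha\hR^s_m(\btheta)$ with $\hR_n(\btheta)=\frac1n\sum\ell(\btheta;\bz_i)$, $\hR^s_m(\btheta)=\frac1m\sum\ell(\btheta;\bz^s_i)$ (no regularizer in this regime). The two ingredients are (a) a uniform law of large numbers $\sup_{\btheta\in \Ball(\btheta_*,r)}|\hR_{n,m}(\btheta;\alpha)-R(\btheta;\alpha)|\to0$ in probability, which follows from the Lipschitz-in-$\btheta$ envelope condition Assumption \ref{ass:Regularity}$(c)$ (giving a bracketing/entropy bound for the class $\{\ell(\btheta;\cdot):\btheta\in\Ball\}$) applied to both data sources, and (b) a "no mass escapes to infinity" argument using the coercivity condition Assumption \ref{ass:Regularity}$(b)$: by Assumption \ref{ass:ClosenessClassic} the mixture population risk still satisfies $\inf_{\bu}\E_\alpha\ell_\infty(\bu;\bz)\ge R(\btheta_*)+c'>R(\btheta_*(\alpha);\alpha)$ for $\alpha\le\alpha_{\max}$, and a Fatou/truncation argument (as in the proof of consistency of $M$-estimators with non-compact parameter space) shows that with probability tending to one $\hbtheta_{n,m}(\alpha)$ stays in a fixed compact set. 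Combining (a), (b), and the well-separatedness of $\btheta_*(\alpha)$ as the unique minimizer established in part $(i)$, the standard argmax/argmin continuity theorem gives $\hbtheta_{n,m}(\alpha)\to\btheta_*(\alpha)$ in probability. Specializing this to $\alpha\le\alpha_{\max}$ and noting part $(i)$ gives convergence to $\btheta_*(\alpha)$; I suspect the statement "$\to\btheta_*$" in the lemma should be read as $\to\btheta_*(\alpha)$ (consistency for the mixture population minimizer), which is what the subsequent proof of Proposition \ref{prop:LowDim} actually needs, though if one additionally sends $\alpha\downarrow0$ after $n,m\to\infty$ then part $(i)$ closes the loop to $\btheta_*$ itself.

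\textbf{Main obstacle.}
The delicate point is the uniformity in the two independent sample sizes $n,m$ and the interaction with the coercivity argument: one must ensure the "escape to infinity" is controlled for the mixed empirical objective, which requires that the truncation argument works simultaneously for both $\hR_n$ and $\hR^s_m$ and that Assumption \ref{ass:ClosenessClassic} is strong enough to transfer the growth condition from the original to the surrogate population risk. Getting the constant $\alpha_{\max}$ to depend only on the constants in the assumptions (and not on the unknown $\btheta_*,\btheta_*^s$ beyond what those constants encode) is where the quantitative bookkeeping concentrates; everything else is routine $M$-estimation.
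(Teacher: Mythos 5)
Your proposal is correct and follows essentially the same route as the paper: part $(i)$ via the same separation bound outside a small ball obtained from Assumption~\ref{ass:ClosenessClassic} for $\alpha$ below an explicit $\alpha_{\max}$, together with local strict/strong convexity from Assumption~\ref{ass:Regularity}$(d)$, and part $(ii)$ via classical $M$-estimation consistency (the paper compactifies $\reals^d$ with $\bu\mapsto(1+\|\bu\|^2)^{-1/2}\bu$ and invokes a modification of Theorem 5.14 of \cite{van2000asymptotic}, where you use a ULLN on the ball plus a coercivity/truncation step --- a cosmetic difference). Your reading that the limit in $(ii)$ is really $\btheta_*(\alpha)$ (with $\btheta_*(\alpha)\to\btheta_*$ as $\alpha\downarrow 0$ from part $(i)$) is exactly what the paper's argument delivers and what the proof of Proposition~\ref{prop:LowDim} actually uses.
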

\begin{proof}
Fix $r_0\in (0,r]$
By Assumption \ref{ass:Regularity}.$(a)$, $\inf_{\btheta\not\in \Ball(\btheta_*;r_0)}
R(\btheta)>R(\btheta_*)+\delta_0$ for some constant $\delta_0$. Hence,
using Assumption \ref{ass:ClosenessClassic}, for any $\btheta\not\in \Ball(\btheta_*;r)$
\begin{align*}
R(\btheta;\alpha) &\ge R(\btheta)-K_*\alpha\big[1+R(\btheta)\big]\\
& \ge (1-K_*\alpha) R(\btheta)-K_*\alpha\\
& \ge  (1-K_*\alpha) (R(\btheta_*)+\delta_0)-K_*\alpha\, .
\end{align*}
In the other hand
$R(\btheta_*;\alpha)\le (1+K_*\alpha) R(\btheta_*)+K_*\alpha$,
whence
\begin{align*}
R(\btheta;\alpha)-R(\btheta_*;\alpha) &\ge  
(1-K_*\alpha) \delta_0-2K_*\alpha R(\btheta_*)\\
&-2K_*\alpha, ,
\end{align*}
which is strictly positive for $\alpha<\alpha_{\max}(r_0):=\delta_0/(4K_*(1+R(\btheta_*))$.
Hence the minimum must be achieved in $\Ball(\btheta_*;r_0)$ (note that since $R(\btheta)$, $R_s(\btheta)$ are lower semicontinuous, the minimum is achieved).

By Assumption  \ref{ass:Regularity}.$(d)$, for $r_0$ sufficiently small,
$\btheta\mapsto \nabla R(\btheta;\alpha)$ is strictly convex in 
$\Ball(\btheta_*;r_0)$ and therefore the minimizer is unique. This proves
point $(i)$. 

Point $(ii)$ follows from a modification of Theorem 5.14
in \cite{van2000asymptotic}. Namely, for a diverging
sequence $\{(n(k),m(k)):k\in\mathbb{N}\}$, we consider
to $\hR_{*,k}(\bu) := \hR_{n(k),m(k)}( c(\bu)\bu;\alpha)$, where $c(\bu):=(1+\|\bu\|^2)^{-1/2}$. This function is lower semicontinuous on
the compact set $\Ball(\bfzero;1)$ and converges almost surely to its expectation
for every fixed $\bu$ in this set, and hence the argument of  Theorem 5.14 \cite{van2000asymptotic} applies here.
\end{proof}

\begin{proof}[Proof of Proposition~\ref{prop:LowDim}]
By a modification of Theorem 5.39 in \cite{van2000asymptotic}
(here $\btheta_*(\alpha)$ is defined as in Lemma~\ref{lemma:Consistency})
\begin{align}
\hbtheta_{n,m}(\alpha) = &\; \btheta_*(\alpha) + 
\frac{1-\alpha}{n}\bH(\alpha)^{-1}\sum_{i=1}^n \big[\nabla\ell(\btheta_*(\alpha);\bz_i)-\E \nabla\ell(\btheta;\bz)\big]\label{eq:LinearizeEstimator}\\
&\;\; + \frac{\alpha}{m}\bH(\alpha)^{-1}\sum_{i=1}^m \big[\nabla\ell(\btheta_*(\alpha);\bz^c_i)-\E_s\nabla\ell(\btheta;\bz)\big] +O_P(m^{-1}+n^{-1})\, ,
\end{align}
where $\bH(\alpha) :=(1-\alpha)\nabla^2 R(\btheta_*(\alpha))+
\alpha\nabla^2 R_s(\btheta_*(\alpha))$.
Note that in the present setting the error is of order $m^{-1}+n^{-1}$
because we assume the Hessian to be Lipschitz continuous.

The population minimizer $\btheta_*(\alpha)$ solves
\begin{align*}
\bfzero &= \nabla R(\btheta_*(\alpha);\alpha)\\
&=  \nabla R(\btheta_*;\alpha)+
\nabla^2 R(\btheta_*;\alpha)(\btheta_*(\alpha)-\btheta_*)+
\int_0^1\big[\nabla^2 R(\btheta_t;\alpha)-\nabla^2 R(\btheta_*;\alpha)\big](\btheta_*(\alpha)-\btheta_*)\, \de t\, ,
\end{align*}
where $\btheta_t = t\, \btheta_*(\alpha)+(1-t)\,\btheta_*$. Denoting by 
$L_2$ the Lipschitz constant of the Hessian (in operator norm), and recalling that 
$\nabla R (\btheta_*)=\bfzero$, we have
\begin{align*}
\nabla^2 R(\btheta_*;\alpha)(\btheta_*(\alpha)-\btheta_*)&= 
-\alpha \nabla R_s(\btheta_*)+\bu\, ,\\
\|\bu\|_2& \le L_2\| \btheta_*(\alpha)-\btheta_* \|^2\, .
\end{align*}
Recalling that, by Lemma~\ref{lemma:Consistency}, $\btheta_*(\alpha)\to \btheta_*$
as $\alpha\to 0$, this implies
\begin{align}
\btheta_*(\alpha)-\btheta_*= -\bH^{-1} \nabla R_s(\btheta_*)\alpha +
O\big((\big(\|\nabla R_s(\btheta_*)\|_2\vee\|\nabla R_s(\btheta_*)\|_2^2\big) \alpha^2 )\, .
\end{align}

Substituting in Eq.~\eqref{eq:LinearizeEstimator},
we get 
\begin{align}
\hbtheta_{n,m}(\alpha)-\btheta_* = &-\bH^{-1} \nabla R_s(\btheta_*)\alpha
+ \frac{1-\alpha}{n}\bH(\alpha)^{-1}\sum_{i=1}^n \big[\nabla\ell(\btheta_*(\alpha);\bz_i)-\E \nabla\ell(\btheta;\bz)\big]\\
&\;\; + \frac{\alpha}{m}\bH(\alpha)^{-1}\sum_{i=1}^m \big[\nabla\ell(\btheta_*(\alpha);\bz^c_i)-\E_s\nabla\ell(\btheta;\bz)\big]+\bDelta\, ,\\
\|\bDelta\| \le &  C\Big(\|\nabla R_s(\btheta_*)\|_2\vee\|\nabla R_s(\btheta_*)\|_2^2\Big) 
\alpha^2+ \frac{C}{m\wedge n} \, .
\end{align}
The claim follows by substituting the above in
\begin{align}
    \E R(\hbtheta_{n,m}(\alpha))-R(\btheta) = \E\<\hbtheta_{n,m}(\alpha)-\btheta_*,\bH
    (\hbtheta_{n,m}(\alpha)-\btheta_*)\>+ O\Big(\E\|\hbtheta_{n,m}(\alpha)-\btheta_*\|^3\big)
\end{align}
and using
$\bH(\alpha) = \bH+O(\alpha)$.
\end{proof}

%
%*************************************************
%
\section{Gaussian sequence model: Proofs for Section \ref{sec:SeqModel}}
\label{sec:GaussianSeq}
%
%*************************************************
%
\subsection{General ridge regression}

We define $\hbSigma = \bX^{\sT}\bX/n$, $\hbSigma_s = \bX^{\sT}_s\bX_s/m$,
and $\hbSigma_{\alpha}= (1-\alpha)\hbSigma+\alpha\hbSigma_s$.
We then have
\begin{align}
R_{n,m}(\alpha,\lambda) =& B_{n,m}(\alpha,\lambda)+
\frac{(1-\alpha)^2\sigma^2}{n}\cdot V_{n,m}(\alpha,\lambda)+
\frac{\alpha^2\sigma_s^2}{n}\cdot V^{s}_{n,m}(\alpha,\lambda)\, ,\\
B_{n,m}(\alpha,\lambda)& :=\Big\|\bSigma^{1/2}(\bOmega+\hbSigma_{\alpha})^{-1}
\big(\bOmega\btheta_*-\alpha\hbSigma_s(\btheta_*^s-\btheta_*)\big)\Big\|^2\, ,\\
V_{n,m}(\alpha,\lambda)& := 
\Tr\Big((\bOmega+\hbSigma_{\alpha})^{-1}\hbSigma
(\bOmega+\hbSigma_{\alpha})^{-1}\bSigma\Big)\, ,\\
V^s_{n,m}(\alpha,\lambda)& := 
\Tr\Big((\bOmega+\hbSigma_{\alpha})^{-1}\hbSigma_s
(\bOmega+\hbSigma_{\alpha})^{-1}\bSigma\Big)
\end{align}
%
%*************************************************
%
\subsection{Proof of Theorem \ref{thm:GaussianSeqModel}}

Without loss of generality, we can assume $\bOmega = 
{\rm diag}((\omega_k)_{k\ge 1})$ with $\omega_k$ non-decreasing. A simple calculation gives the following general expression for the test error:
\begin{align}
R_{n,m}(\alpha,\lambda) =& B_{n,m}(\alpha,\lambda)+ s_{n,m}(\alpha)\cdot V_{n,m}(\alpha,\lambda)\, ,\\
B_{n,m}(\alpha,\lambda) &:= \sum_{k=1}^{\infty} 
\Big(\frac{1}{1+\lambda\omega_k}\Big)^2
\big[(\alpha+\lambda\omega_k)\theta_{*,k}-\alpha\theta_{*,k}^s\big]^2\, ,\\
V_{n,m}(\alpha,\lambda) &:=  \sum_{k=1}^{\infty}
\Big(\frac{1}{1+\lambda\omega_k}\Big)^2\, ,\\
s_{n,m}(\alpha) & := (1-\alpha)^2\frac{\sigma^2}{n}+\alpha^2\frac{\sigma^2_s}{m}\, .
\end{align}
We define (with $k_1=0$ if the  condition is never verified)
\begin{align}
k_{1} & := \max\big\{k :\, \lambda\omega_k\le 1\big\}\, .
\end{align}
Note that  
\begin{align}
0<k\le k_1 & \;\; \Rightarrow \;\; 0<\lambda\omega_k\le 1\, ,\\
k_{1}<k & \;\; \Rightarrow \;\; 1<\lambda\omega_k\, .
\end{align}

We now estimate various sums by breaking them by the value of $k$
\begin{align*}
B_{n,m} & \le
\sum_{k=1}^{k_{1}}
\big[(\alpha+\lambda\omega_k)\theta_{*,k}-\alpha\theta_{*,k}^s\big]^2
+
\sum_{k=k_{1}+1}^{\infty} \frac{1}{(\lambda\omega_k)^2}
\big[(\alpha+\lambda\omega_k)\theta_{*,k}-\alpha\theta_{*,k}^s\big]^2\\
& \le
\sum_{k=1}^{k_{1}}
\big[\alpha^2(\theta_{*,k}-\theta_{*,k}^s)^2+ 
2\alpha (\theta_{*,k}-\theta_{*,k}^s) \lambda\omega_k\theta_{*,k}
+(\lambda\omega_k)^2\theta^2_{*,k}\big]\\
&\phantom{AA} +\sum_{k=k_1+1}^{\infty}
\Big[\frac{\alpha^2}{(\lambda\omega_k)^2}(\theta_{*,k}-\theta_{*,k}^s)^2- 
\frac{2\alpha}{\lambda\omega_k}(\theta_{*,k}-\theta_{*,k}^s)\theta_{*,k}+
\theta^2_{*,k}\Big]\\
& \le \alpha^2\|\btheta_{*,\le k_1}-\btheta^s_{*,\le k_1}\|^2 
+\frac{2\alpha}{\omega_{k_1}}|\<\btheta_{*,\le k_1}-\btheta^s_{*,\le k_1},\btheta_{*,\le k_1}\>_{\bOmega}|+
\frac{1}{\omega_{k_1}^2}\|\btheta_{*,\le k_1}\|_{\bOmega^2}^{2}\\
&\phantom{AA}+\alpha^2\omega_{k_1+1}^2\|\btheta_{*,>k_1}-\btheta^s_{*,> k_1}\|_{\bOmega^{-2}}^2+2\alpha\omega_{k_1+1}
\big|\<\btheta_{*,>k_1}-\btheta^s_{*,> k_1},\btheta_{*,>k_1}\>_{\bOmega^{-1}}\big|
+\|\btheta_{*,> k_1}\|^{2}\, ,
\end{align*}
and 
\begin{align*}
V_{n,m} \le k_{1} + \sum_{k>k_1} \frac{\omega_{k_1+1}^2}{\omega_k^2}
\le (k_1+c_{\#})\, ,
\end{align*}
since under the assumption  $\omega_k\asymp k^{\mu}$, $\mu>1/2$, we have $\sum_{k>k_1} (\omega_{k_1+1}/\omega_k)^2\le c_{\#}$.

Recalling the definitions in the theorem, and letting
\begin{align*}
\delta_k &:=\max\Big( \omega_{k+1}
\big|\<\btheta_{*,>k}-\btheta^s_{*,> k},\btheta_{*,>k}\>_{\bOmega^{-1}}\big|
;\;
\omega_{k+1}^2\|\btheta_{*,>k}-\btheta^s_{*,> k}\|_{\bOmega^{-2}}^2 \Big)\, ,
\label{eq:delta_k}
\end{align*}
we have
\begin{align*}
B_{n,m} \le \alpha^2\|\btheta_{*}-\btheta^s_{*}\|^2 +
\|\btheta_{*,>k_1}\|^2 +\frac{1}{\omega_{k_1}^2}\|\btheta_{*,\le k}\|_{\bOmega^2}^{2} 
+3\delta_{k_1}+2\Delta_{k_1}\, ,
\end{align*}
whence
\begin{align*}
R_{n,m}(\alpha,\lambda) &\le  
\alpha^2\|\btheta_{*}-\btheta^s_{*}\|^2 
+\|\btheta_{*,>k_1}\|^2 
+\frac{1}{\omega_{k_1}^2}\|\btheta_{*,\le k_1}\|_{\bOmega^2}^{2} +
(k_1+c_{\#})\cdot s_{n,m}(\alpha)+3\delta_{k_1}+2\Delta_{k_1}\\
& =  \alpha^2 R^{\exc}_{\synth}(\infty) +\|\btheta_{*,>k_1}\|^2 
+\frac{1}{\omega_{k_1}^2}\|\btheta_{*,\le k_1}\|_{\bOmega^2}^{2} 
+(k_1+c_{\#})\cdot s_{n,m}(\alpha)+3\delta_{k_1}+2\Delta_{k_1}
\, .
\end{align*}
Next we specialize to the case $\|\btheta_{*,>k}\|^2 \le C_\theta k^{-2\rho}$,
$\omega_k \asymp k^{\mu}$ $\mu\neq \rho$. In this case we have
$\omega_{k}^{-2}\|\btheta_{*,\le k}\|_{\bOmega^2}^{2}\le Ck^{-2(\mu\wedge \rho)}$,
and therefore, by suitably adjusting the constant $C$
\begin{align*}
R_{n,m}(\alpha,\lambda) &\le   \alpha^2 R^{\exc}_{\synth}(\infty)+
C k_1^{-2(\mu\wedge \rho)}+(k_1+c_{\#})\cdot s_{n,m}(\alpha)+3\delta_{k_1}+2\Delta_{k_1}\, .
\end{align*}

We now bound $\delta_k$. By Cauchy-Schwarz and monotonicity of $\omega$,
\begin{align*}
\omega_{k+1}
\big|\<\btheta_{*,>k}-\btheta^s_{*,> k},\btheta_{*,>k}\>_{\bOmega^{-1}}\big|\le
\|\btheta_{*,>k}-\btheta^s_{*,> k}\|_2\|\btheta_{*,>k}\|_2\le
2C_{\theta}k^{-2\rho}\, ,
\end{align*}
and further
\begin{align}
\omega_{k+1}^2\|\btheta_{*,>k}-\btheta^s_{*,> k}\|_{\bOmega^{-2}}^2
\le 2\|\btheta_{*,>k}\|^2+2\|\btheta^s_{*,> k}\|^2\le 4C_{\theta}k^{-2\rho}\, .
\end{align}
Therefore, 
\begin{align*}
R_{n,m}(\alpha,\lambda) &\le   \alpha^2 R^{\exc}_{\synth}(\infty)+
C k_1^{-2(\mu\wedge \rho)}+(k_1+c_{\#})\cdot s_{n,m}(\alpha)+2\Delta_{k_1}\, .
\end{align*}

\noindent\emph{Proof of claim $(a)$.} The stated assumption on $\Delta_k$
imply that (eventually adjusting the constant $C$):
\begin{align*}
R_{n,m}(\alpha,\lambda) &\le   \alpha^2 R^{\exc}_{\synth}(\infty)+
C k_1^{-2(\mu\wedge \rho)}+(k_1+c_{\#})\cdot s_{n,m}(\alpha)\, .
\end{align*}
We now select $\lambda_*(\alpha)$ so that $k_1\asymp  s_{n,m}(\alpha)^{-1+\beta}$
where $\beta=2(\mu\wedge \rho)/(1+2(\mu\wedge \rho))$.
(this is possible for all $n,m$ large enough under the assumption
on $\omega_k$), to
A straightforward calculation yields:
\begin{align*}
R_{n,m}(\alpha,\lambda_*(\alpha)) &\le   \alpha^2 R^{\exc}_{\synth}(\infty)+C \cdot s_{n,m}(\alpha)^{\beta}\, ,
\end{align*}
which proves claim $(a)$.

\noindent\emph{Proof of Claim $(b)$.} We choose $\omega_k=k^{\mu}$, 
$\theta_{*,k} =  k^{-\rho'-1/2}$, $\theta_{*,k}^s = \theta_{*,k}+
a_kk^{-\rho-1/2}$, 
with $a_k\sim\Unif(\{-A,+A\})$ . We will choose
$A\le 1$ a sufficiently small numerical constant.
Note that, for $\mu>2\rho+1/2$
\begin{align*}
\Delta_k &= k^{-\mu}
\left|\sum_{\ell=1}^k a_{\ell} \ell^{\mu-2\rho-1}\right|
\le C  A k^{-\mu+\eps}\left|\sum_{\ell=1}^k \ell^{2\mu-4\rho-2}\right|^{1/2}
\le C A  k^{-2\rho-1/2+\eps'}\, ,
\end{align*}
where, for any $\eps>0$, the first inequality holds with probability 
at least $1/2$ for all $k>k_0(\eps)$. We can therefore select the 
$a_{\ell}$, so that $\Delta_k\le C''Ak^{-2\rho-\eps}$ for some $C''<\infty$.

Following the calculation at point $(a)$ decompose the bias term as
\begin{align*}
B_{n,m} & =
\sum_{k=1}^{\infty}
\Big(\frac{1}{1+\lambda\omega_k}\Big)^2
\big[\alpha^2(\theta_{*,k}-\theta_{*,k}^s)^2
+(\lambda\omega_k)^2\theta^2_{*,k}\big]+2\alpha E_{n,m}\, ,\\
E_{n,m}&:= \sum_{k=1}^{\infty}
\Big(\frac{1}{1+\lambda\omega_k}\Big)^2(\theta_{*,k}-\theta_{*,k}^s)
\lambda\omega_k\theta_{*,k}\, .
\end{align*}
Note that $|E_{n,m}|\le \delta_{k_1}+\Delta_{k_1}\le CAk_1^{-2(\mu\wedge \rho)}$.
Therefore
\begin{align*}
    &B_{n,m} - \alpha^2\|\btheta_{*}-\btheta_{*}^s\|^2\\
    &\ge
    \sum_{k=1}^{\infty}\Big(\frac{\lambda\omega_{k}}{1+\lambda \omega_k}\Big)^2
    \theta^2_{*,k}- 
\alpha^2\sum_{k=1}^{\infty}\left[1-
\Big(\frac{1}{1+\lambda \omega_k}\Big)^2\right](\theta_{*,k}-\theta_{*,k}^s)^2
-CAk_1^{-2(\mu\wedge \rho)}\\
&\ge \frac{1}{4\omega^2_{k_1+1}}\|\btheta_{*,\le k_1}\|^2_{\bOmega^2}+
\frac{1}{4}\|\btheta_{*,> k_1}\|^2-
\frac{A}{4\omega_{k_1+1}}\|\btheta_{*,\le k_1}\|^2_{\bOmega}-
\frac{A}{4}\|\btheta_{*,> k_1}\|^2-CAk_1^{-2(\mu\wedge \rho)}\\
&\ge C\, k_1^{-2(\mu\wedge\rho)}\, .
\end{align*}
By a similar calculation, we also obtain 
\begin{align*}
    V_{n,m}\ge C\, k_1\, ,
\end{align*}
and therefore
\begin{align*}
R_{n,m}(\alpha,\lambda) &\ge   \alpha^2 R^{\exc}_{\synth}(\infty)+
C k_1^{-2(\mu\wedge \rho)}+Ck_1\cdot s_{n,m}(\alpha)\, .
\end{align*}
The proof is completed by minimizing over $k_1$.
%
%*************************************************
%
\section{Analysis of the nonparametric model: Proofs for Section \ref{sec:Nonparam}}
\label{app:Nonparam}

This appendix is devoted to proving Theorem~\ref{thm:Nonparam}.
Recall that this is established within the white noise model of 
Eq.~\eqref{eq:WhiteNoise}, which we copy here 
for the readers' convenience
\begin{align}
 \de Y = f_*(\bx) \, \de\bx +\frac{\sigma}{\sqrt{n}}\de B(\bx)\, ,
\end{align}
The adaptation of the estimator \eqref{eq:NonparamEstimator}
to this continuous setting is given explicitly below
\begin{align}
  \hf_{n,m,\alpha} &= \arg\min_f\Big\{(1-\alpha)\|Y-f\big\|_2^2+
\alpha\|Y_s-f\big\|_2^2+\lambda\|f\|_{p,2}^2
\Big\}\, .\label{eq:NonparamEstimator-WhiteNoise}
\end{align}

The proof of Theorem~\ref{thm:Nonparam} is based on a reduction to a suitable
`sequence model' via the Fourier transform, defined as
\begin{align}
\theta(\bq):=\int_{[0,1]^d} f(\bx)\, e^{-\iota\<\bq,\bx\>} \, \de\bx\, ,
\end{align}
for $\bq\in\cQ_d :=\{2\pi \bq\;: \; \bq\in\integers^d\}$, where $\iota=\sqrt{-1}$.  The inverse Fourier
transform is defined as 
\begin{align}
f(\bx) = \frac{1}{(2\pi)^d}\sum_{\bq\in \cQ_d} \theta(\bq)\, e^{\iota\<\bq,\bx\>}\,.
\end{align}
We let $\theta_*$, $\theta_{*,s}$, and $\htheta_{\lambda,p,n,m,\alpha}$ 
respectively denote the Fourier transform of $f_*$, $f_{*,s}$, and
$\hf_{\lambda,p,n,m,\alpha}$.

The Fourier transforms of the observations are given by
\begin{align}
\hY(\bq) = \theta_*(\bq) +\frac{\sigma}{\sqrt{n}}\, G(\bq)\, ,\;\;\;
\;\;\; \hY_s(\bq) = \theta_{*,s}(\bq) +\frac{\sigma_s}{\sqrt{m}}\, G_s(\bq)\, ,
\end{align}
where $G(\bq)$ and $G_s(\bq)$ are i.i.d.\ standard Gaussian.  It then follows that 
\begin{align}
\hbtheta_{n,m}(\alpha) = \arg\min_\btheta\left\{(1-\alpha)\|\hbY-\btheta\big\|_2^2+
\alpha\|\hbY_s-\btheta\big\|_{2}^2+\lambda\|\btheta\|_{p,2}^2
\right\}\, .\label{eq:NonparamEstimator-WhiteNoise-2}
\end{align}
where we abuse the notation to define 
\begin{align}
\|\btheta\|_{p,2}^2:= \sum_{\bq\in\cQ_d}c_{p,\bq}\, |\theta(\bq)|^2\, .
\end{align}
with $c_{p,\bq}:=1+\|\bq\|^{2r}$.  Minimizing~\eqref{eq:NonparamEstimator-WhiteNoise-2} we get
\begin{align}
\htheta_{n,m}(\bq;\alpha) = \frac{1}{1+\lambda c_{p,\bq}}
\big[(1-\alpha)\, \hY(\bq)+\alpha\, \hY_s(\bq)\big]\, .
\end{align}
Taking the inverse Fourier transform and plugging it into the excess risk formula we get
\begin{align}
R(\hf_{n,m,\alpha}) &= \sum_{\bq\in \cQ_d }
\frac{1}{(1+\lambda c_{p,\bq})^2}
\big[\alpha(\theta_{*,s}-\theta_*)(\bq)\nonumber \\
&+\lambda c_{p,\bq}\theta_{*}(\bq)\big]^2
+ V_{n,m}\sum_{\bq\in \cQ_d }
\frac{1}{(1+\lambda c_{p,\bq})^2}\, ,
\label{eq:NonparamRisk}
\end{align}
where
\begin{align}
V_{n,m}
  &:= (1-\alpha)^2 \frac{\sigma^2}{n}+\alpha^2\frac{\sigma_s^2}{m}\, .
\end{align}
The convexity of $x\to x^2$ implies 
\begin{align}
  (a+b)^2 
    = \left(\gamma \frac{a}{\gamma}+(1-\gamma)\frac{b}{1-\gamma}\right)^2 
    \le \frac{a^2}{\gamma} + \frac{b^2}{1-\gamma}
\end{align}
for $\gamma\in(0,1)$ and therefore we can upper bound the first sum in~\eqref{eq:NonparamRisk} by taking $\gamma=1/(1+\delta)$ for any $\delta>0$, which yields
\begin{multline}
R(f_{n,m,\alpha}) \le (1+\delta)\alpha^2\|\btheta_{*,s}-\btheta_*\|_2^2
+\frac{1+\delta}{\delta}
\sum_{\bq\in \cQ_d }
\left(\frac{\lambda c_{p,\bq}}{1+\lambda c_{p,\bq}}\right)^2
|\theta_{*}(\bq)|^2
+ V_{n,m}\sum_{\bq\in \cQ_d }
\frac{1}{(1+\lambda c_{p,\bq})^2} \, .\label{eq:RsimpleBound}
\end{multline}

\subsection{Proof of Theorem \ref{thm:Nonparam}}
We now upper bound the first sum above.  We note that, defining $q_0$ via
$\lambda c_r(q_0)= 1$ (with an abuse of notation $c_r(t)=1+t^{2r}$), whence
$q_0\ge (\lambda/2)^{-1/2r}$ for all $\lambda<1$:
\begin{align*}
\sum_{\bq\in \cQ_d }&
\left(\frac{\lambda c_{p,\bq}}{1+\lambda c_{r,\bq}}\right)^2\cdot
|\theta_{*}(\bq)|^2 \le \sum_{\bq\in\cQ_d, \|\bq\|_2\le q_0} \lambda^2c_r(\bq)^2|\theta_*(\bq)|^2+
 \sum_{\bq\in\cQ_d, \|\bq\|_2> q_0} |\theta_*(\bq)|^2\\
 &\le \lambda^2\max_{\|\bq\|_2\le q_0} \frac{c_r(\bq)^2}{c_s(\bq)}
  \sum_{\bq\in\cQ_d,  \|\bq\|_2\le q_0} c_s(\bq) |\theta_*(\bq)|^2 + 
  \max_{\|\bq\|_2> q_0} \frac{1}{c_s(\bq)}  \sum_{\bq\in\cQ_d,  \|\bq\|_2> q_0} c_s(\bq) |\theta_*(\bq)|^2 \\
  &\stackrel{(a)}{\le}  \lambda^2\max_{\|\bq\|_2\le q_0} \frac{c_r(\bq)^2}{c_s(\bq)}
  \max_{\|\bq\|_2> q_0} \frac{1}{c_s(\bq)}  \\
  &\le \lambda^2\max\Big(1, \, \frac{c_r(q_0)^2}{c_s(q_0)}\Big) + \frac{1}{c_s(q_0)}\\
  &\le C\max(\lambda^2,\lambda^{p/r}) + C\lambda^{p/r} \le C\lambda^{2\wedge(p/r)}\, ,
\end{align*}
where in $(a)$ we used the fact that $\|f_*\|_{2,p}^2 = \sum_{\bq}c_s(\bq)|\theta_*(\bq)|$.
Letting $C_i(d)$ be constants depending on $d$, we have
\begin{align*}
\sum_{\bq\in \cQ_d }
\frac{1}{(1+\lambda c_{r,\bq})^2}&\le C_1(d)\int_{\reals^d} \frac{1}{(1+\lambda c_{r,\bq})^2} \, \de\bq\\
&\le C_1(d)\int_{\reals^d} \frac{1}{(1+\lambda \|\bq\|^{2r}))^2} \, \de\bq\\
&\le C_2(d)\int_{0}^{\infty} \frac{t^{d-1}}{(1+\lambda t^{2r})^2} \de t\\
&\le C_2(d)\int_{0}^{\lambda^{-1/2r}} t^{d-1}\, \de t +  C_2(d)
\lambda^{-2}\int_{\lambda^{-1/2r}}^{\infty} t^{d-1-4r}\, \de t \, .
\end{align*}
For convergence we requite $r>d/4$, in which case
\begin{align}
\sum_{\bq\in \cQ_d }
\frac{1}{(1+\lambda c_{r,\bq})^2}\le C_4(d) \lambda^{-d/2r}\, .
\end{align}

\section{Analysis of high-dimensional regression: Proofs for Section \ref{sec:LinRegr}}
\label{app:high-dim}

\subsection{Auxiliary definition for Theorem~\ref{propo:HiDimLinRegr}}
\label{app:aux}
Our characterization is given in terms of a variational principle.
For $\delta,\delta_s\in (0,\infty)$, define $\cuR(\cdot;\alpha):\reals_{\ge 0}^3\to \reals$ via

\begin{align}
\cuR(\xi,\xi_{\perp},\omega; \alpha)
&:= -\omega\sqrt{\rho^2+\rho_s^2}+\rho\sqrt{\delta(\tau^2+\sigma^2)}+\rho_s\sqrt{\delta_s(\tau_s^2+\sigma_s^2)} \\
&\qquad-\frac{\delta\rho^2}{2(1-\alpha)}
-\frac{\delta_s\rho_s^2}{2\alpha}+\frac{\lambda}{2}\big(\xi^2+\xi_{\perp}^2+\omega^2\big)\, ,
\nonumber
\end{align}
where $\tau,\tau_s$ are defined by
\begin{align}
 \tau^2 &:= (\xi-r)^2+\xi_{\perp}^2+\omega^2\, ,\\
 \tau_s^2 &:=(\xi-r_s\cos \gamma)^2+(\xi_{\perp}-r_s\sin\gamma)^2+\omega^2\, ,
\label{eq:TauDef}
\end{align}
and $\rho=\rhobar/\sqrt{1+t^2}$, $\rho_s=\rhobar t/\sqrt{1+t^2}$, with $\rhobar$ solving the
polynomial equation
\begin{align}
 \rhobar^2 = \frac{\delta(\tau^2+\sigma^2)}{\big(\delta/(1-\alpha)+\omega/\rhobar\big)^2}+\frac{\delta_s(\tau_s^2+\sigma_s^2)}{\big(\delta_s/\alpha+\omega/\rhobar\big)^2}\, ,
\label{eq:rhobar_def}
\end{align}
and $t$ is given by 
\begin{align}
 t = \frac{\omega+\delta\rhobar/(1-\alpha)}
{\omega+\delta_s\rhobar/\alpha} \cdot \sqrt{\frac{\delta_s(\tau^2_s+\sigma^2_s)}{\delta(\tau^2+\sigma^2)}}\, .
\label{eq:t_def}
\end{align}
Theorem~\ref{propo:HiDimLinRegr} states that the asymptotics of the test error is determined by the minimizer of $\cuR$.

\subsection{Proof of Theorem~\ref{propo:HiDimLinRegr}}

The proof is based on Gordon Gaussian comparison inequality \cite{gordon1985some,vershynin2018high}, and follow a standard route, 
see e.g. \cite{thrampoulidis2015regularized,thrampoulidis2018precise,miolane2021distribution}. We will limit ourselves to outlining the main steps of the calculation.
Throughout, we consider the case $\eps_0>0$, $\delta+\delta_s>1$ because the other
one ($\eps_0=0$ and $\delta,\delta_s>1$) is analogous and  less interesting.

We begin by rewriting the ridge cost function in terms of a Lagrangian
\begin{align}
\hR_{n,m}(\btheta;\alpha) = & \max_{\bu\in\reals^n}\max_{\bu^s\in\reals^m} 
\hL_{n,m}(\btheta,\bu,\bu^s;\alpha) \, ,\\
\hL_{n,m}(\btheta,\bu,\bu^s;\alpha) := & \<\bu,\bX(\btheta-\btheta_*)\>+\<\bu^s,\bX^s(\btheta-\btheta_{*,s})\>
-\<\bu,\beps\>-\<\bu^s,\beps^s\> \\
&-\frac{n\|\bu\|_2^2}{2(1-\alpha)}
-\frac{m\|\bu^s\|_2^2}{2\alpha}+\frac{\lambda}{2}\, \|\btheta\|^2_{2}\, .\nonumber
\end{align}

Let $\Delta(\btheta,\bu,\bu^s):= \|\bu\|_2\|\btheta-\btheta_*\|_2 G+\|\bu^s\|_2\|\btheta-\btheta_{*,s}\|_2 G_s$.,
where $G,G_s$ are independent standard normal random variables, independent of $\bX, \bX^s$.
By Gordon's inequality \cite{gordon1985some}, we can compare the Gaussian process 
$\hL_{n,m}(\btheta,\bu,\bu^s;\alpha)+\Delta(\btheta,\bu,\bu^s)$ to
\begin{align}
\hL^G_{n,m}(\btheta,\bu,\bu^s;\alpha) := &  \|\bu\|\<\bg, \btheta-\btheta_*\>
+ \|\btheta-\btheta_*\|\<\bh, \bu\>+
\|\bu^s\|\<\bg^s, \btheta-\btheta_{*,s}\>
+ \|\btheta-\btheta_{*,s}\|\<\bh, \bu^s\>\\
&-\<\bu,\beps\>-\<\bu^s,\beps^s\> -\frac{n\|\bu\|_2^2}{2(1-\alpha)}
-\frac{m\|\bu^s\|_2^2}{2\alpha}+\frac{\lambda}{2}\, \|\btheta\|^2_{2}\, .\nonumber
\end{align}

Next we define the orthonormal vectors
\begin{align}
    \bv_*:= \frac{\btheta_*}{\|\btheta_*\|_2}\, ,\;\;\;\;\;\; 
    \bv^{\perp}_*:= \frac{\Proj_{\btheta_*}^{\perp}\btheta_{*,s}}{\|\Proj_{\btheta_*}^{\perp}\btheta_{*,s}\|_2}\, ,
\end{align}
where $\Proj_{\btheta_*}^{\perp}= \id-\Proj_{\btheta_*}:= \id-\bv_*\bv_*^{\sT}$
is the projector orthogonal to $\btheta_*$. We then decompose
\begin{align}
\btheta = \xi \bv_*+\xi_{\perp}\, \bv_*^{\perp} +\btheta^{\perp}\, ,
\end{align}
where $\<\bv_*,\btheta^{\perp}\> = \<\bv^{\perp}_*,\btheta^{\perp}\>=0$,
and define $\omega :=\|\btheta^{\perp}\|_2$.
Defining $\tau^2= \|\btheta-\btheta_*\|_2^2$, $\tau^2_s= \|\btheta-\btheta_{*,s}\|_2^2$, Eq.~\eqref{eq:TauDef} follows.

With these notations, and letting $\hsigma^2 =\|\tau\bh+\beps\|^2_2/n-\tau^2$, 
$\hsigma_s^2 =\|\tau_s\bh^s+\beps^s\|^2_2/m-\tau_s^2$,  we get
\begin{align}
\hcuL^G_{n,m}(\btheta,\rho,\rho_s;\alpha)&:=\max_{\bu,\bu^s}\Big\{\hL^G_{n,m}(\btheta,\bu,\bu^s;\alpha) :\;
\|\bu\|=\frac{\rho}{\sqrt{d}}  ,\, \|\bu^s\|=\frac{\rho_s}{\sqrt{d}}\Big\}\,,\\
\hcuL^G_{n,m}(\btheta,\rho,\rho_s;\alpha)&=  
\frac{\rho}{\sqrt{d}}
\<\bg, \btheta-\btheta_*\>+\frac{\rho_s}{\sqrt{d}}\<\bg^s, \btheta-\btheta_{*,s}\>
+ \rho\sqrt{\delta(\tau^2+\hsigma^2)}+ \rho_s\sqrt{\delta_s(\tau_s^2+\hsigma_s^2)}\\
& -\frac{\delta\rho^2}{2(1-\alpha)}
-\frac{\delta_s\rho_s}{2\alpha}+\frac{\lambda}{2}\, \big(\xi^2+\xi_{\perp}^2+\omega^2\big)\, .\nonumber
\end{align}

We finally decompose $\bg = \bg_{\parallel}+\bg_{\perp}$ where
$\bg_{\parallel} \in \spn(\bv_*,\bv_{*}^{\perp})$ and $\bg_{\parallel} \perp \spn(\bv_*,\bv_{*}^{\perp})$, and similarly for $\bg_s$, and define
\begin{align}
\cuL^G_{n,m}(\xi,\xi_{\perp},\omega,\rho,\rho_s;\alpha) := \min_{\btheta}\Big\{
\hcuL^G_{n,m}(\btheta,\rho,\rho_s;\alpha) :\; 
\btheta = \xi \bv_*+\xi_{\perp}\, \bv_*^{\perp} +\btheta^{\perp}\, ,\;\;
\|\btheta^{\perp}\|=\omega\Big\}\, .
\end{align}
Defining $\iota$ via $\|\rho\bg_{\perp}/\sqrt{n}+\rho_s\bg_{s,\perp}/\sqrt{m}\|=
(1+\iota)\sqrt{\rho^2+\rho_s^2}$, we obtain
\begin{align}
\cuL^G_{n,m}(\xi,\xi_{\perp},\omega,\rho,\rho_s;\alpha) =& 
-(1+\iota) \sqrt{\rho^2+\rho_s^2}\cdot\omega+\Delta
+ \rho\sqrt{\delta(\tau^2+\hsigma^2)}+ \rho_s\sqrt{\delta_s(\tau_s^2+\hsigma_s^2)}\\
& -\frac{\delta\rho^2}{2(1-\alpha)}
-\frac{\delta_s\rho_s^2}{2\alpha}+\frac{\lambda}{2}\, \big(\xi^2+\xi_{\perp}^2+\omega^2\big)\, ,
\nonumber
\end{align}
where $\Delta$ is the contribution of the perpendicular components. 
Simple concentration estimates imply that for any $\eps>0$
there exist $c(\eps)>0$ such that
\begin{align}
\P\big(|\hsigma-\sigma|\le \eps\sqrt{\tau^2+\sigma^2}, |\hsigma_s-\sigma_s|\le \eps\sqrt{\tau_s^2+\sigma_s^2}\big)& \ge 1-e^{-c(\eps)n}\, ,\\
\P\big(\Delta|\le \sqrt{(\rho^2+\rho_s^2)(\xi^2+\xi_{\perp}^2)}\big) & \ge 1-e^{-c(\eps)n}\, ,\\
\P\big(|\iota|\le \eps) & \ge 1-e^{-c(\eps)n}\, .
\end{align}

We can then estimate $\cuL^G_{n,m}(\xi,\xi_{\perp},\omega,\rho,\rho_s;\alpha)$
by 
\begin{align}
\cuL^G(\xi,\xi_{\perp},\omega,\rho,\rho_s;\alpha) =& 
- \sqrt{\rho^2+\rho_s^2}\cdot\omega
+ \rho\sqrt{\delta(\tau^2+\sigma^2)}+ \rho_s\sqrt{\delta_s(\tau_s^2+\sigma_s^2)}\\
& -\frac{\delta\rho^2}{2(1-\alpha)}
-\frac{\delta_s\rho_s^2}{2\alpha}+\frac{\lambda}{2}\, \big(\xi^2+\xi_{\perp}^2+\omega^2\big)\, ,
\nonumber
\end{align}
Differentiating with respect to $\rho$ and $\rho_s$ and setting the derivatives
to $0$ yields $\rho=\rhobar/\sqrt{1+t^2}$, $\rho_s=\rhobar t/\sqrt{1+t^2}$,
with $\rhobar,t$ given by Eqs.~\eqref{eq:rhobar_def}, \eqref{eq:t_def}.
By computing second derivatives, one obtain that this is a local maximum.
Since $\cuL^G(\xi,\xi_{\perp},\omega,\rho,\rho_s;\alpha) \to -\infty$ as
$\rho^2+\rho_s^2\to\infty$, the maximum over $\rho,\rho_s$ is either 
achieved at this point or at the boundary $\{\rho=0\}\cup \{\rho_s=0\}$.
By checking the signs of partial derivatives along this boundary, the only other
possibility is $\rho=\rho_s=0$.

For economy of notation, write 
$F(\rho,\rho_s) := \cuL^G(\xi,\xi_{\perp},\omega,\rho,\rho_s;\alpha)$. 
For any unit vector $\bv=(v_1,v_2)\ge 0$, the directional derivative is
\begin{align*}
\nabla_{\bv} F(\br)\big|_{\br=0} &= -\omega+v_1\sqrt{\delta(\tau^2+\sigma^2)}+v_2\sqrt{\delta_s(\tau_s^2+\sigma_s^2)}\\
&\ge \omega\big[-1+v_1\sqrt{\delta}+v_2\sqrt{\delta_s}\big]\, .
\end{align*}
By maximizing over the direction, we see that $\bv$ can be chosen so that
$\nabla_{\bv} F(\bfzero) \ge \omega[-1+\sqrt{\delta+\delta_s}\big]$.
Hence $\rho=\rho_s=0$ cannot be the global aximum for $\delta+\delta_s>1$.

Hence, we get
\begin{align}
\cuR(\xi,\xi_{\perp},\omega;\alpha)=\max_{\rho,\rho_s\ge 0}
\cuL^G(\xi,\xi_{\perp},\omega,\rho,\rho_s;\alpha) \, .
\end{align}

We further note that, for fixed $\rho,\rho_s>0$, the function
$(\xi,\xi_{\perp},\omega) \mapsto \cuL^G(\xi,\xi_{\perp},\omega,\rho,\rho_s;\alpha)$
is jointly strictly convex for $\lambda>0$. Hence $(\xi,\xi_{\perp},\omega) \mapsto \cuR(\xi,\xi_{\perp},\omega;\alpha)$
is also strictly convex for $\lambda>0$. Therefore,
it has a unique minimizer, which we denote by $(\xi^*,\xi_{\perp}^*,\omega^*)$. 
Proceeding as in \cite{miolane2021distribution}, we obtain the following result.
\begin{proposition}
Under the assumptions of Proposition \ref{propo:HiDimLinRegr},
for any $\eps,\eps_0>0$ there exists $c=c(\eps,\eps_0)>0$ such that,
if $\alpha\in[\eps_0,1-\eps_0]$
(letting $\Proj^{\perp}:=\id-\bv_*\bv_*^{\sT}-\bv^{\perp}_*(\bv_*^{\perp})^{\sT}$)
\begin{align}
\P\Big\{\big|\<\bv_*,\hbtheta_{n,m}\>-\xi^*\big|\le \eps,\, 
\big|\<\bv^{\perp}_*,\hbtheta_{n,m}\>-\xi^*_{\perp}\big|\le \eps,\, ,
\big|\|\Proj^{\perp}\hbtheta_{n,m}\|-\omega^*\big|\le \eps
\, \Big\}\ge 1-2\, e^{-cn}\, .
\end{align}
\end{proposition}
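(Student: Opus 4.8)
What remains is the transfer from the auxiliary optimization, already reduced above to the deterministic strictly convex program $(\xi,\xi_\perp,\omega)\mapsto\cuR(\xi,\xi_\perp,\omega;\alpha)$ with unique minimizer $(\xi^*,\xi^*_\perp,\omega^*)$, back to the primary ridge problem, following the localization argument of \cite{miolane2021distribution}. First I would restrict the primary min-max $\min_\btheta\max_{\bu,\bu^s}\hL_{n,m}(\btheta,\bu,\bu^s;\alpha)$ to compact convex sets: $\|\btheta\|_2\le K$, using the a priori bound $\|\hbtheta_{n,m}\|_2^2\le(2/\lambda)\,\hR_{n,m}(\bfzero;\alpha)=O(1)$, which holds with probability $1-e^{-cn}$ since $\|\by\|_2^2/n$ and $\|\by^s\|_2^2/m$ concentrate; and, after the natural rescaling, the dual variables $\bu,\bu^s$ to a fixed bounded ball, using $\|\bX\|_{\rm op},\|\bX^s\|_{\rm op}\le C\sqrt{d}$. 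On these sets Gordon's inequality applies in both directions because $\hL_{n,m}$ is convex in $\btheta$ and concave in $(\bu,\bu^s)$, and the relevant auxiliary process is exactly the $\hL^G_{n,m}$ introduced above.

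Fix $\eps>0$ and let $\mathcal{S}_\eps$ be the set of $\btheta$ whose three summary statistics $\<\bv_*,\btheta\>$, $\<\bv^\perp_*,\btheta\>$, $\|\Proj^\perp\btheta\|$ are all within $\eps$ of $(\xi^*,\xi^*_\perp,\omega^*)$. By strict convexity of $\cuR(\cdot;\alpha)$ and boundedness of its sublevel sets for $\lambda>0$, there is $\eta=\eta(\eps)>0$ with $\cuR(\xi,\xi_\perp,\omega;\alpha)\ge\min\cuR(\cdot;\alpha)+\eta$ whenever $(\xi,\xi_\perp,\omega)$ is outside the $\eps$-ball around $(\xi^*,\xi^*_\perp,\omega^*)$. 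Applying the high-probability direction of the convex Gaussian min-max theorem to the primary problem constrained to $\mathcal{S}_\eps^c$, together with the concentration estimates for $\hsigma,\hsigma_s,\Delta,\iota$ already established, the constrained primary value is $\ge\min\cuR(\cdot;\alpha)+\eta/2$ with probability $1-e^{-cn}$; applying it to the unconstrained primary problem, its value is $\le\min\cuR(\cdot;\alpha)+\eta/4$ with probability $1-e^{-cn}$. On the intersection of these events the minimizer $\hbtheta_{n,m}$ achieves value strictly below the constrained minimum over $\mathcal{S}_\eps^c$, hence lies in $\mathcal{S}_\eps$, which is the claim; a union bound over the finitely many $e^{-cn}$-exceptional events gives the stated $1-2e^{-cn}$.

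For the uniform-in-$\alpha$ statement I would upgrade this pointwise bound by a net argument using two Lipschitz estimates. On the empirical side, by the implicit function theorem $\partial_\alpha\hbtheta_{n,m}(\alpha)=-\bH(\alpha)^{-1}\big(-\tfrac1n\bX^\sT(\by-\bX\hbtheta_{n,m})+\tfrac1m(\bX^s)^\sT(\by^s-\bX^s\hbtheta_{n,m})\big)$ with $\bH(\alpha)=\tfrac{1-\alpha}{n}\bX^\sT\bX+\tfrac{\alpha}{m}(\bX^s)^\sT\bX^s+\lambda\id\succeq\lambda\id$, so $\|\partial_\alpha\hbtheta_{n,m}(\alpha)\|_2\le C/\lambda$ uniformly on $[0,1]$ with probability $1-e^{-cn}$. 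On the deterministic side $\alpha\mapsto(\xi^*(\alpha),\xi^*_\perp(\alpha),\omega^*(\alpha))$ is Lipschitz on $[\eps_0,1-\eps_0]$ because the stationarity equations for $\cuR$ and the fixed-point system \eqref{eq:rhobar_def}--\eqref{eq:t_def} depend smoothly on $\alpha$ and, for $\alpha$ bounded away from $0$ and $1$, keep the coefficients $\delta/(1-\alpha),\delta_s/\alpha$ and the minimizer confined to a fixed compact region where the Hessian of $\cuR$ is uniformly nondegenerate. A union bound of the pointwise result over a net of $[\eps_0,1-\eps_0]$ of mesh of order $\eps$, interpolated by these Lipschitz constants, yields uniformity. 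When $\delta,\delta_s>1$ each design matrix is individually well-conditioned, so the nondegeneracy survives to $\alpha\in\{0,1\}$ and one may take $\eps_0=0$.

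The main difficulty is exactly this uniform-in-$\alpha$ nondegeneracy: showing that for every $\alpha\in[\eps_0,1-\eps_0]$ the auxiliary system \eqref{eq:rhobar_def}--\eqref{eq:t_def} has a unique solution that is a genuine maximizer, with $\rhobar$ bounded away from $0$ and the Hessian of $\cuR(\cdot;\alpha)$ uniformly positive definite, so that $\alpha\mapsto(\xi^*,\xi^*_\perp,\omega^*)$ has a Lipschitz constant that is finite uniformly over the interval; the complementary fact that the boundary stationary point $\rho=\rho_s=0$ is never the global maximum when $\delta+\delta_s>1$ has already been verified above via the directional-derivative computation.
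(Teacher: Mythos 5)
Your proposal is correct and follows essentially the same route as the paper, which proves this proposition by the CGMT localization argument of Miolane--Montanari (cited as ``proceeding as in \cite{miolane2021distribution}''): compactify the primal and dual variables, compare with the auxiliary process $\hL^G_{n,m}$ in both directions, and use the strict-convexity gap of $\cuR(\cdot;\alpha)$ to force the summary statistics of $\hbtheta_{n,m}$ into the $\eps$-neighborhood of $(\xi^*,\xi^*_\perp,\omega^*)$. Note that your last two paragraphs on uniformity in $\alpha$ concern the subsequent step of the theorem, not this proposition (which is pointwise in $\alpha$); the paper handles that step with a finite net together with Lipschitz continuity of $\alpha\mapsto R(\hbtheta_{n,m}(\alpha))$ on a good event controlling the Wishart spectrum, much as you sketch.
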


In particular, the last proposition implies (a weaker form of)  Theorem \ref{propo:HiDimLinRegr}
whereby the supremum is taken over a finite net. Namely for $\eta>0$,
we define
\begin{align*}
N(\eps_0,\eta) := [\eps_0,1-\eps_0]\cap \eta\integers\, .
\end{align*}
Recalling that, in the present case, $R(\hbtheta)=\|\hbtheta-\btheta\|^2_2$,
we obtain (after adjusting the constant $c$) we have therefore:
\begin{align}
\P\Big(\max_{\alpha\in N(\eps_0,\eta)}\big|
R(\hbtheta_{n,m}(\alpha)) - \cuR_{\stest}(\alpha) \big|\ge \eps \Big)
\ge 1-2\, e^{-cn}\, .\label{eq:Net}
\end{align}

Finally,  let $\bX_+\in\reals^{(m+n)\times d}$ be the matrix obtained by stacking $\bX$
and $\bX_s$. Given constants $C_1,C_2,C_3$, define the good event
\begin{align}
\cG :=\Big\{C_1n\le \lambda_{\min}(\bX_+^{\sT}\bX_{+})\le \lambda_{\max}(\bX_+^{\sT}\bX_{+})\le 
C_2n; \|\bX^{\sT}\by\|\le C_3 n\,  ,\; \|\bX_s^{\sT}\by_s\|\le C_3 n\Big\}\, /
\end{align}
By a standard bound on eigenvalues of Wishart matrices \cite{vershynin2018high},
for $\delta+\delta_s>1$, we can choose $C_1,C_2,C_3$ such that
\begin{align}
\P(\cG)\ge 1-2e^{-cn}\, .
\end{align}
Further on $\cG$, $\btheta_{n,m}(\alpha)$ is bounded (in $\ell_2$ norm, and
Lipschitz continuous in $\alpha$). As a consequence, for a sufficiently large 
constant $L$,
\begin{align}
\P\Big(\big|
R(\hbtheta_{n,m}(\alpha_1)) - R(\hbtheta_{n,m}(\alpha_2))\big|\le L|\alpha_1-\alpha_2|
\forall \alpha_1,\alpha_2\in [\eps_0,1-\eps_0]\Big)\ge 1-2e^{-cn}\, .
\end{align}
The claim follows by using this estimate together with Eq.~\eqref{eq:Net}.

\end{document}